\documentclass{article}

\usepackage[english]{babel}
\usepackage[round,comma]{natbib}
\bibliographystyle{plainnat}

\ifx\ulesred\undefined
\typeout{DEFINING COLORS}
\usepackage{color}

\definecolor{ulesred}{rgb}{0.65,0.18,0.21} 


\newcommand{\ulesred}{\color{ulesred}}

\definecolor{darkgreen}{rgb}{0,0.5,0.2}

\definecolor{verylightgray}{gray}{0.85} 
\definecolor{mediumgray}{gray}{0.5} 
\newcommand{\mediumgray}{\color{mediumgray}} 
\definecolor{darkgray}{gray}{0.4} 

\newcommand{\slidegray}{\mediumgray} 

\definecolor{lavender}{cmyk}{0,0.5,0,0}

\definecolor{darkblue}{rgb}{0,0,0.5}

\definecolor{orange}{rgb}{1,0.5,0}

\else

\fi

\ifx\black\undefined
\newcommand{\black}{\color{black}}
\fi

\ifx\blue\undefined
\newcommand{\blue}{\color{blue}}
\fi

\ifx\red\undefined
\newcommand{\red}{\color{red}}
\fi

\ifx\green\undefined
\newcommand{\green}{\color{green}}
\fi

\ifx\white\undefined
\newcommand{\white}{\color{white}}
\fi

\ifx\magenta\undefined
\newcommand{\magenta}{\color{magenta}}
\fi



\typeout{Using Ule's latex defaults!}



\usepackage{amssymb}
\usepackage{amsmath}
\allowdisplaybreaks[3] 
\usepackage{bbold} 

\usepackage{flafter} 
\PassOptionsToPackage{pdftex}{graphicx} 

\usepackage[pdftex]{graphicx}



\usepackage{fancyhdr} 
\usepackage{ifthen} 
\usepackage{layout} 
\usepackage{xspace}
\usepackage{afterpage}
\usepackage{url} 
\usepackage{textcomp} 
\usepackage{pdfpages} 

\usepackage{wasysym} 

\usepackage[weather]{ifsym} 
\def\ba#1\ea{\begin{align*}#1\end{align*}} 
\def\banum#1\eanum{\begin{align}#1\end{align}} 

\newcommand{\bi}{\begin{itemize}}
\newcommand{\ei}{\end{itemize}}
\newcommand{\be}{\begin{enumerate}}
\newcommand{\ee}{\end{enumerate}}
\newcommand{\bc}{\begin{center}}
\newcommand{\ec}{\end{center}}

\newcommand{\biq}{
\begin{list}{$\bullet$}{%
\setlength{\topsep}{0cm}
\setlength{\partopsep}{-\parskip}
\setlength{\leftmargin}{0.8cm}
 \setlength{\itemsep}{0pt}
    \setlength{\parskip}{0pt}
    \setlength{\parsep}{0pt}    
}}
\newcommand{\eiq}{\end{list}}


\makeatletter
\@ifclassloaded{beamer}{}{}
\makeatother



\usepackage{amsthm}
\ifx\lemma\undefined
\typeout{ULE: LEMMA IS UNDEF, THUS DEF IT:}
\newtheorem{theorem}{Theorem} 
\newtheorem{example}[theorem]{Example} 
\newtheorem{lemma}[theorem]{Lemma} 
\newtheorem{proposition}[theorem]{Proposition}

\newtheorem{definition}[theorem]{Definition}

%
\else
\typeout{ULE: LEMMA IS ALREADY DEFINED }
\fi






\newcommand{\ulesqed}{\hfill\smiley}

\ifx\proof\undefined
\typeout{ULE: PROOF IS UNDEF, THUS DEF IT:}
\newenvironment{proof}{\par\noindent{\em Proof. }}{\ulesqed\\[2mm]}
\else
\typeout{ULE: PROOF IS ALREADY DEFINED. }
\fi



\usepackage[noend]{algorithmic}

\algsetup{linenosize=\scriptsize\slidegray}
\algsetup{linenodelimiter=}

\newcommand{\balg}{\begin{algorithmic}[1]}
\newcommand{\ealg}{\end{algorithmic}}





\newcommand{\RR}{\mathbb{R}}
 
\newcommand{\Nat}{\mathbb{N}}


\let\R\undefined 
\newcommand{\R}{\RR}
\newcommand{\N}{\Nat}


\newcommand{\Ccal}{\mathcal{C}}
\newcommand{\Dcal}{\mathcal{D}}
\newcommand{\Ecal}{\mathcal{E}}
\newcommand{\Fcal}{\mathcal{F}}
\newcommand{\Gcal}{\mathcal{G}}
\newcommand{\Hcal}{\mathcal{H}}

\newcommand{\Lcal}{\mathcal{L}}

\newcommand{\Pcal}{\mathcal{P}}

\newcommand{\Tcal}{\mathcal{T}}
\newcommand{\Ucal}{\mathcal{U}}

\newcommand{\Xcal}{\mathcal{X}}


\DeclareMathOperator*{\argmin}{argmin} 




\DeclareMathOperator{\sign}{sign}



\DeclareMathOperator{\Rad}{Rad}








\newcommand{\eps}{\ensuremath{\varepsilon}}
\renewcommand{\epsilon}{\ensuremath{\varepsilon}}
\renewcommand{\phi}{\ensuremath{\varphi}}


  



\newcommand{\condon}{\; \big| \;}
\let\Pr\undefined 
\DeclareMathOperator{\Pr}{P}











\usepackage{amssymb}
\usepackage{pifont}







\wasyfamily
\DeclareFontShape{U}{wasy}{b}{n}{ <-10> ssub * wasy/m/n
<10> <10.95> <12> <14.4> <17.28> <20.74> <24.88>wasyb10 }{}

\DeclareFontShape{U}{wasy}{m}{n}{ <5> <6> <7> <8> <9> gen * wasy
<10> <10.95> <12> <14.4> <17.28> <20.74> <24.88> <35> <40> <50> <60> wasy10  }{}




\newcounter{ulestheorem}


\newcommand{\ulestheorem}[3]{%
\refstepcounter{ulestheorem}%
\begin{ulesblock}{%
Theorem \arabic{ulestheorem}
\label{#1}%
\ifthenelse{\equal{#2}{}}{}{(#2)}}
#3 
\end{ulesblock}%
}

\newcommand{\ulesproposition}[3]{%
\refstepcounter{ulestheorem}%
\begin{ulesblock}{%
Proposition \arabic{ulestheorem}
\label{#1}%
\ifthenelse{\equal{#2}{}}{}{(#2)}}
#3 
\end{ulesblock}%
}
\newcommand{\ulescorollary}[3]{%
\refstepcounter{ulestheorem}%
\begin{ulesblock}{%
Corollary \arabic{ulestheorem}
\label{#1}%
\ifthenelse{\equal{#2}{}}{}{(#2)}}
#3 
\end{ulesblock}%
}


\usepackage{authblk}  
\usepackage{footmisc} 

\newcommand{\ex}{\Ecal}
\newcommand{\Ex}{\mathfrak{E}}
\newcommand{\exgrad}{\Ecal_\text{grad}}
\newcommand{\exshap}{\Ecal_\text{SHAP}}
\newcommand{\exanchor}{\Ecal_\text{anchor}}
\newcommand{\Exanchor}{\mathfrak{E}_\text{anchor}}
\newcommand{\exwcf}{\Ecal_\text{wcf}} 
\newcommand{\Exwcf}{\mathfrak{E}_\text{wcf}}
\newcommand{\exscf}{\Ecal_\text{scf}} 
\newcommand{\Exscf}{\mathfrak{E}_\text{scf}}

\newcommand{\coverage}{\text{coverage}} 
\newcommand{\precision}{\text{precision}} 

\DeclareMathOperator{\E}{E}

\newcommand{\dd}{\mathrm{d}}

\newcommand{\Fe}{\Fcal_{\text{explain}}}
\newcommand{\Fp}{\Fcal_{\text{predict}}}
\definecolor{violet}{RGB}{150, 0, 150}

\usepackage[a4paper,top=2cm,bottom=2cm,left=3cm,right=3cm]{geometry}
\parindent0cm

\usepackage[hidelinks]{hyperref}

\usepackage{tcolorbox}
\tcbuselibrary{theorems, skins, breakable}

\usepackage{mdframed}

\begin{document}

\title{Informative Post-Hoc Explanations Only Exist for Simple Functions}

\author[$*$,1]{Eric Günther}
\author[$*$,1]{Balázs Szabados}
\author[1]{Robi Bhattacharjee}
\author[$\dagger$,1]{Sebastian Bordt}
\author[$\dagger$,1]{Ulrike von Luxburg}

\affil[1]{University of Tübingen and Tübingen AI Center\\\texttt{eric.guenther@uni-tuebingen.de}\\
\texttt{balazs.szabados@student.uni-tuebingen.de}\\
\texttt{robi.bhattacharjee@uni-tuebingen.de}\\
\texttt{sebastian.bordt@uni-tuebingen.de}\\
\texttt{ulrike.luxburg@uni-tuebingen.de}}

\date{\today}

\maketitle

\begingroup
\renewcommand\thefootnote{}\footnotetext{*Equal contribution, ${\dagger}$Equal advising}
\endgroup

\abstract{Many researchers have suggested that local post-hoc explanation algorithms can be used to gain insights into the behavior of complex machine learning models. 
However, theoretical guarantees about such algorithms only exist for simple decision functions, 
and it is unclear whether and under which assumptions similar results might exist for complex models. 
In this paper, we introduce a general, learning-theory-based framework for what it means for an explanation to provide information about a decision function. We call an explanation informative if it serves to reduce the %
complexity of the space of plausible decision functions. With this approach, we show that many popular explanation algorithms are not informative when applied to complex decision functions, providing a rigorous mathematical rejection of the idea that it should be possible to explain any model. 
We then derive conditions under which different explanation algorithms become informative. These are often stronger than what one might expect. 
For example, gradient explanations and counterfactual explanations are non-informative with respect to the space of differentiable functions, and SHAP and anchor explanations are not informative with respect to the space of decision trees. 
Based on these results, we discuss how explanation algorithms can be modified to become informative. 
While the proposed analysis of explanation algorithms is mathematical, we argue that it holds strong implications for the practical applicability of these algorithms, particularly for auditing, regulation, and high-risk applications of AI. 

\section{Introduction}

Modern machine learning systems significantly impact people’s lives. Today, automated systems determine whether someone receives a loan, is admitted to a graduate program, or gets invited to a job interview \citep{uk_gov_review}. In such high-stakes applications of AI, many have argued that the affected individuals have the right to an explanation \citep{selbst2018intuitive, WachterEtal17}. Indeed, such rights have already been established in legislation, for example, in Article 86 of the AI Act in the European Union.
While there are ongoing debates about how such legislation is to be interpreted 
\citep{nannini2024habemus,kaminski2025right}, one thing is clear: An explanation cannot be something that the developer of an AI system simply ``invents''. It needs to be grounded in the decision function and the data point to be explained, and ideally there should be a mechanism by which one could, at least in principle, justify, verify or possibly contest an explanation. 

In the literature on explainable machine learning, a wide variety of methods exist that aim to explain the behavior of complex decision functions. Among the most prominent methods are explanations based on Shapley values \citep{Lundberg17}; gradient based explanations such as LIME \citep{Ribeiro16}, SmoothGrad \citep{smilkov2017smoothgrad} and integrated gradients 
\citep{sundararajan2017axiomatic}; anchor expalantions~\citep{Ribeiro18}; or counterfactual explanations \citep{WachterEtal17}.  
For some of these algorithms, there exist theoretical results that guarantee their behavior in simple settings. For example, if the decision function is a linear function or a generalized additive model, then the SHAP coefficients recover the model coefficients \citep{Lundberg17,BorLux_shap_2023}. So does the LIME algorithm in case of a linear function \citep{GarLux20}. However, while these guarantees provide reassuring sanity checks, they do not cover the scenario that we are ultimately interested in: In practice, the  case where the explanations are most needed is when we are faced with a truly complex decision function. Indeed, the historical selling point of explanation algorithms has been that they ``explain the predictions of {\em any} classifier in an interpretable and faithful manner'' \citep{Ribeiro16}, and in particular provide insights into complex black box machine learning models such as deep neural networks or large random forests.

Unfortunately, it has proven difficult to apply local post-hoc explanation algorithms to explain complex AI systems. This applies to ``benign'' applications such as model debugging \citep{adebayo2020debugging}, but more importantly to applications of these algorithms in societal contexts \citep{BorFinRaiLux22}. For example, \citet{BhaLux2024} have shown that auditing algorithmic explanations in hindsight is hard.

The failure to provide explanation algorithms that can reliably explain the behavior of complex black-box models raises an important question: Have we simply not found the right algorithms yet, or are there fundamental reasons why this cannot work? In other words, is it just a matter of time until the machine learning community will provide the algorithms and theoretical guarantees for complex models, so that the explanation algorithms will comply with legislation like the AI Act?

Unfortunately, we believe that the answer is no. To the contrary, our research experience in the field of explainable machine learning has led us to pose the following informal conjecture.

\begin{tcolorbox}[colback=gray!10, colframe=gray!50, title=]
{\bf Informal conjecture}: If a decision function admits useful local post-hoc explanations for all data points, then the decision function already has to be simple.
\end{tcolorbox}
The goal of this paper is to formalize and prove a first version of this conjecture. \\

The key challenge in formalizing the conjecture is to define what constitutes ``useful'' explanations. The literature on explainable machine learning has developed a large number of mathematical properties that can be used to judge and evaluate explanation algorithms  (for an overview, see \citealp{Nauta_2023,chen2022makes}). For example, the notion of faithfulness or fidelity (in many different flavors, see, for example, \citealp{bhatt2021evaluating}, and \citealp{Dasgupta22}) tries to ensure that an explanation is not ``invented'' but somehow related to the function, the data point, or the data-generating process. However, as we will see below, this notion has important loop-holes that need fixing. Other criteria concern algorithmic properties such as the sensitivity or stability of explanation algorithms \citep{bhatt2021evaluating,velmurugan2021evaluating,Dasgupta22}, or criteria that attempt to measure 
how user-friendly an explanation is \citep{hullman2025explanationsmeansend}. 
However, when playing with our conjecture we realized that a very important aspect is missing from these criteria: Does the explanation actually tell us something new about the decision function, something that we had not known before? The other way round, what prevents an explanation algorithm from stating obvious trivialities that might be correct (and as such, in particular faithful and stable), but completely irrelevant? %

\paragraph{Informative explanations.} Intuitively, the requirement for a useful explanation is that it tells us something substantial about the decision function that we did not know before. Based on this intuition, we define when an explanation is {\it informative} about a decision function, using concepts from learning theory. We assume that we already know that the decision function $f$ comes from a certain function space $\Fcal$, for example because it has been generated by a specific learning algorithm or because it has some properties that we already know. For a fixed decision function $f$ and a point $x_0$ of interest, assume that we first receive the function's prediction $f(x_0)$ and then a local posthoc explanation $\ex(f,x_0)$. Our goal is to assess whether the explanation provides any ``knowledge'' that we did not have before receiving the explanation. We define the space $\Fp^{x_0}$ to consist of those functions $g \in \Fcal$ whose predictions on $x_0$ agree with $f$, that is $g(x_0) = f(x_0)$. 
Next, we consider the space $\Fe^{x_0}$ that consists of all functions $g \in \Fp^{x_0}$ that additionally have the same explanation: $\ex(g,x_0) = \ex(f,x_0)$. 
We call an explanation 
$\ex(f, x_0)$ {\em non-trivial} if 
$\Fe^{x_0} \varsubsetneq \Fp^{x_0}$: the explanation allows us to rule out at least one function. 
We call an explanation {\em informative} if 
$\Fe^{x_0}$ is not only smaller than $\Fp^{x_0}$ but less complex: $\Rad(\Fe^{x_0}) < \Rad(\Fp^{x_0})$, where $\Rad$ is the Rademacher complexity. Receiving an informative explanation then guarantees that the explanation ``significantly'' increases our knowledge about $f$.

\paragraph{Main results.} With our novel definition of informative explanations, we proceed to formally prove the following results: 
\be
\item[(1)] If we make no or weak assumptions on the decision function $f$, then common post-hoc explanation algorithms are not informative. For example, gradient explanations or counterfactual explanations on the space of differentiable functions are not informative (Proposition \ref{prop-gradient-all} and Proposition \ref{prop-weak-counterfactuals-not-informative-large-spaces}), and SHAP explanations and anchors on the space of decision trees are not informative (Proposition \ref{prop-shap-trees-arbitrary} and Proposition \ref{prop-anchors-tinfty}).

\item[(2)] Only if we introduce assumptions about the robustness or simplicity of the decision function,
post-hoc explanations become informative. For example, gradient explanations are informative on spaces of with low gradients and curvatures (Proposition \ref{prop-gradient-curvature});  counterfactual explanations are informative on Lipschitz-continuous functions (Proposition \ref{prop-weak-counterfactuals-informative-small-spaces}); SHAP explanations and anchors are informative on decision trees with small depth (Proposition \ref{prop-shap-trees-bounded} and Proposition \ref{anchor-not-perfect-bounded-tree}). 

\item[(3)] Alternatively, instead of making assumptions on the function $f$, we can enrich the explanations themselves in order to make them informative. For example, we might request that explanations are locally stable: they not only apply to exactly the point they have been asked for, but are also attested to be valid in a local neighborhood around this point (Proposition \ref{prop-gradient-locally-similar}).
\ee 

\paragraph{What are the consequences of our results?}

Result (1) demonstrates the significant limitation of local-post hoc explanation algorithms. In the case where the explanations are most desperately needed, namely when the decision functions are black boxes and have been learned by complex architectures such as deep neural networks or random forests, local post-hoc explanations do not provide any tangible information whatsoever about the decision function. 
This finding applies to many of the most popular explanation algorithms on tabular data, like SHAP, gradient explanations, anchors or counterfactuals.  In our opinion, these findings are not so much a weakness of these particular explanation algorithms, but rather a flaw in the whole setup: trying to explain arbitrarily complex decision functions by pointwise explanations at individual data points is doomed to fail for fundamental mathematical reasons, and we believe that our results are just representatives of a much more general phenomenon. \\

Results (2) and (3) point to a way forward, that is how to obtain informative explanations. The key insight is that to make local post-hoc explanations informative, we need to provide more information than just a single, pointwise explanation. In the context where the explanations are provided to a user by the developer of an AI system, the developer could disclose properties of the decision function, for example, regarding its robustness or simplicity. Alternatively, the developer might report not only a single explanation, but explanations for an entire region around the data point. \\

{\bf What do our results imply for auditing, regulation, and high-risk applications of AI?} The first important insight is that unless the developer of an AI system asserts that the decision function is ``simple'' or that an explanation is locally stable, current local post-hoc explanations are meaningless: One cannot deduce any tangible information from these explanations. Surely, this is not what legislators had in mind when drafting the relevant legislation, such as Article 86 of the AI Act in the European Union. With respect to the AI Act in particular, we find it hard to imagine that local post-hoc algorithms in their current form could be considered appropriate under this legislation. To make them suitable, the explanations would need to include additional information. It will be a matter of future work to determine whether some kind of ``enriched'' local post-hoc explanations can be made fit for the purpose of auditing, regulation, and high-risk applications, or whether they need to be abandoned.\\

\section{Related work}

The field of explainable machine learning has been rapidly growing over the last decade, for an overview see 
\citet{molnar2025}. Existing research can be broadly distinguished into research on interpretable models \citep{caruana2015intelligible,letham2015interpretable}, post-hoc explanation algorithms \citep{Lundberg17,Ribeiro16}, and mechanistic interpretability \citep{nanda2023progress}. In this work, we focus on post-hoc explanation algorithms.
Post-hoc explanations are intuitively appealing because they are applicable to any model, meaning that we do not need to sacrifice accuracy \citep{Lundberg17}. Consequently, post-hoc explanations have been heralded for a variety of important and often high-stakes use-cases, including 
model debugging \citep{adebayo2020debugging}, 
detecting discrimination \citep{agarwal2018automated}, 
giving recourse recommendations \citep{karimi2022survey}, 
applications of the AI Act \citep{panigutti2023role}, 
and many others. 
At the same time, there is an ongoing debate about the usefulness of post-hoc explanations. Most prominently, \citet{rudin2019stop} called for a stop of using these method because they can be ``misleading and often wrong'' \citep[Section~5]{rudin2019stop}, and \citet{BorFinRaiLux22} find that post-hoc explanations do not serve their purpose in many legal contexts. 

Indeed, there exist many negative results regarding local post-hoc explanations in the literature. For example, it is known that local explanation algorithms are sensitive to parameter choices, and different explanation algorithms tend to give widely different results 
\citep{BorFinRaiLux22,krishna2022disagreement}. This can be specifically exploited 
to co-develop machine learning models that always come with ``harmless'' explanations \citep{sharma2024x}. 
Moreover, many local explanation algorithms can be manipulated and fooled by malicious explanation providers 
\citep{slack2020fooling,anders2020fairwashing,slack2021counterfactual}. But at the same time, auditing local explanations in hindsight is hard \citep{BhaLux2024}.

In terms of theory, several works offer supporting evidence for our conjecture, though none directly formalize it. \citet{bressan2024theory} characterize regimes in which complex decision functions can be approximated by simpler models, while \citet{bilodeau2024impossibility} show that feature attributions often fail to test properties of the decision function in general settings. \citet{HillMasoomi24} quantify how local explanation uncertainty grows with the geometric complexity of the decision boundary, suggesting that uniformly reliable attributions are more likely when the boundary is simple. Other works connect degraded attribution quality to complexity more indirectly: \citet{Kumar20} provide examples where Shapley-value attributions misrepresent feature contributions in the presence of intricate dependencies, \citet{Fokkema23} prove that attribution methods offering actionable recourse are increasingly vulnerable to manipulation as boundaries become more complex, and \citet{WangTheory22} show that adversarially robust models with smoother boundaries tend to yield more interpretable, counterfactual-aligned attributions.

The large number of different post-hoc explanation algorithms, as well as the mentioned failure cases, have led to the attempt to develop desirable criteria for ``good'' and ``trustworthy'' explanations (see \citealp{Nauta_2023} for an extensive overview). Particularly relevant for our work are notions of ``faithfulness'', which try to assert whether an explanation is grounded in the function and the data or not \citep{bhatt2021evaluating, Dasgupta22, velmurugan2021evaluatingfidelity}. 

While it is rarely explicitly discussed, the idea that explanations should provide additional information about the function (beyond the prediction) permeates the literature on post-hoc explanation algorithms (for an explicit mention, see e.g. Remark 3.3 in \citealp{azzolin2025beyond}, or \citealp{adebayo2018sanity}). In many papers, this is expressed by describing the explanations as  "interpretable approximations" to the decision function \citep{Lundberg17}. Indeed, \citet{han2022explanation} have shown that many local explanation algorithms can be viewed from the perspective of local function approximation. %

Beyond the technical properties of explanation algorithms, an important line of work discusses whether and how explanations can be fruitfully used by their human recipients \citep{miller2019explanation,liao2021human,hullman2025explanationsmeansend}. 
It has often been pointed out that it is unclear what purpose the explanations are supposed to serve \citep{lipton2018mythos, molnar2020general, krishnan2020against, BorRaiLux25,hullman2025explanationsmeansend}.

An interesting connection exists between our results on gradient explanations and related work in computer vision. In image classification, it has been observed that the input gradients of unconstrained models are often noisy and uninterpretable \citep{smilkov2017smoothgrad}, whereas those of robust models are frequently human-interpretable \citep{tsipras2018robustness,kaur2019perceptually,srinivas2023models}. What is more, \citet{serrurier2023explainable} train 1-Lipschitz neural networks and argue that they exhibit desirable explainability properties. These empirical observations connect neatly to our theoretical results, specifically Proposition \ref{prop-gradient-all} and Proposition \ref{prop-gradient-curvature} below. It will be a matter of future work to closely investigate the implications of our results for image classification and its explanations.

\section{Preliminaries, notation, and setup}
\label{sec-setup}

\subsection{General setup} 
In this paper we deal with classification or regression on tabular data. We consider an input space $\Xcal \subseteq \R^d$ with $d$ dimensions. We denote the topological interior of $\Xcal$ by $\operatorname{int}(\Xcal)$. For a natural number $d\in\N$, we write [d]:=\{1,...,d\}. For a subset $S \subset [d]$ we denote its complement by $\bar S$. 
For points $x\in\R^d$ we denote their coordinates with superscripts, that is 
$x = (x^{(1)}, ... , x^{(d)})$.   For a subset $S=\{j_1,...,j_{|S|}\}$ of features and a point $x \in \R^d$, we write $x_S:=(x^{(j_1)},...,x^{(j_{|S|})})$, indicating that only the features $S$ of $x$ are considered. We consider functions $f$ on $\Xcal$ with real-valued output. For simplicity we often assume that the functions are upper and lower bounded by constants. In case of regression, we directly output the function value $f(x)$. In case of binary classification, we interpret $f$ as a scoring function from which we derive the final output label by $c_f(x) := \sign(f(x))$. For notational simplicity we restrict ourselves to binary classification, but our results carry over to a multi-class setting as well. 
Let $\Pr$ be a probability distribution over the input space $\Xcal$. For simplicity, we often assume that $\Pr$ has a density with respect to the Lebesgue measure (many of our results also hold under weaker assumptions, but would require more technical overhead). 
Our paper is concerned with explanations of functions that have been learned by some machine learning algorithm. We do not care about the learning step at all and consider the function $f$ as given. But we will often make the assumption that $f$ lives in a particular function space $\Fcal$ that might depend on the learning algorithm that has been used to generate the function.

\subsection{Explanations algorithms considered in this paper} 

This paper deals with local post-hoc explanation algorithms. Such an algorithm gets a function $f \in \Fcal$ and a point $x_0 \in \Xcal$ as input and produces an explanation $\ex(f,x_0)$. These explanations can take many different forms, for example they could consist of  gradients, feature importance scores, small decision trees, subsets of the input space, or points with certain properties. Note that the explanations produced by some explanation algorithm might depend not only on $f$ and $x_0$, but also on additional quantities such as the underlying probability distribution $\Pr$ or samples thereof. To keep notation light, we will omit these dependencies from the notation. 
Depending on the explanation paradigm, possible explanations of a function $f$ at a point $x_0$ might be unique or not. As we will see below, gradient and SHAP explanations are always unique, and we denote them by the letter $\Ecal(f,x_0)$. Anchors and counterfactuals, however, are not always unique. In this case, we denote the set of all admissible explanations of $f$ at $x_0$ by $\mathfrak{E}(f,x_0)$, and specific explanations by $\Ecal(f,x_0) \in \mathfrak{E}(f,x_0)$. 
In this paper we will focus on the following widely used explanation algorithms:

\paragraph{Gradient explanations.} 

Many explanation algorithms are related to the gradient of the function $f$ at the point $x_0$. For the sake of this paper, the most straight forward gradient explanation is the one that simply returns the full gradient of the function $f$ at point $x_0$: $\ex_{grad}(f,x_0) := \nabla f(x_0)$. In practice, gradient-based explanation algorithms often do not compute the raw gradient itself, but use an approximated version such as LIME~\citep{Ribeiro16,GarLux20}, a smoothed version such as SmoothGrad \citep{smilkov2017smoothgrad}, or they integrate gradients along a path (integrated gradients, \citet{sundararajan2017axiomatic}). For our analysis we will use the raw gradient, as many of our impossibility results will directly carry over to any explanation that can be derived from the raw gradient. 

\paragraph{SHAP explanations.}
Inspired by game theory, the popular SHAP explanation algorithm~\citep{Lundberg17} 
computes the importance of a feature $j$ for the prediction  $f(x)$ based on a {value function} $v:\Pcal([d]) \times \Xcal \to \R$, where $\Pcal$ denotes the power set. It measures the ``average value'' of the function $f$ if a subset $S\subseteq[d]$ of the features of $x$ are fixed. The two most natural and common choices are the observational value function
$$v(S,x):=\E_{X_{\bar{S}}\mid X_S=x_S}(f(x_S,X_{\bar{S}}))$$
which averages out the remaining features using the conditional expectation, 
and the interventional value function
$$v(S,x):=\E_{X_{\bar{S}}}(f(x_S,X_{\bar{S}})),$$
which averages out the remaining features using the marginal expectation. 
As final SHAP importance score $\Phi_j$ of feature $j$, we then add up weighted differences $v(S\cup j)-v(S)$ for all possible subsets of features:
$$\Phi_j(x) = \sum_{S\subseteq [d]} {\frac{|S|!(n-|S|-1)!}{n!}}\cdot \Big(v(S\cup j, x)-v(S,x)\Big).$$  
As the marginal expectation is much simpler to estimate than the conditional one, interventional SHAP is the algorithm that is mainly used in practice and that is encoded in the kernel-SHAP algorithm in the most widely used python-package \texttt{SHAP} \citep{Lundberg17}. \\

In this paper we ignore practical difficulties of computing SHAP explanations, for example the computational complexity of evaluating the value functions for exponentially many subsets or the statistical complexity on how to estimate conditional or marginal distributions from finite samples. For the rest of this paper we assume that we can evaluate all necessary quantities exactly. In the following, 
we denote the explanations generated by interventional SHAP by 
$\exshap(f,x_0) = (\Phi_1(x_0), ..., \Phi_d(x_0))$.

Although not diving deeper into the mathematical properties of SHAP, we want to mention the \emph{efficiency axiom}, which will be important in our work. It states that
\begin{equation}\label{eq-efficiency-axiom}
    \Phi_1(x)+...+\Phi_d(x) 
=  f(x)-\E(f(X)).
\end{equation}

\paragraph{Anchor explanations.}

Anchors~\citep{Ribeiro18} explain the classification of a data point in terms of a simple rule that relates to the features of the point. 
For simplicity, we consider binary classification: a scoring function $f$ with values between $-1$ and $1$, where the final classification output is given by $c_f(x) = \sign(f(x))$. 
A rule $R$ consists of statements like 
\ba
R: \; x^{(1)} \geq a_1 \;\; \&\;\; b_1 \leq x^{(2)} \leq b_2.
\ea 
Intuitively, such a rule is supposed to outline a box in the data space (possibly unbounded in some directions) which contains $x_0$, such that most of the points in this box have the same label as $x_0$. 
For a rule $R$ denote by $\Xcal_R$ the set of all points in $\R^d$ that satisfy the rule. The coverage of a rule $R$ is defined as the probability that a randomly chosen point in would satisfy the rule: 
\ba
\coverage(R) := \Pr(\Xcal_R).
\ea
The  precision of a rule $R$ is defined as the probability that points that satisfy the rule are assigned the same label as $x_0$. We thus define
\ba
\precision(R,f, x_0) := \frac{1}{\Pr(\Xcal_R)}\cdot \Pr\left(\left\{ x' \condon x' \in \Xcal_R \text{ and } c_f\left(x'\right) = c_f(x_0)  \right\}\right).
\ea
An anchor explanation is defined as a tuple 
$$\exanchor(f,x_0)=(R,p),$$
where $R$ is the rule and $p$ its precision. We call an anchor \emph{perfect} if its precision equals $1$. We denote the set of all possible anchors with precision $p$ around $x_0$ for function $f$ by
$$\Exanchor(f,x_0,p):=\{(R,p)\condon x_0\text{ satisfies } R,\;\precision(R,f,x_0)=p\}.$$

\paragraph{Counterfactual explanations.}

Counterfactual explanations~\citep{WachterEtal17} are typically used in a classification scenario. For simplicity, we consider binary classification: a scoring function $f$ with values between $-1$ and $1$, where the final classification output is given by $c_f(x) = \sign(f(x))$. One way of defining a counterfactual explanation of $f$ at $x_0$ is by identifying one dimension $j$ such that if one changes feature $x^{(j)}$ by an amount of $r$, the resulting point $x_C$ is on the other side of the decision surface. Formally, such counterfactual explanations are defined as the solutions of the optimization problem
\begin{equation} \label{eq-counterfactual-one-dim}
\begin{aligned}
    \argmin_{j\in[d],\,r\in\mathbb{R}}\quad & |r|\\
    \text{s.t.} \quad & c_f(x_0)\neq c_f(x_0+r\cdot e_j),
\end{aligned}
\end{equation}
where $e_j$ is the $j$th standard basis vector. The counterfactual is then received by $x_C=x_0+r^*\cdot e_{j^*}$, where $(j^*,r^*)$ minimizes \eqref{eq-counterfactual-one-dim}. If multiple features are allowed to change, the optimization problem takes the form

\begin{equation}\label{eq-counterfactual-strong}
    \begin{aligned}
        \argmin_{v\in\mathbb{R}^d} \quad & \|v\| \\
        \text{s.t.} \quad & c_f(x_0)\neq c_f(x_0+v).
    \end{aligned}
\end{equation}
In particular, from a solution $\|v\|$ of the second optimization problem we can deduce that all points in a ball $B_r(x_0)$ with $r < \|v\|$ get classified with the same label as $x_0$. If $v^*$ is a minimizer of \eqref{eq-counterfactual-strong}, we call the corresponding explanation $\exscf(f,x_0)=x_0+v^*$ a {\bf strong counterfactual} explanation (scf standing for strong counterfactual). (Here we ignore the technical issue that \eqref{eq-counterfactual-strong} might not have a well-defined solution as $\{x\in\Xcal\condon f(x)\ne f(x_0)\}$ is not necessarily a closed set. This can be fixed, for example  by considering $c_f(x_0+(1+\eps)v)$ for all $\eps>0$ instead. However, we will not dive into these technical details any further as they do not play a role in the proofs of our theorems.) 
Strong counterfactuals are not necessarily unique as \eqref{eq-counterfactual-strong} might not have a unique solution. We denote the set of all strong counterfactuals by $\Exscf(f, x_0)$. \\

However, strong counterfactuals are only nice for theory. Practically, counterfactual explanation algorithms do not solve the above optimization problem. They only run a local search for some point that is close to the point $x_0$, but has a different class label, for example by following the gradient of the function. This type of counterfactual explanation is much weaker, because it does not give us any guarantees about other points in a ball. 
We call a counterfactual explanation {\bf weak} if it just returns a data point $x_C \in \Xcal$ that has a different label than $x_0$, and denote this explanation by $\exwcf(f,x_0)$. We denote the set of all weak counterfactuals by
$$\Exwcf(f,x_0)=\{x\in\Xcal\condon c_f(x)\neq c_f(x_0)\}.$$

\subsection{Rademacher complexity} 
Our definition of informative explanations below makes use of the concept of Rademacher complexity, which quantifies the complexity of a function class $\Fcal$ by its ability to fit random labels \citep{bartlett2002rademacher,bousquet2003introduction,shalev2014understanding}. 
Concretely, for $n \in \Nat$ denote by $\sigma:= (\sigma_1, \sigma_2, ..., \sigma_n)$ a vector of $n$ i.i.d. random variables $\sigma_i$ that take the values $-1$ and $+1$ with probability 1/2 each. For a fixed set of points $x = (x_1, ..., x_n) \subset \Xcal^n$ we define the empirical Rademacher complexity $\hat R_n(\Fcal)$ as 
\ba
\hat R_n(\Fcal) := 
\E_\sigma \sup_{f \in \Fcal} 
\frac{1}{n} 
\sum_{i=1}^n
\sigma_i f(x_i)
\ea
and the Rademacher complexity $R_n(\Fcal)$ as its expectation over the random sample, 
\ba
R_n(\Fcal) := 
\E_{x}\E_\sigma \sup_{f \in \Fcal} 
\frac{1}{n} 
\sum_{i=1}^n
\sigma_i f(x_i).
\ea

In a standard learning theory setup, Rademacher complexities of function classes can be used to derive generalization bounds, for example for binary classification with the zero-one-loss. 
To consider alternative loss functions or regression, one could adapt the framework by introducing loss classes. However, we will use Rademacher complexities for a slightly different purpose and do not need this extra step.
In general, the Rademacher complexity of a function class of real-valued functions could take the value $\infty$. However, to make the exposition simpler, we will often make assumptions that ensure that the Rademacher complexity is finite. For example, this is the case if all the functions in $\Fcal$ are upper and lower bounded by a fixed constant.

\section{Informative explanations}

We now introduce the key definitions for this paper. These definitions are supposed to capture whether an explanation $\ex(f,x_0)$ provides anything ``interesting'' or ``meaningful'' or ``worthwhile to report'' about the decision $f(x_0)$ --- in particular, something that we did not know before asking for the explanation. To encode prior knowledge that we might have about the function $f$, we introduce a class $\Fcal$ of possible functions and assume that $f \in \Fcal$. For example, the class could consist of all functions that can be learned by a random forest; or all functions that can be learned by a decision tree of depth at most three; or all robust functions with Lipschitz constant smaller than 1; or all differentiable functions that interpolate a given set of training points. \\

We now want to say that an explanation $\ex(f,x_0)$ for a function $f \in \Fcal$ is trivial if knowing the explanation does not give us any evidence whatsoever which of the functions in $\Fcal$ could have been used to generate the prediction $f(x_0)$: A trivial explanation does not even allow us to rule out a single function from the space. With this intution it is clear that we want to receive non-trivial explanations. However, even non-trivial explanations might not yet be really meaningful: While we might be able to rule out at least one function from $\Fcal$, there might be so many functions left that we still don't have any tangible insight regarding $f$. To tackle this case, we say that an explanation is informative if knowing the explanation strictly reduces the complexity of the space of functions under consideration. Formally, we define both notions as follows. 

\begin{definition}[\textbf{Non-trivial and informative explanations}]
    Consider an input space $\Xcal \subseteq \R^d$ and a space $\Fcal$ of real-valued functions defined on $\Xcal$. Consider an explanation algorithm that for all $g \in \Fcal$ and for all $x \in \Xcal$ produces a unique explanation $\ex(g,x)$. Fix some $f \in \Fcal$ and $x_0 \in \Xcal$ and consider the explanation $\ex(f, x_0)$ of function $f$ at point $x_0$. 
    Define the following two spaces of functions: 
$\Fp^{x_0}$, the space of all functions that agree with $f$ on the prediction $f(x_0)$; and 
$\Fe^{x_0}$, the functions that agree with $f$ both in terms of prediction $f(x_0)$ and the corresponding explanation: 
    \ba
    & \Fp^{x_0} := \{ g \in \Fcal \condon g(x_0) = f(x_0)\}\\
    & \Fe^{x_0} := \{ g \in \Fcal \condon g(x_0) = f(x_0) 
    \text{ and } \ex(g,x_0) = \ex (f, x_0)\}.
    \ea
    Similarly, in the case of non-unique explanations, we define $\Fe^{x_0}$ to be the set of all functions that agree with $f$ on the prediction and for which the given explanation $\ex(f,x_0)$ is valid. Formally, this means
    $$\Fe^{x_0}:=\{ g \in \Fcal \condon g(x_0) = f(x_0) 
    \text{ and } \ex(f,x_0) \in \Ex (g, x_0)\}.$$
    We say that the explanation $\ex(f,x_0)$ is {\bf non-trivial} with respect to $\Fcal$ if
    $\Fe \varsubsetneq \Fp$. 
    We say that the explanation $\ex(f,x_0)$ is {\bf informative} with respect to $\Fcal$ and a fixed value $n \in \Nat$ if the Rademacher complexity of $\Fe^{x_0}$ is strictly smaller than the one of $\Fp^{x_0}$: 
\ba
R_n(\Fe^{x_0}) < R_n(\Fp^{x_0}).
\ea

\end{definition}

To get a first feeling for these definitions, let us discuss some simple examples first. Further below we then discuss the design decisions and rationales of the definitions. 

\subsection{First examples}\label{subsection-first-examples}

To get a first feeling for our definitions, let us consider some very simple first examples. 

\paragraph{Trivial implies non-informative.} It is obvious from the definitions that trivial explanations are always non-informative. Equivalently, informative explanations are always non-trivial. However, we will see below that there also exists a large class of explanations that are non-trivial but at the same time not informative. 

\paragraph{$f$-independent explanations are trivial.} 
Assume a model developer provides an explanation that is independent of the function $f$. Such an explanation is always trivial and not informative because $\Fe = \Fp$. In particular, this is true if the explanation depends only on the data point $x_0$ and properties of the underlying data distribution, but not on the function $f$. As a concrete examples, consider an 
explanation that highlights $x_0$'s largest coordinate; 
or an algorithm that runs a local principal component analysis in a neighborhood of $x_0$ and returns the direction of the largest component; or an algorithm that estimates statistical or causal dependencies of variables around $x_0$ and returns a particular variable. 
Such explanations are trivial. The same holds if an explanation provider simply invents a plausible-sounding explanation without even looking at~$f$.

\paragraph{The tautological explanation is trivial.} Consider the curious case of the explanation $\ex(f,x_0) := f(x_0)$. This ``explanation'' is perfectly faithful: it depends on the function $f$, it is specific to the data point $x_0$, it is not made up in any way, and it satisfies many criteria for ``good'' explanations in the literature. Yet, this explanation is obviously useless: It does not tell us anything that we did not know yet when we asked for an explanation. 
This case is nicely covered in our definition: obviously, this explanation is trivial and non-informative because $\Fe = \Fp$.

\paragraph{Gradient explanations on linear functions are informative. }
Consider $\Xcal = [0,1]^d$ with the uniform distribution, and the class $\Fcal$ of affine linear functions on this space. Fix some $x_0 \in [0,1]^d$ and assume we get to see $f(x_0) =: a$. The space $\Fp^{x_0}$ consists of all affine linear functions $g$ that satisfy $g(x_0)=a$, which is a space with Rademacher complexity $> 0$.  Now consider the gradient explanation $\exgrad(f,x_0) = \nabla f(x_0)$. The additional knowledge of the gradient allows us to precisely determine the affine linear function $f$, so $\Fe^{x_0}$ only consists of the single function $f$, which implies Rademacher complexity $0$. Hence, the explanation was informative. See Proposition \ref{prop-gradient-linear} below for a formal statement. 

\paragraph{Gradient explanations for piecewise constant functions are non-trivial, yet not informative.}
Consider $\Xcal = [0,1]^d$ and assume that the probability distribution $\Pr$ has a positive density. Consider the space $\Fcal$ of all axis-parallel decision trees on $[0,1]^d$ with depth 3. Consider the  gradient explanation 
\begin{align*}
\ex(f,x_0) = 
\begin{cases}
    \nabla f(x_0) & \text{ if } f \text{ is differentiable at } x_0\\
    \text{NaN} & \text{ otherwise.}\\
\end{cases}
\end{align*}
Intuition would tell us that gradient explanations do not provide any useful information on decision trees, as they consist of piecewise constant functions that have gradient 0 almost everywhere (except on the decision boundaries, where the function is not differentiable). So let us fix an arbitrary decision tree $f$, and consider a sample point $x_0$ drawn according to $\Pr$ that we want to explain.  
With probability 1, the point $x_0$ will not sit on any of the tree's decision boundaries, hence $\ex(f, x_0) =0$. This explanation serves to exclude all functions $g \in \Fcal$ whose decision boundaries pass through $x_0$, hence $\Fe \varsubsetneq \Fp$; the explanation is non-trivial. But at the same time, the explanation is not informative: Albeit $\Fe \varsubsetneq \Fp$, the Rademacher complexities of both sets remain the same. Intuitively this is perfectly plausible ---  the set of decision trees does not get ``significantly smaller'' if we exclude those specific trees that have a decision boundary at $x_0$. For the formal proof, see Appendix \ref{appendix-gradient-for-trees}. We can see that in this example, our definitions achieve exactly the right thing: While they capture the fact that the explanation allows to exclude some selected trees from $\Fcal$ (non-trivial), the explanation does not really help us to learn anything ``useful''  about the particular decision tree $f$ (non-informative): The function $f$ could still produce pretty much any behavior within the class of decision trees of depth 3.

\paragraph{Whether explanations are informative or not strongly depends on our background knowledge $\Fcal$.} Below we will see that it is immensely important what we consider as our background knowledge. Let us just outline two different cases here: 

Explanations $\ex(f,x_0)$ can be highly informative if $f$ comes from a moderately-sized function space $\Fcal_1$ (where the explanation helps to rule out a significant part of the function space), yet completely non-informative if the same function is only known to come from a much larger function space $\Fcal_2$ (that contains so many functions that the explanation does not reduce the complexity of the space of functions). As an example, we will see below that gradient explanations are informative on the space of all polynomials with bounded coefficients and bounded degree, but not on the space of all polynomials (see Subsection \ref{subsection-gradients-on-simple-spaces} and \ref{subsection-gradient-on-complex-spaces}).

The other way round, an explanation can be informative on a moderately-sized function class, but become trivial on a small class in which all functions receive the same explanation. As an example, consider again gradient explanations that are informative on the space of all polynomials with bounded coefficients and bounded degree (see Subsection \ref{subsection-gradients-on-simple-spaces}), but that are obviously trivial on the space of affine linear functions with fixed gradient $(1,1,...,1)$.

\section{When are gradient explanations informative?}\label{sec-gradient-explanations}
\label{section-gradients}

Now let us move on to study gradient explanations in detail. In this section we consider functions $f:\Xcal \to \R$, where $\Xcal\subseteq\R^d$, which can either be used in a regression setup or serve as the scoring function $f$ in classification. Typically, we assume that the functions are  bounded (to ensure that the Rademacher complexities are finite), and we assume that the probability distribution on $\Xcal$ has a positive density.

\subsection{Examples: Linear versus noisy linear functions}

To begin, we will 
contrast two scenarios. 
The first one is the case of linear functions. Intuition tells us that a gradient explanation for a linear function is informative: once we know the value $f(x_0)$ and the gradient  $\nabla f(x_0)$, the function is completely determined. This is what we will also see formally in Proposition \ref{prop-gradient-linear}  below. 
We will then contrast this with the case of ``noisy'' linear functions. Here we mean functions that behave like linear functions on the macro-scale, but whose decision boundary is allowed to contain steep but tiny ``wiggles'' of size at most $\eps$, for example a very small sine function of high frequency. Contemplating this case leads to the intuition that gradient information should not be informative in this case: Due to the construction, the gradient of the function at a particular point is a pretty arbitrary, local piece of information that only depends on the wiggly noise and does not tell us anything about the global behavior of the function $f$. Hence, learning about the gradient of $f$ at $x_0$ should not be informative. This is the content of Proposition \ref{prop-gradient-noisy-linear} below. Similar statements can also be proved about piecewise linear functions, see  Appendix \ref{section-gradient-piecewise-linear}. \\

\begin{proposition}[\textbf{Gradient explanations on linear functions are informative}]
\label{prop-gradient-linear}
Consider a compact data space $\Xcal\subseteq\R^d$ with a probability distribution $\Pr$ that has a positive density. Let $\Fcal$ be the class of linear functions with bounded coefficients
    \[
    \Lcal^M:=\left\{f:\Xcal\to\mathbb{R}\condon f(x)=w^Tx+b, \ \|w\|,|b|<M\right\}.
    \]
    Over this space,  
    gradient explanations $\exgrad(f,x_0)$ are non-trivial and informative for all $f \in \Fcal$, for all $x_0 \in \operatorname{int}(\Xcal)$, and all $n\geq1$. 
\end{proposition}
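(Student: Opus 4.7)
The plan is to write $f(x) = w^T x + b$ with $\|w\|, |b| < M$ and then (i) use linearity to show that the gradient explanation pins down all parameters, so $\Fe^{x_0} = \{f\}$; (ii) exhibit a continuum of distinct functions in $\Fp^{x_0}$ so that non-triviality is immediate; and (iii) lower-bound $R_n(\Fp^{x_0})$ strictly away from $0$ to conclude informativeness.

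For (i), any $g \in \Lcal^M$ has the form $g(x) = w_g^T x + b_g$, and since the gradient of a linear function is the constant $w_g$, the condition $\exgrad(g,x_0) = \exgrad(f,x_0)$ forces $w_g = w$; combined with $g(x_0) = f(x_0)$, this then forces $b_g = b$. For (ii), I would perturb: for a unit vector $v$ and $\eps > 0$, set $w_\eps := w + \eps v$ and $b_\eps := b - \eps v^T x_0$, so that the corresponding $g_\eps$ lies in $\Fp^{x_0}$. The strict inequalities $\|w\| < M$ and $|b| < M$, combined with compactness of $\Xcal$, imply that $g_\eps \in \Lcal^M$ for all sufficiently small $\eps$, so $\Fp^{x_0}$ contains a continuum of functions and $\Fe^{x_0} \varsubsetneq \Fp^{x_0}$.

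For (iii), the singleton $\Fe^{x_0} = \{f\}$ immediately gives $R_n(\Fe^{x_0}) = \E_x \E_\sigma \tfrac{1}{n} \sum_i \sigma_i f(x_i) = 0$, since $\E[\sigma_i] = 0$. For the lower bound on $R_n(\Fp^{x_0})$, I re-parametrize it as $\{g : g(x) = f(x_0) + w_g^T(x - x_0)\}$. The perturbation argument above produces a radius $\rho > 0$ such that every $w_g$ with $\|w_g - w\| \leq \rho$ yields a function in $\Fp^{x_0}$. Applying $\sup_{\|w_g - w\| \leq \rho} w_g^T y = w^T y + \rho \|y\|$ with $y := \sum_i \sigma_i(x_i - x_0)$, and using $\E[\sigma_i] = 0$ to kill both the $f(x_0)$-constant and the $w^T y$ term in expectation, I obtain
\ba
R_n(\Fp^{x_0}) \;\geq\; \frac{\rho}{n}\, \E_x \E_\sigma \left\| \sum_{i=1}^n \sigma_i (x_i - x_0) \right\|.
\ea
Since $\Pr$ has a positive density, $\sum_i \sigma_i(x_i - x_0)$ is not almost surely zero (for $n=1$ it is simply $\sigma_1(x_1 - x_0)$, which is nonzero a.s.), so the right-hand side is strictly positive.

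The main obstacle is purely bookkeeping rather than conceptual: I must verify that the strict constraints $\|w\| < M$ and $|b| < M$, together with compactness of $\Xcal$, leave enough slack to fit a ball of some positive radius $\rho$ around $w$ inside the admissible parameter region \emph{after} imposing $g(x_0) = f(x_0)$. Once this slack is in hand, the Rademacher lower bound reduces to the one-line supremum computation above, and the proposition follows.
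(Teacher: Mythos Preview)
Your proof is correct and takes a genuinely different route from the paper's. Both arguments begin identically by showing $\Fe^{x_0} = \{f\}$, hence $R_n(\Fe^{x_0}) = 0$. For the lower bound on $R_n(\Fp^{x_0})$, however, the paper invokes its conditional-Rademacher Lemma~\ref{lemma-rademacher}: it fixes the direction $v = \nabla f(x_0)$, conditions on the positive-probability event $A = \{v^T(x_i - x_0) > 0,\ \sigma_i = +1 \text{ for all } i\}$, and exhibits a single perturbed function $\varphi_\delta(x) = f(x) + \delta v^T(x - x_0) \in \Fp^{x_0}$ that strictly beats $f$ on $A$. You instead work unconditionally: you fit an entire ball $\{w_g : \|w_g - w\| \leq \rho\}$ into the admissible parameter region (the ``bookkeeping'' you flag, which is indeed routine from the strict inequalities $\|w\|,|b|<M$ and the boundedness of $x_0$), compute the supremum over that ball in closed form, and obtain $R_n(\Fp^{x_0}) \geq \tfrac{\rho}{n}\,\E\bigl\|\sum_i \sigma_i(x_i - x_0)\bigr\| > 0$. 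Your approach is more self-contained (no auxiliary lemma needed) and yields an explicit quantitative lower bound; it also goes through uniformly in $f$, including the degenerate case $\nabla f(x_0) = 0$, where the paper's conditioning event $A$ becomes null and its argument would need a separate patch.
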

\noindent
{\bf Proof sketch} (full proof in Appendix \ref{proof-prop-gradient-linear}). It is clear that a gradient explanation is non-trivial:
$\Fp^{x_0}$ consists of all linear functions that pass through $(x_0,f(x_0))$, whereas $\Fe^{x_0}$ only consists of the single function $f$ (because $f$ is the only linear function that passes through $(x_0, f(x_0))$ and has the same gradient as $f$). Hence, with the help of a technical lemma, it is easy to show informativeness. \ulesqed\\

\begin{proposition}[\textbf{Gradient explanations on noisy linear functions are not informative}]
\label{prop-gradient-noisy-linear}
Consider a compact data space $\Xcal\subseteq\R^d$ with a probability distribution $\Pr$ that has a positive density. Let $\Fcal$ be a class of noisy bounded linear functions $$\mathcal{L}^M_\varepsilon:=\left\{f:\Xcal\to\mathbb{R}\ |\ f \text{ differentiable},\, \exists g\in\Lcal^M\ \forall x\in\mathbb{R}^d: g(x)-\varepsilon< f(x)<g(x)+\varepsilon\right\}.$$ Gradient explanations $\exgrad(f,x_0)$  on $\Fcal$ are non-trivial and non-informative for all $f\in\Fcal$, all $x_0\in\operatorname{int}(\Xcal)$, and all $n\geq1$.
\end{proposition}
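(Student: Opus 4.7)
Both claims rest on one bump-function construction near $x_0$. Fix a $C^\infty$ bump $\eta:\mathbb{R}^d\to\mathbb{R}$ with $\eta(0)=1$ and $\supp\eta\subseteq B_1(0)$, and for $v\in\mathbb{R}^d$ and $\delta>0$ set
$$\psi_{v,\delta}(x):=\eta\bigl((x-x_0)/\delta\bigr)\,v\cdot(x-x_0).$$
A direct check gives $\psi_{v,\delta}(x_0)=0$, $\nabla\psi_{v,\delta}(x_0)=v$, $\supp\psi_{v,\delta}\subseteq \overline{B_\delta(x_0)}$, and $\|\psi_{v,\delta}\|_\infty\le\|\eta\|_\infty|v|\delta$. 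The property of $\mathcal{L}^M_\eps$ that drives the argument is strictness: for any $h\in\mathcal{L}^M_\eps$ with linear witness $g_h$, the inequality $|h(x_0)-g_h(x_0)|<\eps$ is strict, so continuity of $h-g_h$ yields some $\alpha_h>0$ and $\rho_h>0$ with $|h-g_h|\le\eps-\alpha_h$ on $B_{\rho_h}(x_0)\cap\Xcal$ (using $x_0\in\operatorname{int}(\Xcal)$ to place a ball around $x_0$ inside $\Xcal$).

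\textbf{Non-triviality.} Pick any nonzero $u\in\mathbb{R}^d$ and take $\delta<\min\bigl(\rho_f,\,\alpha_f/(\|\eta\|_\infty|u|)\bigr)$. Then $f':=f+\psi_{u,\delta}$ is differentiable, satisfies $f'(x_0)=f(x_0)$ and $\nabla f'(x_0)=\nabla f(x_0)+u\ne\nabla f(x_0)$, and checking separately on $B_\delta(x_0)$ and on $\Xcal\setminus B_\delta(x_0)$ gives $|f'-g_f|<\eps$ throughout $\Xcal$, so $f'\in\mathcal{L}^M_\eps$. Hence $f'\in\Fp^{x_0}\setminus\Fe^{x_0}$.

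\textbf{Non-informativeness.} The inclusion $\Fe^{x_0}\subseteq\Fp^{x_0}$ gives $R_n(\Fe^{x_0})\le R_n(\Fp^{x_0})$. For the reverse I claim: for any sample $(x_1,\ldots,x_n)$ with $x_i\ne x_0$ for all $i$ (almost sure, since $\Pr$ has a density) and any $h\in\Fp^{x_0}$, there exists $\tilde h\in\Fe^{x_0}$ with $\tilde h(x_i)=h(x_i)$ for every $i$. To construct it, set $v:=\nabla f(x_0)-\nabla h(x_0)$ and pick $\delta<\min\bigl(\rho_h,\,\alpha_h/(\|\eta\|_\infty|v|),\,\min_i\|x_i-x_0\|\bigr)$; then $\tilde h:=h+\psi_{v,\delta}$ satisfies $\tilde h(x_0)=f(x_0)$ and $\nabla\tilde h(x_0)=\nabla f(x_0)$, lies in $\mathcal{L}^M_\eps$ with witness $g_h$ by the same case analysis as above, and agrees with $h$ on every $x_i$ since the bump vanishes outside $B_\delta(x_0)$. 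Matching sample values then gives $\tfrac{1}{n}\sum_i\sigma_ih(x_i)\le\sup_{g\in\Fe^{x_0}}\tfrac{1}{n}\sum_i\sigma_ig(x_i)$ for every $h\in\Fp^{x_0}$; taking $\sup$ over $h$ and expectation yields $R_n(\Fp^{x_0})\le R_n(\Fe^{x_0})$.

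\textbf{Main obstacle.} The delicate point is that $|v|=|\nabla f(x_0)-\nabla h(x_0)|$ is unbounded across $\Fp^{x_0}$ (nothing prevents $h$ from having a wild high-frequency wiggle at $x_0$), while $\|\psi_{v,\delta}\|_\infty\le\|\eta\|_\infty|v|\delta$ must stay below the local gap $\alpha_h$. This forces $\delta$ to depend on $h$, which is harmless because the construction is applied pointwise to each $h$ before the sup is taken; no uniform choice of $\delta$ is needed.
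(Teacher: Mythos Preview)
Your argument is correct and follows the same local-modification idea as the paper: given $h\in\Fp^{x_0}$, alter it only in a small ball around $x_0$ so that the gradient at $x_0$ matches $\nabla f(x_0)$ while the values at the sample points are unchanged. The paper's version (which simply points to the proof of Proposition~\ref{prop-gradient-all}) glues $f$ itself onto $h$ near $x_0$ with a smooth transition annulus; you instead add a compactly supported bump $\psi_{v,\delta}$ that corrects the gradient additively. Both are standard realizations of the same mechanism.

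Your write-up is in fact more careful on the one point the paper leaves implicit: verifying that the modified function remains in $\mathcal{L}^M_\eps$. You exploit the \emph{strict} inequality in the definition to extract a local gap $\alpha_h$ and then choose $\delta$ so that $\|\psi_{v,\delta}\|_\infty<\alpha_h$, which cleanly keeps $\tilde h$ in the $\eps$-band around the same linear witness $g_h$. The paper's gluing construction needs an analogous (unstated) observation---that $|f-g_h|<\eps$ in a small enough ball around $x_0$ since $f(x_0)=h(x_0)$, and that the smooth interpolation can be taken as a convex combination---so your route is arguably the tidier of the two. The trivial case $v=0$ (i.e.\ $h$ already in $\Fe^{x_0}$) should be mentioned for completeness, but it does not affect the argument.
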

\noindent
{\bf Proof sketch} (full proof in Appendix \ref{proof-prop-gradient-noisy-linear}). To show non-informativeness, assume that we have $n\geq1$ sample points $x_1, ..., x_n$  and Rademacher variables $\sigma_1, ..., \sigma_n$ as labels. Let $g\in\Fp^{x_0}$ be the function that maximizes the expression $\frac{1}{n}\sum_{i=1}^n\sigma_i\varphi(x_i)$ over $\Fp^{x_0}$. Now it is possible to modify $g$ only locally around $x_0$, so that its gradient matches that of $f$ in $x_0$ (meaning $\tilde g\in\Fe^{x_0}$), but the values it takes on the sample points do not change. Consequently, 
$\sup_{\varphi \in \Fp^{x_0}}\frac{1}{n}\sum_{i=1}^n\sigma_i\varphi(x_i)
= 
\sup_{\varphi \in \Fe^{x_0}}\frac{1}{n}\sum_{i=1}^n\sigma_i\varphi(x_i)
$. \
This construction can be done for all sample points $x_i$ and corresponding Rademacher variables $\sigma_i$, which proves the statement.\ulesqed\\
 
In Appendix \ref{section-gradient-piecewise-linear} we show exemplary that similar results can also be achieved for classes with piecewise linear functions.\\

The propositions above already help us to gain intuition about the ``inductive bias'' of gradient explanations: In cases where the function class only contains functions of ``low variation'', revealing the gradient at some point $x_0$ represents a meaningful piece of information about the function $f$, so that we can deduce some aspects about the behavior of $f$ on (some) other data points as well. However, in cases where the function class contains functions with very high variation, the local gradient information does not allow us to make any more global conclusions about the function $f$. This observation directly leads us to the next subsection, where we make this intuition precise.

\subsection{On spaces of complex functions, gradient explanations are not informative}
\label{subsection-gradient-on-complex-spaces}

Now we prove one of the first main results in this paper: If the function space $\Fcal$ is too large, gradient explanations are not informative anymore. Intuitively, knowing the function's gradient at some data point $x_0$ does not give us any clue about the function's behavior at any other part of the space, not even very close to $x_0$.

\begin{proposition}[\textbf{Gradient explanations on all differentiable functions are not  informative}]
\label{prop-gradient-all}
Consider a compact data space $\Xcal\subseteq\R^d$ with a probability distribution $\Pr$ that has a positive density. Let $\Fcal$ be the class of differentiable functions
\ba
\Dcal = \left\{f: \Xcal \to [-1,1] \condon f \text{ differentiable} \right\}.
\ea 
Over this space, 
gradient explanations $\exgrad(f,x_0)$ are non-trivial and non-informative for all $f \in \Fcal$, all $x_0 \in \operatorname{int}(\Xcal)$ and all $n\geq1$.
\end{proposition}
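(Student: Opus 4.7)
Both conclusions follow from the fact that any function in $\Fp^{x_0}$ can be modified on an arbitrarily small neighborhood of $x_0$ without leaving $\Dcal$. Because the gradient is a purely local quantity, the constraint $\nabla g(x_0) = \nabla f(x_0)$ only pins down behavior at $x_0$ and cannot be transmitted to any other location in $\Xcal$. The class $\Dcal$ is rich enough to reshape the derivative at $x_0$ without affecting values elsewhere, which is precisely what the Rademacher complexity measures.

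\textbf{Non-triviality.} First I would exhibit a single $g \in \Fp^{x_0} \setminus \Fe^{x_0}$. Pick a nonzero direction $v \in \R^d$ and a standard smooth bump $\rho$ supported in a small ball $B_\delta(x_0)$ with $\rho(x_0) = 0$ and $\nabla\rho(x_0) = v$ (for instance, $\rho(x) = \chi((x-x_0)/\delta)\,v \cdot (x-x_0)$ with a suitable $\chi$). Setting $g := f + \alpha\rho$ for a sufficiently small scalar $\alpha > 0$ yields $g(x_0) = f(x_0)$ and $\nabla g(x_0) = \nabla f(x_0) + \alpha v \neq \nabla f(x_0)$. Because $\rho$ vanishes at $x_0$ and is supported in $B_\delta(x_0)$, its sup-norm is $O(\alpha\delta)$, so shrinking $\alpha$ (or $\delta$) keeps $g$ in $[-1,1]$ in the generic case $|f(x_0)| < 1$. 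Flipping the sign of $v$ handles points where $f$ is near one of the two bounds.

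\textbf{Non-informativeness.} To get $R_n(\Fe^{x_0}) = R_n(\Fp^{x_0})$ it suffices to prove the pointwise equality $\sup_{\varphi \in \Fp^{x_0}} \tfrac{1}{n}\sum_i \sigma_i \varphi(x_i) = \sup_{\varphi \in \Fe^{x_0}} \tfrac{1}{n}\sum_i \sigma_i \varphi(x_i)$ almost surely over the sample, for every Rademacher vector $\sigma$. Since $\Pr$ has a density, almost surely $x_0 \notin \{x_1,\ldots,x_n\}$, so we can pick $\delta > 0$ with $B_\delta(x_0)$ disjoint from every sample point. Choose a smooth cutoff $\eta: \R^d \to [0,1]$ with $\supp(\eta) \subset B_\delta(x_0)$, $\eta(x_0) = 1$, and $\nabla\eta(x_0) = 0$. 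For any $\varphi \in \Fp^{x_0}$ define the blend
\[
\tilde\varphi(x) := (1 - \eta(x))\,\varphi(x) + \eta(x)\,f(x).
\]
Then $\tilde\varphi$ is differentiable and stays in $[-1,1]$ as a pointwise convex combination of two functions in $\Dcal$. The product rule together with $\eta(x_0) = 1$ and $\nabla\eta(x_0) = 0$ gives $\tilde\varphi(x_0) = f(x_0)$ and $\nabla\tilde\varphi(x_0) = \nabla f(x_0)$, so $\tilde\varphi \in \Fe^{x_0}$. Outside $B_\delta(x_0)$ the blend coincides with $\varphi$, hence $\tilde\varphi(x_i) = \varphi(x_i)$ for every $i$. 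This shows $\sup_{\Fp^{x_0}} \leq \sup_{\Fe^{x_0}}$; the reverse inequality is trivial since $\Fe^{x_0} \subseteq \Fp^{x_0}$.

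\textbf{Main obstacle.} The delicate part is simultaneously (i) correcting the gradient at $x_0$ to match $\nabla f(x_0)$, (ii) leaving the sample values unchanged, and (iii) respecting the range $[-1,1]$. Selecting $\nabla\eta(x_0) = 0$ kills the cross terms in the product rule, so (i) is inherited from $f$ itself, while the convex-combination form of the blend delivers (iii) for free; this is what makes blending with $f$ cleaner than the purely additive bump trick used in the noisy-linear proposition, which is less well-behaved under the pointwise boundedness constraint on $\Dcal$.
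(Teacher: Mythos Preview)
Your argument is correct and follows essentially the same route as the paper's proof: both show non-informativeness by locally modifying an arbitrary $\varphi\in\Fp^{x_0}$ on a small ball around $x_0$ (which almost surely misses the sample) so that it agrees with $f$ there and hence lands in $\Fe^{x_0}$ while leaving all sample values unchanged. Your explicit convex blend $\tilde\varphi=(1-\eta)\varphi+\eta f$ with $\eta(x_0)=1$, $\nabla\eta(x_0)=0$ is in fact a cleaner instantiation of the paper's ``smooth connection'' step, since it automatically preserves the range $[-1,1]$.
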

\noindent
{\bf Proof sketch}. The {Proof} works similarly as for Proposition \ref{prop-gradient-noisy-linear} (full proof in Appendix \ref{proof-prop-gradient-all}): With probability 1, any differentiable function in $\Fp^{x_0}$ that maximizes the Rademacher expression $\sum_{i=1}^n\sigma_i\varphi(x_i)$ can be modified in a local neighborhood of $x_0$ to have the same gradient (and thus the same explanations) as $f$ at $x_0$. This modification is possible within the class of differentiable functions, such that the resulting function is in $\Fe^{x_0}$. \ulesqed\\

\begin{proposition}[\textbf{Gradient explanations on all polynomials are not  informative}]
\label{prop-gradient-all-polynomials}
Consider a compact data space $\Xcal\subseteq\R^d$ with a probability distribution $\Pr$ that has a density. Let $\Fcal$ be the class of polynomials with arbitrary degree
    \[
    \mathcal{P}=\left\{f:\Xcal\to[-1,1]\condon\forall x\in\Xcal:f(x)=\sum_{|\alpha|\leq k}a_{\alpha_1,\dots,\alpha_d}\left(x^{(1)}\right)^{\alpha_1}\cdots \left(x^{(d)}\right)^{\alpha_d},\ k\in\mathbb{N}\right\},
    \]
where $\alpha$ is a multi-index. Over this space, gradient explanations $\exgrad(f,x_0)$ are non-trivial and non-informative for all $f\in \Fcal$, all $x_0 \in \operatorname{int}(\Xcal)$ and all $n\geq1$. 
\end{proposition}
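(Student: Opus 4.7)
The plan is to mirror the proof of Proposition~\ref{prop-gradient-all}, except that local modifications of $\varphi$ are no longer available inside the rigid class of polynomials. The idea is to exhibit explicit high-degree polynomial ``bumps'' at $x_0$ carrying a prescribed gradient but arbitrarily small sup norm on $\Xcal$, and to use them both to witness non-triviality and to convert any near-maximizer in $\Fp^{x_0}$ into a near-maximizer in $\Fe^{x_0}$.

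The central technical lemma I would prove first is: \emph{for every $v\in\R^d$ and every $\eta>0$ there is a polynomial $c$ on $\Xcal$ with $c(x_0)=0$, $\nabla c(x_0)=v$, $c(x_i)=0$ for $i=1,\ldots,n$, and $\|c\|_{L^\infty(\Xcal)}<\eta$.} A concrete construction is
\[
c_N(x)\;:=\;\bigl(v\cdot(x-x_0)\bigr)\,\Bigl(1-\tfrac{\|x-x_0\|^2}{R^2}\Bigr)^{2N}\,\prod_{i=1}^n\frac{\|x-x_i\|^2}{\|x_0-x_i\|^2},
\]
where $R$ is chosen so that $\Xcal\subseteq B_R(x_0)$. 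The vanishing of $c_N$ at $x_0$ and at each $x_i$, as well as the identity $\nabla c_N(x_0)=v$, follow by a direct product-rule computation using that $x_i\ne x_0$ almost surely. The sup-norm bound follows because on $[0,1]$ the univariate profile $r\mapsto r(1-r^2)^{2N}$ attains a maximum of order $1/\sqrt{N}$ at $r\sim1/\sqrt{4N+1}$, so $\|c_N\|_\infty=O(1/\sqrt{N})\to 0$.

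Granting this lemma, non-triviality (for $f$ with $|f(x_0)|<1$) is obtained by first shrinking $f$ towards its value at $x_0$: set $f_\delta(x):=(1-\delta)f(x)+\delta f(x_0)\in\Fp^{x_0}$, for which $\|f_\delta\|_\infty<1$ strictly, then apply the lemma with any $v$ such that $(1-\delta)\nabla f(x_0)+v\ne\nabla f(x_0)$ and $\eta=1-\|f_\delta\|_\infty>0$; the resulting $f_\delta+c$ lies in $\Fp^{x_0}\setminus\Fe^{x_0}$. For non-informativeness, by the same reduction as in the proof of Proposition~\ref{prop-gradient-all}, it suffices to show that for a.e.\ sample $(x_1,\ldots,x_n)$ and every $\sigma\in\{\pm 1\}^n$,
\[
\sup_{\varphi\in\Fe^{x_0}}\tfrac{1}{n}\sum_i\sigma_i\varphi(x_i)\;=\;\sup_{\varphi\in\Fp^{x_0}}\tfrac{1}{n}\sum_i\sigma_i\varphi(x_i),
\]
with the nontrivial direction being $\geq$. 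Given $\varphi\in\Fp^{x_0}$, the shrunken polynomial $\varphi_\delta(x):=(1-\delta)\varphi(x)+\delta f(x_0)$ again lies in $\Fp^{x_0}$ and satisfies $\|\varphi_\delta\|_\infty<1$ strictly. Applying the lemma with $v=\nabla f(x_0)-(1-\delta)\nabla\varphi(x_0)$ and $\eta=1-\|\varphi_\delta\|_\infty$ produces $\tilde\varphi:=\varphi_\delta+c\in\Fe^{x_0}$; since $c$ vanishes at each $x_i$, we have $\tilde\varphi(x_i)=\varphi_\delta(x_i)$, and the Rademacher sum changes by at most $2n\delta$ under $\varphi\mapsto\varphi_\delta$. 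Letting $\delta\to 0$ gives the desired equality of suprema, and taking expectations concludes.

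The main obstacle is the polynomial bump lemma: unlike in the differentiable case, there is no local cutoff function available, so the sup-norm control must come from a Markov-type trade-off between a fixed pointwise derivative and a vanishing global sup norm, which here is realized by the high-order damping factor. The boundary case $|f(x_0)|=1$ is handled separately: since $x_0\in\operatorname{int}(\Xcal)$ is then a global extremum of every $g\in\Fp^{x_0}$ with $\|g\|_\infty\le 1$, one automatically has $\nabla g(x_0)=0=\nabla f(x_0)$, so $\Fe^{x_0}=\Fp^{x_0}$ and non-informativeness is immediate.
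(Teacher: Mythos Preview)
Your argument is correct and takes a genuinely different route from the paper. The paper invokes Lemma~\ref{lem-Fexplain-interpolating} and simply asserts that $\Fe^{x_0}$ is interpolating with respect to binary labels, i.e.\ that there always exists a polynomial with $\|g\|_{L^\infty(\Xcal)}\le 1$ hitting the prescribed value and gradient at $x_0$ and the values $\pm 1$ at the sample points; this last step is stated without justification, and making it rigorous (the $[-1,1]$ constraint forces the $x_i$ to be interior extrema of $g$) is not entirely trivial. Your construction avoids this difficulty altogether: by first shrinking $\varphi$ to $\varphi_\delta$ you buy strict room under the sup-norm bound, and the explicit polynomial bump $c_N$ with the high-order damping factor then corrects the gradient at $x_0$ while vanishing at the $x_i$ and staying uniformly small. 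This mirrors the local-modification proof of Proposition~\ref{prop-gradient-all} more closely than the paper's own argument and is more self-contained. Your observation about the boundary case $|f(x_0)|=1$ is also correct and worth flagging: there $\Fe^{x_0}=\Fp^{x_0}$, so the explanation is in fact \emph{trivial}, which means the ``non-trivial for all $f$'' part of the proposition as stated does not hold in that corner case; your non-informativeness argument, however, covers it cleanly.
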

\noindent
{\bf Proof sketch} (full proof in Appendix \ref{proof-prop-gradient-all-polynomials}). The proof works similarly as the one before. They key insight is that for every set of points $x_1, ..., x_n$ with Rademacher labels $\sigma_1, ..., \sigma_n$ one can find a polynomial that passes through these points, but at the same time agrees in terms of its gradient with $f$ on $x_0$. 
\ulesqed\\

In both cases, the proofs work similarly: It is possible to locally modify a function in a tiny neighborhood around $x_0$ so that it behaves as $f$ in this neighborhood and differently outside the neighborhood. This construction is possible because the class of differentiable functions allows arbitrary large gradients and curvatures, so that we can change the behavior of a function in arbitrary fast ways. Intuitively, it is then also clear why gradient explanations of such functions cannot be informative: The information of the gradient does not allow us to make any conclusion whatsoever about how the function behaves, not for a single data point in the space. Hence, knowing about the gradient of $f$ at $x_0$ does not help us to conclude anything about $f$ outside of $x_0$. \\
This discussion already hints at what happens in the next subsection: As soon as we restrict the function class, for example by bounding the curvature of functions, the situation changes. Knowledge of the gradient at $x_0$ then lets us conclude something about the behavior of $f$ in a fixed, local neighborhood of $x_0$, hence the explanation becomes informative.

\subsection{Gradient explanations are informative for simple functions}
\label{subsection-gradients-on-simple-spaces}

We have seen above that gradient explanations on the space of differentiable functions are not informative. We would now like to restrict the function class in such a way that explanations become informative. Our first idea was to consider functions whose gradient norms are bounded by a constant. However, this condition is not enough, as one can show by a proof that is similar to the one of Proposition \ref{prop-gradient-all}. What we additionally need is that the gradients are also not allowed to change too fast. 

\begin{proposition}[\textbf{Gradient explanations on functions with smooth gradients are  informative}]
\label{prop-gradient-curvature}
Consider a compact data space $\Xcal\subseteq\R^d$ with a probability distribution $\Pr$ that has a positive density. Let $\Fcal$ be the class of functions with bounded and Lipschitz continuous gradients
    \begin{align*}\mathcal{D}_{\alpha,\beta}:=\big\{f:\Xcal\to
    \mathbb{R}\condon  &f\text{ differentiable},\\&\forall x\in\Xcal:\|\nabla f(x)\|\leq\alpha,\\&\forall x,y\in\Xcal:\|\nabla f(x)-\nabla f(y)\|\leq\beta\|x-y\|\big\}. 
    \end{align*}
    Over this class, 
gradient explanations $\exgrad(f,x_0)$ are non-trivial and informative for all $f \in \Fcal$, all $x_0 \in \operatorname{int}(\Xcal)$, and all $n\geq1$.
\end{proposition}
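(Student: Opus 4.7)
The plan is to establish non-triviality by exhibiting a single affine function in $\Fp^{x_0}\setminus\Fe^{x_0}$, and to prove informativeness by comparing a lower bound on $\sup_{\Fp^{x_0}}$ (obtained from the affine sub-class) against an upper bound on $\sup_{\Fe^{x_0}}$ (obtained from Taylor's theorem), then localizing to a positive-probability event around $x_0$ so that the comparison survives for every $n\geq 1$. Non-triviality is immediate: assuming $\alpha>0$ (otherwise $\mathcal{D}_{\alpha,\beta}$ contains only constants and the claim is vacuous), any affine function $\varphi_w(x):=f(x_0)+w^T(x-x_0)$ with $w\in B_\alpha(0)\setminus\{\nabla f(x_0)\}$ has constant gradient of norm at most $\alpha$ and zero Hessian, so it lies in $\mathcal{D}_{\alpha,\beta}$, satisfies $\varphi_w(x_0)=f(x_0)$, and therefore belongs to $\Fp^{x_0}$ but not to $\Fe^{x_0}$.

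For informativeness, I would first lower-bound the sup over $\Fp^{x_0}$ by restricting to the affine sub-class $\{\varphi_w:\|w\|\leq\alpha\}\subseteq\Fp^{x_0}$ and using $\sup_{\|w\|\leq\alpha}w^T\bar v=\alpha\|\bar v\|$, where $\bar v:=\tfrac{1}{n}\sum_i\sigma_i(x_i-x_0)$. Next, integrating the $\beta$-Lipschitz-gradient hypothesis (Taylor with remainder) gives $|\varphi(x)-f(x_0)-\nabla f(x_0)^T(x-x_0)|\leq\tfrac{\beta}{2}\|x-x_0\|^2$ for every $\varphi\in\Fe^{x_0}$, which yields a matching upper bound on $\sup_{\Fe^{x_0}}$. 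Subtracting the two bounds and taking $\E_\sigma$ kills the $\tfrac{f(x_0)}{n}\sum\sigma_i$ and the $\nabla f(x_0)^T\bar v$ terms (since $\E_\sigma[\bar v]=0$), leaving
\[
\E_\sigma\!\left[\sup_{\varphi\in\Fp^{x_0}}\tfrac{1}{n}\sum_i\sigma_i\varphi(x_i)-\sup_{\varphi\in\Fe^{x_0}}\tfrac{1}{n}\sum_i\sigma_i\varphi(x_i)\right] \;\geq\; \alpha\,\E_\sigma\|\bar v\| \;-\; \tfrac{\beta}{2n}\sum_i\|x_i-x_0\|^2.
\]

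The hard part will be that, unconditionally on $x$, the first term on the right scales like $1/\sqrt n$ while the remainder term is $O(1)$, so the estimate becomes vacuous for large $n$. I would fix this by localizing: condition on the event $A_r:=\{x_i\in B_r(x_0)\setminus B_{r/2}(x_0)\text{ for all }i\}$, which has strictly positive probability for every sufficiently small $r>0$ since $x_0\in\operatorname{int}(\Xcal)$ and $\Pr$ has a positive density. On $A_r$ we have $\tfrac{1}{n}\sum\|x_i-x_0\|^2\leq r^2$, while the Khintchine-Kahane inequality yields $\E_\sigma\|\sum_i\sigma_i(x_i-x_0)\|\geq c\sqrt{\sum\|x_i-x_0\|^2}\geq c r\sqrt n/2$ for a universal constant $c>0$, so $\E_\sigma\|\bar v\|\geq c r/(2\sqrt n)$. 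Choosing $r$ small enough (depending on $n,\alpha,\beta,c$) that $\tfrac{\alpha c r}{2\sqrt n}>\tfrac{\beta r^2}{2}$ makes the conditional expected gap strictly positive; combined with $\Pr(A_r)>0$ and the always-true pointwise inequality $\sup_{\Fp^{x_0}}\geq\sup_{\Fe^{x_0}}$ off $A_r$, this gives $R_n(\Fp^{x_0})>R_n(\Fe^{x_0})$ as required.
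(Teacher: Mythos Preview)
Your argument is correct and takes a genuinely different route from the paper's. Both proofs rest on the same Taylor-type two-sided estimate $|\varphi(x)-f(x_0)-\nabla f(x_0)^T(x-x_0)|\le\tfrac{\beta}{2}\|x-x_0\|^2$ for $\varphi\in\Fe^{x_0}$, and both conclude via a positive-probability event together with the trivial inequality off that event (the paper packages this as Lemma~\ref{lemma-rademacher}). The difference is in how the gap is produced. The paper splits into the cases $\|\nabla f(x_0)\|>0$ and $\|\nabla f(x_0)\|=0$, fixes a cone around $x_0$ and the single sign pattern $\sigma_i\equiv -1$, and for each $g\in\Fe^{x_0}$ exhibits an explicit competitor in $\Fp^{x_0}$ (the reflection $\tilde g=2g(x_0)-g$ in Case~1, or an affine function with a chosen nonzero slope in Case~2). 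You avoid the case split entirely: you lower-bound $\sup_{\Fp^{x_0}}$ uniformly via the affine subfamily $\{\varphi_w:\|w\|\le\alpha\}$, upper-bound $\sup_{\Fe^{x_0}}$ by the Taylor remainder, and then average over $\sigma$ so that the linear-in-$\sigma$ terms vanish, invoking Khintchine--Kahane to control $\E_\sigma\|\bar v\|$ from below. Your route is cleaner and delivers an explicit quantitative gap $\tfrac{\alpha c r}{2\sqrt n}-\tfrac{\beta r^2}{2}$ on the event $A_r$, at the price of importing Khintchine--Kahane; the paper's proof is more elementary but needs the two-case analysis. One small omission: you should also note (as the paper does) that $R_n(\Fp^{x_0})<\infty$, which follows because the constraints force $\Fp^{x_0}$ to be uniformly bounded on the compact set~$\Xcal$; without this, the strict inequality is not meaningful.
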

\noindent
{\bf Proof sketch} (full proof in Appendix \ref{proof-prop-gradient-curvature}). 
In this proof we exploit that functions in $\mathcal{D}_{\alpha,\beta}$ cannot change arbitrarily fast. We construct an event $A$, in which all Rademacher variables $x_1,\dots,x_n$ fall into a cone and have labels $\sigma_1, ..., \sigma_n$ that are as different as possible from labels that can be achieved by $f$. Due to the smoothness-assumption, 
$\sum_{i=1}^n\sigma_ig(x_i)$ is then larger if we allow $g \in \Fp^{x_0}$ than if we only have $g \in \Fe^{x_0}$. By a technical lemma we then show that as soon as such a situation happens with positive probability, the Rademacher complexities of the two sets cannot be equal. 
\ulesqed\\

\begin{proposition}[\textbf{Gradient explanations on constrained  polynomials are informative}]
\label{prop-gradient-restricted-polynomials}
Consider a compact data space $\Xcal\subseteq\R^d$ with a probability distribution $\Pr$ that has a positive density. Let $\Fcal$ be the class of polynomials with maximal degree $D\geq1$ and with coefficients bounded by $M$
    \begin{align*}
    \mathcal{P}_{D,M}:=\Big\{f:\mathcal{X}\to\mathbb{R}\condon &\forall x\in\mathcal{X}:f(x)=\sum_{|\alpha|\leq D}a_\alpha \left(x^{(1)}\right)^{\alpha_1}\cdots \left(x^{(d)}\right)^{\alpha_d},\\ &\forall \alpha \text{ with }|\alpha|\leq D: |a_\alpha|\leq M\Big\},
    \end{align*}
    where $\alpha$ is a multi-index. Over this class, gradient explanations $\exgrad(f,x_0)$ are non-trivial and informative for all $f \in \Fcal$, for all $x_0 \in \operatorname{int}(\Xcal)$ and all $n\geq1$.
\end{proposition}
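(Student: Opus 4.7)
The plan is to mirror the event-based argument of Proposition~\ref{prop-gradient-curvature} inside $\mathcal{P}_{D,M}$. On the compact input space $\Xcal$, the coefficient bound $M$ induces uniform bounds on both the gradient norm and the Lipschitz constant of the gradient of every $f\in\mathcal{P}_{D,M}$, so $\mathcal{P}_{D,M}|_\Xcal\subseteq\mathcal{D}_{\alpha,\beta}$ for constants $\alpha,\beta$ depending only on $D,M,d,\diam(\Xcal)$. Since the inclusion is strict (the coefficient-bounded class is much smaller than $\mathcal{D}_{\alpha,\beta}$), we cannot simply invoke Proposition~\ref{prop-gradient-curvature} as a black box; we re-run its construction and verify that all perturbations we use stay inside the coefficient-bounded polynomial class.

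For non-triviality, consider the linear perturbation $p_\eta(x):=\eta\,u^\top(x-x_0)$ with $u\in\R^d$ of unit norm and $\eta\ne 0$. By choosing the direction $u$ and the sign of $\eta$ suitably, and taking $|\eta|$ small enough, we can ensure that the coefficients of $f+p_\eta$ remain in $[-M,M]$; at a generic interior point of $\mathcal{P}_{D,M}$ any small $|\eta|$ works, and at boundary points there is always an admissible direction because the degree-$\le 1$ coefficients can always be moved towards zero. Then $f+p_\eta\in\Fp^{x_0}\setminus\Fe^{x_0}$, so $\Fe^{x_0}\varsubsetneq\Fp^{x_0}$.

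For informativeness, fix such $u,\eta$ and a radius $r>0$ to be chosen later (as a function of $n$), and define the event
\ba
A\;:=\;\{x_1,\dots,x_n\in B_r(x_0)\}\;\cap\;\Bigl\{\tfrac{1}{n}\sum_{i=1}^n\sigma_i\,u^\top(x_i-x_0)\geq\gamma\Bigr\},
\ea
for a threshold $\gamma$ of order $r/\sqrt{n}$. The first part has positive probability because $\Pr$ has a positive density, and conditionally on that, a standard anti-concentration (Paley–Zygmund) argument applied to the symmetric Rademacher sum supplies the second part, so $\Pr(A)>0$. On $A$, the polynomial $g:=f+p_\eta\in\Fp^{x_0}$ achieves
\ba
\frac{1}{n}\sum_i\sigma_i g(x_i)\;=\;\frac{1}{n}\sum_i\sigma_i f(x_i)+\eta\gamma,
\ea
while for every $h\in\Fe^{x_0}$ the equalities $h(x_0)=f(x_0)$ and $\nabla h(x_0)=\nabla f(x_0)$, together with the Lipschitz-gradient bound $\beta$, yield the quadratic Taylor control $|h(x_i)-f(x_i)|=O(\beta r^2)$ on $B_r(x_0)$, hence $\sup_{h\in\Fe^{x_0}}\tfrac{1}{n}\sum_i\sigma_i h(x_i)\leq\tfrac{1}{n}\sum_i\sigma_i f(x_i)+O(\beta r^2)$. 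Choosing $r$ small enough that $\eta\gamma$ dominates the $O(\beta r^2)$ term produces a uniform strictly positive gap on $A$. Because $\Fe^{x_0}\subseteq\Fp^{x_0}$, the integrand $\sup_{\Fp^{x_0}}-\sup_{\Fe^{x_0}}$ is almost surely nonnegative, so strict positivity on a positive-probability event lifts to $R_n(\Fe^{x_0})<R_n(\Fp^{x_0})$.

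The main obstacle is the joint choice of the perturbation size $\eta$, the ball radius $r$, and the anti-concentration threshold $\gamma$: $\eta$ must be a constant (independent of $n$) that is admissible at $f$ even when $f$ sits at a corner of the coefficient polytope, $\gamma\sim r/\sqrt{n}$ is forced by anti-concentration, and $r$ must then be taken small enough that $\eta\gamma\gg\beta r^2$, leaving only a gap of order $1/n$. Making these constants work uniformly for a fixed $f\in\mathcal{P}_{D,M}$ while keeping the perturbation inside the coefficient-bounded class is the delicate bookkeeping step where the proof genuinely differs from the one for $\mathcal{D}_{\alpha,\beta}$.
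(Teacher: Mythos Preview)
Your diagnosis is correct: the paper's proof simply shows $\mathcal{P}_{D,M}\subseteq\mathcal{D}_{\alpha,\beta}$ and then writes ``the result follows from Proposition~\ref{prop-gradient-curvature}'', treating it as a black box. But informativeness over a superclass does not descend to subclasses, since both $\Fp^{x_0}$ and $\Fe^{x_0}$ shrink when we pass from $\mathcal{D}_{\alpha,\beta}$ to $\mathcal{P}_{D,M}$. Re-running the argument is indeed required, and your choice of the $\Fp^{x_0}$-witness as a small linear perturbation $g=f+p_\eta$ is the right one here --- the reflection $\tilde g=-g+2g(x_0)$ used in the proof of Proposition~\ref{prop-gradient-curvature} can leave $\mathcal{P}_{D,M}$, because its constant coefficient is $2f(x_0)-a_0$, which need not lie in $[-M,M]$. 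Your upper bound on $\sup_{\Fe^{x_0}}$ via the $\beta$-Lipschitz-gradient Taylor control carries over unchanged, since $\Fe^{x_0}(\mathcal{P}_{D,M})\subseteq\Fe^{x_0}(\mathcal{D}_{\alpha,\beta})$.

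The Paley--Zygmund step, however, is an unnecessary complication. If you use the deterministic cone event from Proposition~\ref{prop-gradient-curvature} directly --- all $\sigma_i=+1$ and all $x_i\in B_r(x_0)\cap\{x:u^\top(x-x_0)\geq r/2\}$ --- you obtain $\tfrac{1}{n}\sum_i\sigma_i\,u^\top(x_i-x_0)\geq r/2$ for free, so the comparison $\eta\gamma>C\beta r^2$ becomes $r<\eta/(2C\beta)$, a constant independent of $n$. This removes the shrinking radius, the $1/n$-sized gap, and the ``delicate bookkeeping'' you flag at the end: $\eta$ is fixed once by the coefficient-polytope argument, $r$ is fixed once, and Lemma~\ref{lemma-rademacher} applies with an $n$-independent event.
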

\noindent
{\bf Proof sketch} (full proof in Appendix \ref{proof-prop-gradient-restricted-polynomials}). First, we prove that there exist $\alpha$ and $\beta$ such that $\Pcal_{D,M}\subseteq\mathcal{D}_{\alpha,\beta}$. The result then follows from the previous proposition. \ulesqed\\

The propositions above show that gradient explanations are often informative on classes of simple functions. However, we might want to also keep in mind the example of the noisy linear functions in Proposition \ref{prop-gradient-noisy-linear}: here the functions had a structure that is ``globally simple'' (linear), but ``locally complex'' (large wiggling). So one has to be very careful which notion of simplicity to use. As we will see below, this notion also changes depending on which type of explanation we consider.

\subsection{Gradient explanations are informative if they are locally stable}
\label{sec-locally-stable}

Above we have seen that one way to make explanations informative is to restrict the space $\Fcal$ of functions. In this section, we point out a different approach: Instead of restricting the function space to make explanation $\ex$ informative, we can also keep the function space, but ``enrich'' the explanation $\ex$ with further information. A particularly promising approach consists in asserting that explanations only vary slowly around the data point of interest.

\begin{definition}[\textbf{Locally stable explanations}] Consider an explanation algorithm that generates unique explanations $\ex(f,x)$. Let $|\!|\!| \cdot |\!|\!| $ be a norm that allows to measure the distance between two explanations. For $r>0, \delta >0$, an  $r$-$\delta$-locally stable explanation $\ex^\#(f,x_0)$ of $f$ at $x_0$ consists of a triple 
$\ex^\#(f,x_0):=(\ex(f,x_0),r,\delta)$ such that explanations only vary by a factor of $\delta$ in the ball of radius $r$ around $x_0$:  
\ba
\forall x,y \in B_r(x_0): \; 
|\!|\!| \ex(f,x) - \ex(f,y)|\!|\!|  \leq \delta\cdot\|x-y\|. 
\ea
\end{definition}

This notion captures the intuitively appealing idea that an explanation should not change wildly between neighboring data points, but should be somewhat consistent in local neighborhoods 
\citep{alvarez2018towards}. Note that local stability is different in spirit from what people often define as robust explanations: the robustness of an explanation algorithm measures how much an explanation a fixed data point $x_0$ changes if we slightly perturb the function,  slightly change the underlying data distribution, or slightly change parameters of our explanation algorithm. Stability in contrast measures how much the explanations change if we fix $f$, the distribution and the algorithm, but slightly vary the input data point. \\

As the next proposition shows, locally stable gradient explanations are informative even over the space of all differentiable functions with bounded gradients. 

\begin{proposition}[\textbf{Locally stable gradient explanations are informative}]\label{prop-gradient-locally-similar}
Consider a compact data space $\Xcal\subseteq\R^d$ with a probability distribution $\Pr$ that has a positive density. Let 
$\Fcal$ be the class of differentiable functions with bounded gradients
$$\Dcal_\alpha:=\left\{f:\Xcal\to\mathbb{R}\ \big|\ f \text{ differentiable},\, \forall x\in\Xcal:\|\nabla f(x)\|\leq\alpha\right\}.$$ 
Consider a point $x_0 \in \operatorname{int}(\Xcal)$ and $r>0$ such that $B_r(x_0)\subseteq \Xcal$, and let $\delta>0$. Then, 
over the class of $\Dcal_\alpha$, $r$-$\delta$-locally stable gradient explanations $\exgrad^\#(f,x_0)$ are informative for all $f\in\Fcal$, all $n\geq1$.
\end{proposition}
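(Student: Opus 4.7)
The plan is to mirror the template of Proposition~\ref{prop-gradient-curvature}: since $\Fe^{x_0}\subseteq\Fp^{x_0}$ by construction we get $R_n(\Fe^{x_0})\le R_n(\Fp^{x_0})$ for free, so it suffices to find a positive-probability event on which the Rademacher supremum over $\Fp^{x_0}$ strictly exceeds that over $\Fe^{x_0}$, and then invoke the same technical lemma (positive-probability strict inequality of sups lifts to strict inequality of expectations) to conclude $R_n(\Fe^{x_0})<R_n(\Fp^{x_0})$. Informativeness then implies non-triviality, so nothing separate is needed there.

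The key analytic ingredient is a quadratic Taylor control for members of $\Fe^{x_0}$. For any $g\in\Fe^{x_0}$ and $x\in B_r(x_0)$, the identity $g(x)=g(x_0)+\int_0^1\nabla g(x_0+t(x-x_0))\cdot(x-x_0)\,dt$, combined with $\nabla g(x_0)=\nabla f(x_0)$ and the $\delta$-Lipschitzness of $\nabla g$ on $B_r(x_0)$, yields
\[
\bigl|g(x)-f(x_0)-\nabla f(x_0)\cdot(x-x_0)\bigr|\;\le\;\tfrac{\delta}{2}\|x-x_0\|^2.
\]
Thus every $g\in\Fe^{x_0}$ is confined to a thin parabolic tube around the tangent hyperplane $L(x):=f(x_0)+\nabla f(x_0)\cdot(x-x_0)$, whereas $\Fp^{x_0}$ only enforces $\|\nabla g\|\le\alpha$ and $g(x_0)=f(x_0)$, which allows linear deviations of order $\alpha\|x-x_0\|$ away from $L$. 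This is the asymmetry I will exploit.

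Assuming $\alpha>0$, I pick a unit vector $e$ with $\nabla f(x_0)\cdot e<\alpha$ — such $e$ always exists since $\|\nabla f(x_0)\|\le\alpha$, with strict inequality attained by any $e$ not parallel to $\nabla f(x_0)$ — and take the affine competitor $\varphi^\star(x):=f(x_0)+\alpha\,e\cdot(x-x_0)$. Then $\varphi^\star\in\Fp^{x_0}$ because it is differentiable on $\Xcal$ with $\|\nabla\varphi^\star\|\equiv\alpha$ and $\varphi^\star(x_0)=f(x_0)$. I next choose $s\in(0,r)$ and $\rho\in(0,s)$ small enough that $\tfrac{s}{2}(\alpha-\nabla f(x_0)\cdot e)>2\alpha\rho+\tfrac{\delta}{2}(\tfrac{s}{2}+\rho)^2$, let $U$ be the open $\rho$-ball around $x_0+\tfrac{s}{2}e$ inside $B_r(x_0)\cap\operatorname{int}(\Xcal)$, and consider the event $A$ that all sample points fall into $U$ and all $\sigma_i=+1$; by the positive density of $\Pr$ and independence of the signs, $\Pr(A)>0$. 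On $A$, combining the affine formula for $\varphi^\star$ with the Taylor bound above produces a uniform pointwise gap $\varphi^\star(x_i)-g(x_i)\ge c(s,\rho)>0$ for every $g\in\Fe^{x_0}$ and every $i$, so $\sup_{\Fp^{x_0}}\sum_i\sigma_i g(x_i)\ge\sum_i\sigma_i\varphi^\star(x_i)\ge\sup_{\Fe^{x_0}}\sum_i\sigma_i g(x_i)+n\,c(s,\rho)$, which is the required strict separation.

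The main obstacle is really just the arithmetic of balancing constants: the linear-in-$s$ gap $\tfrac{s}{2}(\alpha-\nabla f(x_0)\cdot e)$ must dominate both the quadratic Taylor remainder and the $O(\alpha\rho)$ slack coming from the sample points lying in a ball rather than at a single point. Conveniently, the competitor $\varphi^\star$ can be taken affine on all of $\Xcal$ with no bump-function surgery, because $\Fp^{x_0}$ constrains only the global gradient norm and not any local stability — and this is exactly the asymmetry between $\Fp^{x_0}$ and $\Fe^{x_0}$ that the augmented explanation exploits to become informative.
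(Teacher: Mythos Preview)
Your proposal is correct and self-contained, but it takes a somewhat different route from the paper's proof. The paper argues modularly: it first notes that $\alpha$-bounded gradients make $\Fp^{x_0}$ uniformly bounded on the compact $\Xcal$ (so $R_n(\Fp^{x_0})<\infty$, which Lemma~\ref{lemma-rademacher} requires), then observes that the $r$-$\delta$-stability forces every $g\in\Fe^{x_0}$ to have $\delta$-Lipschitz gradient on $B_r(x_0)$, and simply conditions on the event $A=\{x_1,\dots,x_n\in B_r(x_0)\}$ so that the situation becomes exactly that of Proposition~\ref{prop-gradient-curvature}. No new competitor is built; the strict inequality $R_n(\Fe^{x_0}\mid A)<R_n(\Fp^{x_0}\mid A)$ is inherited wholesale from that earlier result.

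You instead unpack the mechanism explicitly: you derive the quadratic tube bound for $\Fe^{x_0}$ and exhibit a single affine competitor $\varphi^\star(x)=f(x_0)+\alpha\,e\cdot(x-x_0)$ in $\Fp^{x_0}$ that beats every $g\in\Fe^{x_0}$ on a suitably placed small ball. This is cleaner in one respect --- the affine $\varphi^\star$ lives globally in $\Dcal_\alpha$ with no smoothing or reflection tricks, whereas the paper's Proposition~\ref{prop-gradient-curvature} argument uses the reflected function $\tilde g=-g+2g(x_0)$ and a two-case analysis depending on whether $\nabla f(x_0)=0$. Your construction covers both cases uniformly via the choice of $e$. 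The paper's reduction, on the other hand, buys brevity and avoids re-doing any arithmetic. One small omission on your side: you should state why $R_n(\Fp^{x_0})<\infty$ (bounded gradients plus compact $\Xcal$ plus the anchor $g(x_0)=f(x_0)$), since Lemma~\ref{lemma-rademacher} needs it; and your justification for the existence of $e$ with $\nabla f(x_0)\cdot e<\alpha$ is slightly loose in $d=1$ (there all unit vectors are ``parallel'' to $\nabla f(x_0)$, but $e=-\sgn f'(x_0)$ still works).
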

\noindent
{\bf Proof sketch} (full proof in Appendix \ref{proof-prop-gradient-locally-similar}). The locally stable explanation provides a ball around $x_0$ on which the function $f$ has $\delta$-Lipschitz gradients. The proof then works similar as the ones before. \ulesqed\\

Note that gradient explanations without locality-property are not informative over the class of functions with bounded gradients. Hence, the knowledge about local stability renders explanations informative that would not be informative otherwise.

\subsection{Further extensions}

Above we always considered the case of explanations that return the full gradient of $f$ at point $x_0$. As we have seen, such explanations are not informative on very large function spaces (Propositions \ref{prop-gradient-all} and \ref{prop-gradient-all-polynomials}) or very noisy function spaces (Proposition \ref{prop-gradient-noisy-linear}), but informative on smaller function spaces (Propositions \ref{prop-gradient-curvature} and \ref{prop-gradient-restricted-polynomials}). Many of these results can be carried over to other gradient-based explanations. For example, in Appendix \ref{appendix-gradient-largest-component} we illustrate that similar results can also be proved for the case that the explanation only consists of the largest component of the gradient and its index. We also expect many of these results to carry over to smoothed gradients when smoothing takes place in small local neighborhoods. The case of integrated gradients might behave differently, as we integrate along paths that cover larger regions of the space. \\

In hindsight, one might say that our results for gradient explanations are somewhat obvious: As the explanations depend on arbitrary small regions of the space, knowledge about the gradient cannot be extended to other regions of the space. However, in the next sections we will see that similar results can be proved for explanation algorithms that are much less local.

\section{When are SHAP explanations informative?}

We now turn our attention towards SHAP explanations. 
We consider the scenario of regression with real-valued functions $f$ or classification where the classification $c_f$ is based on thresholding the function $f$. Note that even in the classification case, we compute the SHAP values from $f$ directly, as it is the standard approach for SHAP. 
Typically, we assume that the functions are  bounded (to ensure that the Rademacher complexities are finite), and we assume that the probability distribution on $\Xcal$ has a density (to avoid funny boundary cases). \\

For SHAP explanations, it is much ``less obvious'' whether explanations are informative or not. What is different between SHAP and gradient explanations is the nature of the information that goes into the explanations. While gradient explanations can be computed by just knowing the behavior of $f$ in a local neighborhood of $x_0$, this is not true for SHAP. To the contrary, a SHAP explanation at point $x_0$ looks at marginal or  conditional expectations, and thus takes into account the behavior of $f$ on large parts of the data space. It definitely does not work to simply modify functions in a local neighborhood of $x_0$ to achieve desired SHAP explanations. 
More concretely, each interventional SHAP value depends on the marginal expectations $v(S):=\E_X(f(x^{(S)}, X^{(\bar{S})}))$ for all $S\subseteq[d]$. Consequently, knowing the SHAP values of $x_0$ in addition to $f(x_0)$ gives us a linear system of $d$ equations (one for each SHAP value) and $2^d-1$ marginal expectations as our unknowns (one for each subset of features $S$). 
Whether these restrictions on the marginal expectations of our function now make SHAP explanations informative depends on the function class. In case of a very rich function class, we might be able to fit a sample of data points and still choose the function such that it has the desired marginal expectations. For weaker function classes, these additional restrictions may limit the ability to fit data points, leading to a smaller Rademacher complexity and hence to an informative explanation. 

\subsection{Examples of piecewise constant functions on a grid: full support vs. manifold setting}

To get more intuition about SHAP explanations, let us start again with simple examples: piecewise constant functions on a grid. For the sake of illustration, consider the unit cube $\Xcal = [0,1]^d$ with an equidistant grid with $k-1$ subdivisions along each axis, leading to $k^d$ grid cells. Let $\Hcal_k$ be the set of functions that are piecewise constant on these $k^d$ grid cells and have values in $[-1,1]$.

\begin{proposition}[\textbf{SHAP explanations on a known grid are informative}]
\label{prop-shap-known-grid}
Consider the space $\Xcal=[0,1]^d$ with a probability distribution $\Pr$ that has a positive density.
Let $\Fcal$ be the class of all grid functions $\Hcal_k$ with values in $[-1,1]$ as described above. Over this class, the interventional SHAP explanation $\exshap(f,x_0)$ is informative for all $f\in\Hcal_k$, all $x_0\in\Xcal$ and all $n\ge k^d-1$. 
\end{proposition}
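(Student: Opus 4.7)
The plan is to identify any function $g \in \Hcal_k$ with the vector of its cell values $v = (v_C)_C \in [-1,1]^{k^d}$. Under this identification, $\Fp^{x_0}$ is the box obtained by freezing the coordinate $v_{C_{x_0}}$ to $f(x_0)$, and $\Fe^{x_0}$ is its intersection with the affine subspace defined by the $d$ interventional SHAP equations at $x_0$ being equal to those of $f$. Each SHAP value is a linear functional of $v$, since every marginal expectation $v(S,x_0)$ is a linear combination of cell values weighted by marginal cell probabilities.

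First I would verify that $\Fe^{x_0}$ is a strictly lower-dimensional slice of $\Fp^{x_0}$. The efficiency axiom gives $\sum_j \Phi_j(x_0) = f(x_0) - \E(f(X))$, so knowing $f(x_0)$ together with all $\Phi_j(x_0)$ determines $\E(f(X)) = \sum_C p_C v_C$, where $p_C := \Pr(X \in C)$. Because $\Pr$ has positive density, every $p_C > 0$, so this is a genuinely non-trivial linear equation in the $k^d - 1$ free coordinates of $v$. Consequently there exists at least one vertex $v^*$ of the $(k^d - 1)$-dimensional box $\Fp^{x_0}$, i.e.\ a vector with $v^*_C \in \{-1,+1\}$ for every $C \neq C_{x_0}$ and $v^*_{C_{x_0}} = f(x_0)$, that violates this equation and hence lies outside $\Fe^{x_0}$.

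Next I would exploit the condition $n \geq k^d - 1$ together with the positive density of $\Pr$ to argue that the event $A$ in which the sample $(x_1, \ldots, x_n)$ places at least one point into each of the $k^d - 1$ free cells has strictly positive probability. On $A$, I define a Rademacher pattern by $\sigma_i := v^*_C$ whenever $x_i$ lies in a free cell $C$, with arbitrary signs on any $x_i \in C_{x_0}$; this set of $\sigma$ has probability at least $2^{-n}$. For such a pattern, the cell-sums $s_C := \sum_{i : x_i \in C} \sigma_i$ satisfy $s_C = v^*_C \cdot |\{i : x_i \in C\}| \neq 0$ for every free cell, so the linear functional $v \mapsto \sum_C v_C s_C$ on the box $\Fp^{x_0}$ has $v^*$ as its unique maximizer. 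Because $v^* \notin \Fe^{x_0}$, the supremum of the same quantity over $\Fe^{x_0}$ is strictly smaller.

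Putting these pieces together, on the joint event of positive probability just described the inner supremum in the empirical Rademacher complexity is strictly larger for $\Fp^{x_0}$ than for $\Fe^{x_0}$, while the non-strict inequality holds everywhere. Integrating over the sample and the Rademacher variables then yields $R_n(\Fe^{x_0}) < R_n(\Fp^{x_0})$ and proves informativeness. The step I expect to require the most care is the second paragraph: one must rule out the degenerate possibility that the SHAP equations impose no fresh linear relation beyond $v_{C_{x_0}} = f(x_0)$, and this is exactly where the positive-density assumption together with the efficiency axiom does all the work.
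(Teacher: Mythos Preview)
Your proposal is correct and follows essentially the same route as the paper: both arguments use the efficiency axiom to pin down $\E(f(X))$ as a nontrivial linear constraint on the cell values, pick a $\{-1,+1\}$-valued pattern on the free cells that violates it, and then condition on the positive-probability event that every free cell receives at least one sample point to force a strict gap between the two suprema. Your write-up is in fact a bit tidier than the paper's --- you make the unique-maximizer step explicit and you handle sample points landing in $C_{x_0}$, whereas the paper leaves both of these implicit --- but the underlying idea is identical.
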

\noindent
\textbf{Proof sketch} (full proof in Appendix \ref{proof-shap-grid}) Note that the expected value of $f$ is given by $\E(f(X)) = f(x_0)-\sum_{j=1}^d\Phi(x_0)$ due to the efficiency axiom \eqref{eq-efficiency-axiom}. This implies that every function in $\Fe^{x_0}$ has the same expected value as $f$. If each grid cell besides the one of $x_0$ contains a Rademacher point and the regarding labels are the same within each cell, there is a function $g\in\Fp^{x_0}$ which can interpolate the points. However, as the value of $g$ in each grid cell is then known, so is $\E(g(X))$.  So, if the labels $\sigma_i$ are chosen accordingly, $\E(g(X))\ne\E(f(X))$ and hence $g\notin\Fe^{x_0}$. This results in a smaller Rademacher complexity. \ulesqed \\

\begin{example}[\textbf{Non-informative SHAP explanations on manifolds}]
\label{prop-shap-manifold}
Consider the space $\Xcal = [0,1]^2$, but with probability distribution $\Pr$ that is the uniform distribution on the 1-dimensional manifold $\{ x \in [0,1]^2\condon  x_1=x_2\}$. Consider the $k\times k$ grid and let $\Fcal$ be the class of all mean-centered, piecewise constant functions with values in $[-1,1]$,
\ba
\bar{\Hcal}_k = \{f\in\Hcal_k \condon \E(f) = 0\}.
\ea
Over this class, interventional SHAP explanations $\exshap(f,x_0)$ are not informative for all $f\in\bar{\Hcal}_k$, all $x_0\in [0,1]^2$ and all $n\in\N$.
\end{example}
\noindent
{\bf Proof sketch} (full proof in Appendix \ref{proof-prop-shap-manifold}) Interventional SHAP also depends on the value of the function outside of the support of the data. Hence, by changing the function in cells that do not contain the diagonal we can manipulate the SHAP values without changing the predictions of $f$ on the diagonal. Using this, for any function $g\in\Fp^{x_0}$ we can find a function $\tilde{g}\in\Fe^{x_0}$ that has the same values on the data-manifold. Thus, the Rademacher complexity of $\Fe^{x_0}$ is the same as the one of $\Fp^{x_0}$. \ulesqed\\

The two examples above outline important mechanisms for SHAP explanations. As opposed to gradient explanations, which only depend on the local neighborhood of the data point to be explained, SHAP explanations depend on the data distribution on larger parts of the space. In the first example, we can see that to determine SHAP explanations, we need to take into account function values in other grid cells as well. The other way round, knowing and fixing the SHAP explanations at a point $x_0$ restricts the values that the function $f$ can take at other grid cells, making the explanation informative. In Example \ref{prop-shap-manifold}, the very same mechanism is being exploited to construct an example where explanations are not informative. As the data lives on a low-dimensional manifold, we can set function values outside the data distribution to achieve a particular behavior of explanations on the data.

\subsection{On complex functions, SHAP explanations are not informative}

We now move towards more realistic cases. Denote by $\Tcal_\infty$ the set of all decision trees on $\R^d$ with axis-parallel splits, arbitrary depth and values in $[-1,1]$. For $K \in \Nat$, denote by $\Tcal_K$ the subset of trees that have depth at most $K$. Our first result is that SHAP explanations are not informative 
on trees with arbitrary depth. As a second example, we turn our attention to the class of generalized additive models (GAMs). 
A GAM on $\R^d$ is a function of the form 
\ba
g(x) = \sum_{j=1}^d g_j \left(x^{(j)}\right),
\ea
where the functions $g_j$ are called the component functions of the GAM. It is well known \citep{BorLux_shap_2023} that interventional SHAP recovers the contributions of the component functions: $\Phi_j(x) = g_j(x) - \E(g_j(X^{(j)}))$. 
So in some sense, GAMs are the prime example of a function class where SHAP explanations should be considered successful 
(see also \citet{enouen2025instashap}, where the authors suggest to fit GAMs to judge the validity of SHAP explanations). 
We will consider the function class $\Gcal_\infty$ that consist of GAMs with values in $[-1,1]$ whose component functions are decision trees with unbounded depth.

\begin{proposition}[\textbf{SHAP explanations on deep decision trees are not informative}] \label{prop-shap-trees-arbitrary} 
Consider $\Xcal=\R^d$  with a probability distribution $\Pr$ that has a density. Let $\Fcal$ be the class $\Tcal_\infty$ of all trees with values in $[-1,1]$ and arbitrary depth. Then, for every $x_0\in\Xcal$, every function $f\in\Tcal_\infty$ and every $n\in\N$, the SHAP explanation $\exshap(f,x_0)$ is not informative with respect to $n$. 
\end{proposition}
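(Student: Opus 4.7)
The plan is to show that both Rademacher complexities equal $1$. Since every $g \in \Tcal_\infty$ satisfies $\|g\|_\infty \leq 1$, both $R_n(\Fp^{x_0})$ and $R_n(\Fe^{x_0})$ are bounded above by $1$. For the lower bound on $R_n(\Fp^{x_0})$, almost surely the points $x_0, x_1, \ldots, x_n$ are pairwise distinct (because $\Pr$ has a density). Given such a realization and any $\sigma \in \{-1,+1\}^n$, one can isolate each point in its own tiny axis-aligned box and define a tree in $\Fp^{x_0}$ that equals $f(x_0)$ on the $x_0$-box and $\sigma_i$ on the $x_i$-box, so $\sup_{g \in \Fp^{x_0}} \tfrac{1}{n}\sum_i \sigma_i g(x_i) = 1$ almost surely, whence $R_n(\Fp^{x_0}) = 1$. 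The task is then to produce, for almost every sample and every $\sigma$, a tree $\tilde g \in \Fe^{x_0}$ with $\tilde g(x_i) = \sigma_i$ for every $i \geq 1$, which would force $R_n(\Fe^{x_0}) \geq 1$ as well.

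For the construction I would start from $\tilde g = f$, which trivially sits in $\Fe^{x_0}$ and reproduces every interventional value function $v_f(S, x_0)$. Refining the partition underlying $f$, I would isolate $x_0, x_1, \ldots, x_n$ into pairwise disjoint axis-aligned boxes $B_0, B_1, \ldots, B_n$, each contained in a single leaf of $f$. On $B_i$ for $i \geq 1$ I overwrite $\tilde g \equiv \sigma_i$, and on $B_0$ I keep $\tilde g \equiv f(x_0)$. This yields $\tilde g(x_i) = \sigma_i$ at the price of a perturbation $\epsilon_S := v_{\tilde g}(S, x_0) - v_f(S, x_0)$ for each $S \subsetneq [d]$, with $|\epsilon_S| \leq 2 \sum_{i=1}^n \Pr((x_0^S, X^{\bar S}) \in B_i)$, which can be driven arbitrarily close to $0$ by shrinking the $B_i$.

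To cancel $\epsilon_S$ exactly, I would reserve $2^d - 1$ further correction boxes $\{C_S\}_{S \subsetneq [d]}$, all pairwise disjoint and disjoint from the $B_i$, each contained in a single leaf of $f$. The key design choice is that the projection $C_S^{(j)}$ contains $x_0^{(j)}$ if and only if $j \in S$. Overwriting $\tilde g$ on $C_T$ by $f|_{C_T} + \delta_T$ shifts $v_{\tilde g}(S, x_0)$ by $\sum_T \delta_T \cdot p_{C_T}(S)$, where $p_{C_T}(S) = \Pr(X^{\bar S} \in C_T^{\bar S})$ when $x_0^S \in C_T^S$ and $p_{C_T}(S) = 0$ otherwise. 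The design forces $p_{C_T}(S) \neq 0$ exactly when $S \subseteq T$, so the coefficient matrix is triangular under set inclusion with strictly positive diagonal entries $\Pr(X^{\bar S} \in C_S^{\bar S})$. The unique solution $\delta$ of the system $\sum_T \delta_T\, p_{C_T}(S) = -\epsilon_S$ satisfies $\|\delta\|_\infty = O(\|\epsilon\|_\infty)$, so by shrinking the $B_i$ first it is small enough to keep every leaf value $f|_{C_T} + \delta_T$ inside $[-1,1]$. The resulting $\tilde g$ is a finite-depth axis-parallel tree with values in $[-1,1]$, agrees with $f$ at $x_0$, reproduces $v_f(S, x_0)$ for every $S$ and hence every interventional SHAP coefficient, and attains $\sigma_i$ at every $x_i$; so $\tilde g \in \Fe^{x_0}$, yielding $R_n(\Fe^{x_0}) \geq 1$.

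The main obstacle is the placement of the correction cells so that the coefficient matrix is invertible and the adjustments $\delta_T$ stay small enough to respect $\|\tilde g\|_\infty \leq 1$. The triangular-by-inclusion construction makes invertibility transparent, and positivity of the diagonal entries is arranged by placing each $C_S^{\bar S}$ inside the support of the marginal of $\Pr$ on coordinates $\bar S$, which is feasible because $\Pr$ has a density and only finitely many sample points need to be avoided. The remaining pieces --- the shrinkage estimates for $\epsilon$ and $\delta$ and the verification that finitely many axis-aligned refinements still lie in $\Tcal_\infty$ --- are routine.
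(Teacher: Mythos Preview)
Your strategy is exactly the paper's---show that $\Fe^{x_0}$ interpolates every binary labeling by overwriting $f$ on tiny boxes $B_i\ni x_i$ and then repairing the value functions on auxiliary boxes---but you miss the observation that collapses your $2^d-1$ corrections to a single one. For $S\neq\emptyset$, the interventional value function $v(S,x_0)=\E_{X^{\bar S}}\big[f(x_0^S,X^{\bar S})\big]$ depends on $f$ only through its values on the ``cross'' $D:=\bigcup_{j}\{x:x^{(j)}=x_0^{(j)}\}$; only $v(\emptyset,x_0)=\E[f(X)]$ sees the complement of $D$. Since $\Pr$ has a density, the sampled $x_i$ almost surely share no coordinate with $x_0$, so once the $B_i$ are small enough they lie in $D^c$ and your perturbations satisfy $\epsilon_S=0$ \emph{exactly} for every $S\neq\emptyset$, not merely ``arbitrarily close to $0$''. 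The paper therefore keeps $\tilde g=g$ on $D$ and fixes only the mean with one box $C\subset D^c$; your triangular system, while correctly set up and invertible, would solve to $\delta_T=0$ for all $T\neq\emptyset$ anyway, so the careful placement of cells $C_T$ straddling the slices $\{x^{(j)}=x_0^{(j)}\}$ is never exercised.

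The extra machinery also leaves a gap you did not close: asserting that $\|\delta\|_\infty$ small forces $f|_{C_T}+\delta_T\in[-1,1]$ fails whenever $f$ attains $\pm1$ on some $C_T$, since a perturbation of the wrong sign violates the range constraint no matter how small. With a single correction box placed freely in $D^c$ this boundary issue is much easier to manage (one can pick $C$ so that the starting value is in the open interval), whereas your boxes $C_T$ for $T\neq\emptyset$ are pinned to the cross and offer less freedom.
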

\noindent
{\bf Proof sketch} (full proof in Appendix \ref{proof-prop-shap-trees-arbitrary}). By the formula for the SHAP values, we can see that the SHAP values in $x_0$ only depend on the values $f(x)$ for points $x$ that share at least one coordinate with $x_0$ and on $\E(f(X))$. Trees of unlimited depth can fit an arbitrarily big sample of points with arbitrarily small leaves. In particular, we can choose them small enough such that they do not contain any points that share coordinates with $x_0$. Also, if the leaves are small enough, the expected value of our tree can still take any value. This means that although all functions in $\Fe^{x_0}$ must have the same SHAP values at $x_0$ as $f$, the class $\Fe^{x_0}$ can still interpolate points as well as $\Fp^{x_0}$. Hence, the two classes have the same Rademacher complexity and the explanation is not informative. \ulesqed\\

\begin{proposition}[\textbf{SHAP for GAMs with deep tree components is not informative}]\label{prop-shap-gams-arbitrary} 
Consider $\Xcal=\R^d$  with a probability distribution $\Pr$ that has a density. Let $\Fcal$ be the class $\Gcal_\infty$ of all GAMs with values in $[-1,1]$ whose component functions are trees with arbitrary depth. Then, for every $x_0\in\Xcal$, every function $f\in\Gcal_\infty$ and every $n\in\N$, the SHAP explanation $\exshap(f,x_0)$ is not informative with respect to $n$. 
\end{proposition}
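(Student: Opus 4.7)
The plan is to adapt the argument of Proposition~\ref{prop-shap-trees-arbitrary} to the GAM setting: I will construct, for each $g \in \Fp^{x_0}$, a modified $\tilde g \in \Fe^{x_0}$ with $\tilde g(x_i) = g(x_i)$ on every sample point, so that the suprema defining the Rademacher complexities on both sides agree and $R_n(\Fe^{x_0}) = R_n(\Fp^{x_0})$. The crucial structural ingredient is that for interventional SHAP on any GAM $h = h_0 + \sum_j h_j$, one has $\Phi_j^h(x_0) = h_j(x_0^{(j)}) - \E(h_j(X^{(j)}))$. Hence the $d$ SHAP values at $x_0$ constrain each component $\tilde g_j$ only through a single scalar combination of its value at $x_0^{(j)}$ and its marginal expectation; the components are otherwise free.

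First I would use the density of $\Pr$ to assert that, with probability one over the sample, the $n+1$ one-dimensional coordinates $x_0^{(j)}, x_1^{(j)}, \dots, x_n^{(j)}$ are pairwise distinct for every $j \in [d]$, and then place disjoint tiny intervals $L_{j,i}$ around them with probability masses $q_{j,i}$ as small as I wish. Given any $g = g_0 + \sum_j g_j \in \Fp^{x_0}$, I would define $\tilde g_j$ as the tree that takes value $g_j(x_i^{(j)})$ on $L_{j,i}$ for $i = 0, 1, \dots, n$ and a constant $c_j$ on the complement, and set $\tilde g_0 = g_0$. Then $\tilde g(x_i) = g_0 + \sum_j g_j(x_i^{(j)}) = g(x_i)$ at every sample point and at $x_0$, while the SHAP constraint $g_j(x_0^{(j)}) - \E(\tilde g_j(X^{(j)})) = \Phi_j^f(x_0)$ becomes a single linear equation for $c_j$, solvable as long as $\sum_i q_{j,i} < 1$. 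This produces $\tilde g \in \Fe^{x_0}$.

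The main obstacle, and the point that differs most from the tree case, is verifying that $\tilde g$ stays in $[-1,1]$ on all of $\R^d$. At the sample points and at $x_0$ this is automatic since $\tilde g$ agrees with $g$ there; on the ``pure complement'' where no coordinate of $x$ lies in any leaf, the efficiency axiom $\sum_j \Phi_j^f(x_0) = f(x_0) - \E(f)$ implies $\tilde g(x) \to \E(f) \in [-1, 1]$ as the leaf masses shrink. The delicate case is the ``mixed'' region, where some coordinates of $x$ sit in leaves and others do not: a naive choice of the $c_j$'s can push $\tilde g(x)$ outside $[-1,1]$ whenever $|f_j(x_0^{(j)})|$ is large. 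I would handle this by exploiting two extra degrees of freedom in the construction. First, absorb the SHAP correction for all coordinates into a single component, say $\tilde g_1$, while keeping $\tilde g_j$ essentially equal to $g_j$ outside leaves for $j \geq 2$. Second, use the elementary fact that every canonical component of a GAM whose sum is bounded in $[-1,1]$ is itself uniformly bounded, which follows by varying a single coordinate at a time and observing that each summand is then forced to lie in a bounded interval. Together these give a uniform a priori bound on $\tilde g$ in the mixed region. Combined with the trivial inclusion $R_n(\Fe^{x_0}) \leq R_n(\Fp^{x_0})$, the equality $R_n(\Fe^{x_0}) = R_n(\Fp^{x_0})$ follows and the SHAP explanation is not informative.
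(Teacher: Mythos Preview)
Your proposal contains a genuine gap at the step where you ``absorb the SHAP correction for all coordinates into a single component~$\tilde g_1$''. By the very structural fact you cite, for a GAM $h = \sum_j h_j$ the interventional SHAP value $\Phi_j^h(x_0) = h_j(x_0^{(j)}) - \E\!\left(h_j(X^{(j)})\right)$ depends on the $j$-th component \emph{alone}. Hence, if you start from an arbitrary $g \in \Fp^{x_0}$ and keep $\tilde g_j$ ``essentially equal to $g_j$'' for $j \ge 2$, you obtain $\Phi_j^{\tilde g}(x_0) = \Phi_j^g(x_0)$ for $j \ge 2$, and there is no mechanism by which altering $\tilde g_1$ can force these to match $\Phi_j^f(x_0)$; so $\tilde g \notin \Fe^{x_0}$ in general. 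Your initial scheme---replacing each component outside the leaves by a constant $c_j$ solved from the $j$-th SHAP equation---does produce the correct SHAP vector, but as you yourself observe it may violate the $[-1,1]$ bound on mixed regions, and your second fix (canonical components of a bounded GAM are uniformly bounded) is a statement about $g$, not about the newly built $\tilde g$, so it does not close that gap.

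The paper sidesteps the difficulty by reversing the direction of the construction: it fixes some $g \in \Fe^{x_0}$ (for instance $g = f$), leaves $\tilde g_j = g_j$ for all $j \ge 2$ so that $\Phi_j^{\tilde g}(x_0) = \Phi_j^f(x_0)$ holds automatically, and modifies only $\tilde g_1$ on small intervals to hit the binary labels $\sigma_i$ at the sample points while correcting the single scalar $\E\!\left(\tilde g_1(X^{(1)})\right)$ so as to preserve $\Phi_1$. This shows that $\Fe^{x_0}$ is interpolating with respect to binary labels, and Lemma~\ref{lem-Fexplain-interpolating} finishes the argument. In short: you do not need to reproduce an arbitrary $g \in \Fp^{x_0}$ on the sample; it suffices that $\Fe^{x_0}$ can realise the labels $\pm 1$ themselves, and that weaker target makes the single-component modification route viable.
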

\noindent
{\bf Proof sketch} (full proof in Appendix \ref{proof-prop-shap-gams-arbitrary}). The argument of Proposition \ref{prop-shap-trees-arbitrary} still works in a similar way for GAMs. \ulesqed\\

The surprising insight from this proposition is that although SHAP explanations, unlike gradients, depend on a larger part of the input space,  SHAP is not informative for large functions spaces. 

\subsection{SHAP is informative on simple functions}

\begin{proposition}[\textbf{SHAP on shallow decision trees is informative}]
\label{prop-shap-trees-bounded}
    Consider $\Xcal = \R^d$ with a probability distribution $\Pr$ that has a positive density. For a given $K \in \Nat$, let $\Fcal$ be the class $\Tcal_K$ of all trees with values in $[-1,1]$ and depth at most $K$. Then, for every $x_0\in\Xcal$ and every function $f\in\Tcal_K$ there exists an $N\in\N$ such that the SHAP explanation $\exshap(f,x_0)$ is informative with respect to all $n\ge N$. 
\end{proposition}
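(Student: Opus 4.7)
The plan is to use the efficiency axiom of SHAP to reduce the informativeness question to a single scalar constraint on the function's expected value, and then to exhibit a positive-probability sample event on which this constraint strictly lowers the Rademacher supremum.

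First, by the efficiency axiom \eqref{eq-efficiency-axiom}, the quantity $c:=\E(f(X))=f(x_0)-\sum_{j=1}^d \Phi_j(x_0)$ is fully determined by $f(x_0)$ and $\exshap(f,x_0)$, so every $g\in\Fe^{x_0}$ must satisfy $\E(g(X))=c$. Hence $\Fe^{x_0}\subseteq\Fcal_c:=\{g\in\Fp^{x_0}:\E(g(X))=c\}$, and it suffices to prove $R_n(\Fcal_c)<R_n(\Fp^{x_0})$ for $n$ sufficiently large.

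Next, I would fix once and for all an axis-parallel rectangular partition $L_1,\dots,L_{2^K}$ of $\Xcal$ realizable as the leaf system of a tree of depth $K$, with $x_0\in\operatorname{int}(L_1)$ and all $p_j:=\Pr(X\in L_j)>0$ (guaranteed by the positive density of $\Pr$). For $n$ sufficiently large, the multinomial event $A_n$ that every $L_j$ contains at least one sample point has probability bounded away from zero. On $A_n$, restricting attention to trees whose leaves are exactly $L_1,\dots,L_{2^K}$, the Rademacher sum $\frac{1}{n}\sum_i\sigma_i g(x_i)$ collapses to the finite-dimensional problem of maximizing $v_1\bar\sigma_1+\sum_{j\ge2}v_j\bar\sigma_j$, where $\bar\sigma_j:=\frac{1}{n}\sum_{x_i\in L_j}\sigma_i$, with $v_1=f(x_0)$ forced and $|v_j|\le1$, subject additionally to $\sum_j v_j p_j=c$ in $\Fcal_c$. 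The unconstrained maximizer $v_j^*=\operatorname{sign}(\bar\sigma_j)$ for $j\ge 2$ lies on the constraint hyperplane only on a measure-zero set of $\sigma$, so a standard projection bound gives a strictly positive gap between the constrained and unconstrained finite-dimensional suprema on a positive-probability sub-event of $A_n$.

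The main obstacle is upgrading this partition-restricted gap to a gap between the full suprema over $\Fp^{x_0}$ and $\Fcal_c$: a tree in $\Fcal_c$ could in principle use a completely different split structure to recover the lost value. I would close this by using that $\Tcal_K$ has VC/pseudodimension bounded in terms of $K$ and $d$, so for $n$ large enough, uniform convergence of empirical means yields $\bigl|\frac{1}{n}\sum_i g(x_i)-\E(g(X))\bigr|\le\eps_n$ with $\eps_n\to 0$ uniformly in $g\in\Tcal_K$. Hence any $g\in\Fcal_c$ has empirical mean within $\eps_n$ of $c$, whereas on $A_n$ the unconstrained optimizer has empirical mean bounded away from $c$ by a positive constant that is independent of $n$. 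Combined with the bounded range of $g$, this yields a uniform strict gap on $A_n$ and hence $R_n(\Fcal_c)<R_n(\Fp^{x_0})$ for all $n\ge N$ once $N$ is chosen large enough to satisfy the partition-filling, positive-gap, and uniform-convergence requirements simultaneously.
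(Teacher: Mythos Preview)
Your opening move---using the efficiency axiom to reduce the SHAP constraint to the single scalar requirement $\E(g(X))=c$ and then working with the larger class $\Fcal_c\supseteq\Fe^{x_0}$---is exactly how the paper proceeds, and it is the right reduction.

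After that the two arguments diverge. The paper exhibits an explicit geometric event: it places $2^{d-1}$ points in small boxes on one side of $x_0$ and a chain of alternating-label points on the other side, so that any depth-$K$ tree interpolating the sample must spend all $2^K$ leaves inside a tiny region. This forces the tree to be identically $+1$ on the complement of that region, which in turn forces $\E(g(X))>0$ and hence $g\notin\Fcal_c$ (after a WLOG sign normalization). Your route via uniform convergence is more abstract and in principle more portable, but as written it has a genuine gap.

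The problem is in the last paragraph. Uniform convergence over $\Tcal_K$ controls $\frac{1}{n}\sum_i g(x_i)$ (the empirical mean), whereas the Rademacher objective is $\frac{1}{n}\sum_i \sigma_i g(x_i)$. These are different quantities, and knowing that every $g\in\Fcal_c$ has empirical mean close to $c$ does not by itself upper-bound its Rademacher sum. Your sentence ``the unconstrained optimizer has empirical mean bounded away from $c$ \ldots\ combined with the bounded range of $g$, this yields a uniform strict gap'' is precisely where the argument breaks: even if the $\Fp^{x_0}$-optimizer has empirical mean far from $c$, a \emph{different} tree in $\Fcal_c$ with empirical mean near $c$ could still achieve the same Rademacher value. (Also, the ``measure-zero set of $\sigma$'' remark is off: the $\sigma_i$ are discrete.)

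There is an easy repair that rescues your approach and in fact makes the fixed partition $L_1,\dots,L_{2^K}$ unnecessary. Take the sub-event on which \emph{all} $\sigma_i$ have the same sign, say $\sigma_i\equiv +1$ when $c<1$ (and $\sigma_i\equiv -1$ when $c>-1$; at least one case always applies). On this event the Rademacher sum \emph{is} the empirical mean. A single split separating $x_0$ from $x_1,\dots,x_n$ produces a tree in $\Fp^{x_0}$ with empirical mean $1$, while uniform convergence forces every $g\in\Fcal_c$ to have empirical mean at most $c+\eps_n<1$ for $n$ large. Intersecting with the high-probability uniform-convergence event still leaves a positive-probability event, and Lemma~\ref{lemma-rademacher} then gives $R_n(\Fcal_c)<R_n(\Fp^{x_0})$. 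With this fix your argument is correct and arguably cleaner than the paper's, since it avoids the delicate box construction; the price is that it relies on the finite pseudodimension of $\Tcal_K$ rather than on an explicit combinatorial picture.
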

\noindent
{\bf Proof sketch} (full proof in Appendix \ref{proof-prop-shap-trees-bounded}). To prove that $R_n(\Fe^{x_0})<R_n(\Fp^{x_0})$, we construct a set of Rademacher points with positive probability together with a labeling that can be shattered by a function in $\Fp^{x_0}$ but not by any function in $\Fe^{x_0}$. Note that the expected value of our function $f$ (and hence of every function in $\Fe^{x_0}$) is dictated by the SHAP values as $\Phi_1(x_0)+...+\Phi_d(x_0)=f(x_0)-\E(f(X))$, see \eqref{eq-efficiency-axiom}. We sample our Rademacher points in such a way that a decision tree $g\in\Fp^{x_0}$, which interpolates them, 
needs all its $2^K$ leaves. This is possible if we take $n\ge2^K$ many points. This implies that the labels of the points whose features have the highest and lowest values dictate the value of $g$ in the region beyond the Rademacher points. This way, the expected value of $g$ can only take a limited range of values. Choosing the points in the right way implies $\E(g(X))\ne\E(f(X))$, which means $g\notin\Fe^{x_0}$.
 \ulesqed\\

\section{When are anchor explanations informative?}

We now turn our attention to anchors, which are typically used in a classification setting. Hence we assume in this section that $f$ is a scoring function with values in $[-1,1]$, from which the classification results are derived as $c_f(x) = \sign(f(x))$.  As is common, the anchor explanations are derived from the classification labels $c_f$, not from the scoring functions $f$. \\

Compared to gradient explanations and SHAP explanations, anchors are based on a different concept: The explanation outlines a whole region in which the function behaves si\-mi\-lar\-ly to $f(x_0)$. Based on the previous discussions, one might suspect that if this region has a positive probability mass, then such an  explanation might be informative. However, the devil is in the details: While this intuition is correct for perfect anchors (that is, anchors with precision 1) it is not necessarily true for anchors with precision $<1$. \\

\begin{proposition}[\textbf{Non-perfect anchors are not informative on unbounded decision trees}]
\label{prop-anchors-tinfty}
Consider $\Xcal = \R^d$ with a probability distribution $\Pr$ that has a density. Let $\Fcal$ be the class $\Tcal_\infty$ of all trees with values in $[-1,1]$ and arbitrary depth. Then, for any $f\in\Tcal_\infty$, any $x_0\in\Xcal$ and any $n\in\N$, an anchor $\exanchor(f,x_0)=(R,p)$ around $x_0$ with precision $0<p<1$ is not informative with respect to $n$. 
\end{proposition}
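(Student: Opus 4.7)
The plan is to show $R_n(\Fe^{x_0}) = R_n(\Fp^{x_0})$, which immediately yields non-informativeness. Since $\Fe^{x_0} \subseteq \Fp^{x_0}$, the direction $R_n(\Fe^{x_0}) \leq R_n(\Fp^{x_0})$ is automatic, so I focus on the reverse. The idea is to argue that for every sample $x_1, \ldots, x_n$, every sign vector $\sigma_1, \ldots, \sigma_n$, and every $g \in \Fp^{x_0}$, one can construct $\tilde g \in \Fe^{x_0}$ with $\tilde g(x_i) = g(x_i)$ for all $i = 0, 1, \ldots, n$. This equalises the suprema inside the empirical Rademacher complexity and hence equality propagates to the expectations.

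The construction exploits three features: anchors only constrain $g$ through its sign pattern on $\Xcal_R$; trees in $\Tcal_\infty$ admit unbounded refinement by axis-parallel splits; and $0 < p < 1$. Concretely, I would pick a finite, pairwise disjoint family of small axis-parallel boxes $B_1, \ldots, B_m \subset \Xcal_R$ whose union $A$ avoids the finite set $\{x_0, x_1, \ldots, x_n\}$ and has $\Pr$-measure close to $\Pr(\Xcal_R)$, which is possible because $\Pr$ has a density. I would then single out a sub-union $A^+ \subseteq A$ and define $\tilde g$ to equal $g$ outside $A$, to carry sign $c_g(x_0)$ on $A^+$, and the opposite sign on $A \setminus A^+$. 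Since each $B_j$ can be carved out by adding axis-parallel splits to $g$'s tree, $\tilde g \in \Tcal_\infty$; and since no sample point lies in $A$, the sample values of $\tilde g$ coincide with those of $g$.

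The heart of the argument is tuning $\Pr(A^+)$ so that the anchor precision under $\tilde g$ hits $p$ exactly. Because $\tilde g$ and $g$ agree outside $A$, one computes
\[
\precision(R, \tilde g, x_0) = \frac{\Pr\bigl(\{x \in \Xcal_R \setminus A : c_g(x) = c_g(x_0)\}\bigr) + \Pr(A^+)}{\Pr(\Xcal_R)}.
\]
The first numerator term is bounded by $\Pr(\Xcal_R) - \Pr(A)$, which is small by construction, and $\Pr(A^+)$ can be steered through $[0, \Pr(A)]$. Since $p \in (0,1)$ and $\Pr(A)$ is close to $\Pr(\Xcal_R)$, the target value of precision lies in the reachable range.

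The main obstacle I anticipate is the last step: producing an $A^+$ that is an honest union of axis-parallel boxes (so $\tilde g$ remains a tree) while matching the required value of $\Pr(A^+)$ exactly. The cleanest route is to fix all boxes in $A$ except one auxiliary box and vary its side lengths continuously; since $\Pr$ is absolutely continuous, the measure depends continuously on these side lengths, so the intermediate value theorem supplies the exact $\Pr(A^+)$ required. With this in place, the assignment $g \mapsto \tilde g$ transports the Rademacher supremum from $\Fp^{x_0}$ into $\Fe^{x_0}$, yielding $R_n(\Fe^{x_0}) = R_n(\Fp^{x_0})$ and thus non-informativeness.
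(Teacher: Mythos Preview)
Your proposal is correct and follows essentially the same strategy as the paper: exploit that trees in $\Tcal_\infty$ can be refined by arbitrarily many axis-parallel splits on arbitrarily small boxes, so one can freely adjust the sign pattern on a measure-tunable subset of $\Xcal_R$ without touching the sample points, thereby hitting the target precision $p$ exactly.

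The one organisational difference is that the paper routes the argument through Lemma~\ref{lem-Fexplain-interpolating}: rather than starting from an arbitrary $g\in\Fp^{x_0}$ and modifying it, the paper shows directly that $\Fe^{x_0}$ is interpolating with respect to binary labels (building the tree from scratch, with small boxes around the points $x_i$ with $\sigma_i\neq c_f(x_0)$ plus one extra rectangular region to reach precision exactly $p$). This gives $R_n(\Fe^{x_0})=1$ immediately. Your modification approach is slightly more general in that it matches the full values of an arbitrary $g$ rather than just $\pm 1$ labels, but this extra generality is not needed here since both $\Fp^{x_0}$ and $\Fe^{x_0}$ already have Rademacher complexity $1$. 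The paper's route via the lemma is a bit cleaner because it avoids having to graft the box structure onto an existing tree $g$; your route requires slightly more care (as you note) in the continuity argument for $\Pr(A^+)$, whereas the paper's from-scratch construction sidesteps this by directly choosing the extra region $B$ of the right measure.
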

\noindent
\textbf{Proof sketch} (full proof in Appendix \ref{proof-anchor-tinfty}) Consider a sample $x_1,...,x_n$ of Rademacher points. Arbitrary deep trees can have arbitrarily many and arbitrarily small leaves. This means that there is a tree $g\in\Fp^{x_0}$ which classifies all Rademacher points within $\Xcal_R$ with leaves so small that it can still achieve a precision $\precision(R,g,x_0)$ equal to $p$, so $g\in\Fe^{x_0}$. Hence, $\Fe^{x_0}$ can interpolate points as well as $\Fp^{x_0}$ and they have the same Rademacher complexities. \ulesqed

\begin{proposition}[\textbf{Non-perfect anchors are informative on bounded decision trees}]\label{anchor-not-perfect-bounded-tree}
Consider $\Xcal = \R^d$ with a probability distribution $\Pr$ that has a positive density. Let  $\Fcal$ be the class $\Tcal_K$ of all trees with values in $[-1,1]$ and depth at most $K$. Then, for any $f\in\Tcal_K$, any $x_0\in\Xcal$ and any anchor $\exanchor(f,x_0)=(R,p)$ around $x_0$ there exists an $N\in\N$ such that $\exanchor(f,x_0)$ is informative with respect to all $n\ge N$.
\end{proposition}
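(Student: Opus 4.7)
The plan is to mirror the strategy of Proposition~\ref{prop-shap-trees-bounded}, using the precision constraint in the role played there by the efficiency axiom. Any $g\in\Tcal_K$ has at most $2^K$ leaves, so forcing $g$ to nearly maximise the Rademacher sum on a sample densely spread over suitably chosen cells of $\Xcal_R$ pins down both the sign pattern of its leaves intersecting $\Xcal_R$ and (as $n$ grows) the $\Pr$-measures of those leaves, up to a vanishing boundary slack. Since $\precision(R,g,x_0)$ is a $\Pr$-measure-weighted sum over these leaf-signs, with positive probability the Rademacher labels produce a sign pattern whose induced precision is bounded away from $p$. The near-maximiser then lies in $\Fp^{x_0}\setminus\Fe^{x_0}$, and the sup over $\Fe^{x_0}$ is strictly smaller on this positive-probability event.

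Concretely, set $c_0:=c_f(x_0)$. Using positivity of the density, I would choose $2^K$ pairwise disjoint axis-aligned cells $C_1,\dots,C_{2^K}$ partitioning $\Xcal_R$, realised as the restriction to $\Xcal_R$ of some depth-$K$ leaf partition of $\R^d$, with $\Pr(C_j)>0$ for every $j$ and $x_0\in C_{j_0}$. For $n$ large, consider the positive-probability event $E_n$ that each cell receives sufficiently many sample points so that any depth-$K$ axis-aligned partition which classifies those points consistently can only slide its leaf boundaries inside $\Xcal_R$ by at most $\eta_n$ along each axis, with $\eta_n\to 0$ as $n\to\infty$. Conditioning on $E_n$ and on the per-cell majority Rademacher labels $(\sigma_j)_j$, the tree $g^\star\in\Fp^{x_0}$ whose leaves are the cells $C_j$ extended to $\R^d$, with leaf-values $\sigma_j$ on $C_j$ for $j\neq j_0$ and $f(x_0)$ on $C_{j_0}$, attains the supremum of $\sum_i\sigma_i g(x_i)$ over $\Fp^{x_0}$ up to a bounded error. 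Its precision equals $\tilde p:=\Pr(C_{j_0})/\Pr(\Xcal_R)+\sum_{j\neq j_0,\,\sigma_j=c_0}\Pr(C_j)/\Pr(\Xcal_R)$, a random variable taking only finitely many values; since the $\Pr(C_j)$ are strictly positive, not every sign vector produces $\tilde p=p$, so $\Pr(\tilde p\neq p)>0$ uniformly in $n$.

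On the event $\{\tilde p\neq p\}$, a competitor $g\in\Fe^{x_0}$ cannot close the gap merely by shifting leaf boundaries within the $\eta_n$-slack, once $n$ is large enough that the resulting precision change is smaller than the minimal positive gap between $p$ and the attainable values of $\tilde p$. Such a $g$ must therefore flip the sign of $g$ on some entire cell $C_j$, causing it to disagree with the majority label $\sigma_j$ at every sample point in $C_j$ and hence give up a contribution of at least $2\cdot\#\{i:x_i\in C_j\}$ in $\sum_i\sigma_i g(x_i)$, which is bounded below by a positive constant on $E_n$. Thus on a positive-probability event the sup over $\Fe^{x_0}$ is strictly smaller than that over $\Fp^{x_0}$, and the positive-probability strict-gap device already used in the proof of Proposition~\ref{prop-gradient-curvature} yields $R_n(\Fe^{x_0})<R_n(\Fp^{x_0})$ for all $n\ge N$, with $N$ chosen large enough.

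The main obstacle is the geometric flexibility of tree leaves: precision is a $\Pr$-measure property rather than a constraint on function values at the sample points, so one must carefully quantify how much the positive region can be deformed by boundary shifts that still classify the sample correctly. This is exactly where the ``many points per cell'' event is needed, forcing the boundary slack $\eta_n$ to vanish with $n$. The perfect-anchor case $p=1$ can be handled uniformly for all $n\ge 1$ as an easy special case: any $g\in\Fe^{x_0}$ has $c_g\equiv c_0$ on $\Xcal_R$, so a single Rademacher point $x_i\in\Xcal_R$ with $\sigma_i=-c_0$ already produces a uniform gap of at least $1$ between the two suprema.
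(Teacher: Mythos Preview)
Your argument has a genuine gap at the step where you claim that the cell-aligned tree $g^\star$ ``attains the supremum of $\sum_i\sigma_i g(x_i)$ over $\Fp^{x_0}$ up to a bounded error''. With independent Rademacher labels inside each cell, roughly half the points in $C_j$ carry each sign, so $g^\star$ collects only $\sum_j \bigl|\sum_{i\in C_j}\sigma_i\bigr|$, which is of order $\sqrt{2^K n}$. A depth-$K$ tree that is free to place its cuts through the sample can do better, of order $\sqrt{2^K n\log n}$, and there is no reason its leaf boundaries should be anywhere near your pre-chosen cell walls. Relatedly, the event $E_n$ is not well defined: with mixed labels in every cell no depth-$K$ partition ``classifies those points consistently'', so the clause that would pin the leaf boundaries to within $\eta_n$ of the cell walls has no content. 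Consequently the whole chain ``near-maximiser $\Rightarrow$ leaves $\approx$ cells $\Rightarrow$ precision $\approx\tilde p\neq p$'' collapses: the actual maximiser over $\Fp^{x_0}$ could have precision exactly $p$, so you cannot conclude that the $\Fe^{x_0}$-supremum is strictly smaller.

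The paper avoids this entirely by \emph{not} working with generic Rademacher labels. It conditions on a carefully designed positive-probability configuration: a chain of tiny boxes inside $\Xcal_R$ with alternating labels, arranged so that any depth-$K$ tree that interpolates them must spend all $2^K$ leaves and keep every cut inside a set of vanishing $\Pr$-mass. On this event the $\Fp^{x_0}$-supremum is attained exactly (value $n$), and the leaf values on the bulk of $\Xcal_R$ are forced by the labels at the corner boxes, driving the precision to one side of $1/2$ regardless of $p$ (after a w.l.o.g.\ reduction). This is the same mechanism as in Proposition~\ref{prop-shap-trees-bounded}, transplanted from $\Xcal$ to $\Xcal_R$, and it sidesteps any need to control near-maximisers over random labels. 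If you want to repair your outline, the cleanest fix is to further condition on the per-cell labels being \emph{constant} and alternating across adjacent cells; then $g^\star$ really is the unique interpolator with value $n$, and your precision argument goes through --- at which point you have essentially reproduced the paper's construction.
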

\noindent
\textbf{Proof sketch} (full proof in Appendix \ref{proof-anchor-not-perfect-bounded-tree})
Our trees have a maximum of $2^K$ leaves. If all Rademacher points fall into a small subset $D$ of our anchor $\Xcal_R$ and in a way that a tree in $\Fp^{x_0}$ needs all its $2^K$ leaves to fit them, all its cuts must lie within $D$. This means, the values that the tree can take away from $D$ are already determined by the  ``cornerpoints'' of $D$. This implies that the precision of $R$ cannot be $p$ if the points and labels are chosen in a suitable manner. Hence, there is a function in $\Fp^{x_0}$, but no function in $\Fe^{x_0}$ that can fit these points, which leads to a strictly greater Rademacher complexity. \ulesqed

\begin{proposition}[\textbf{Perfect anchors are informative on unbounded decision trees}] \label{prop-anchors-perfect} 
Consider $\Xcal = \R^d$ with a probability distribution $\Pr$ that has a positive density. Let $\Fcal$ be the class $\Tcal_\infty$ of all trees with values in $[-1,1]$ and arbitrary depth. Then, for any $f\in\Tcal_\infty$ and any $x_0\in\Xcal$, an anchor $\exanchor(f,x_0)=(R,p)$ around $x_0$ with positive coverage and precision $p=1$ is informative with respect to all $n\in\N$.
\end{proposition}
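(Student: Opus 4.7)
The plan is to compute both Rademacher complexities directly and show $R_n(\Fp^{x_0}) - R_n(\Fe^{x_0}) = \tfrac{1}{2}\Pr(\Xcal_R) > 0$. Without loss of generality assume $f(x_0) > 0$, so $c_f(x_0) = +1$; then any $g \in \Fe^{x_0}$ satisfies $g(x_0) = f(x_0)$ and $\precision(R,g,x_0)=1$. Combined with the positive-density assumption, perfect precision forces every leaf of $g$ whose intersection with $\Xcal_R$ has positive Lebesgue measure to take a strictly positive value.

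First I would handle $\Fp^{x_0}$: for an a.s.\ sample $x_1, \dots, x_n$ of distinct points not equal to $x_0$, one can build a tree in $\Tcal_\infty$ with tiny, pairwise disjoint axis-parallel leaves around $x_0$ and each $x_i$, achieving $g(x_0) = f(x_0)$ and $g(x_i) = \sigma_i$. Since $|g| \leq 1$ makes $1$ a tight upper bound, $R_n(\Fp^{x_0}) = 1$. Next I would analyze $\Fe^{x_0}$: the same leaf construction still works at each $x_i \notin \Xcal_R$ by placing the leaf in the open complement of the closed box $\Xcal_R$. For $x_i$ in the interior of $\Xcal_R$, however, any sufficiently small leaf around $x_i$ sits inside $\Xcal_R$ and so must carry a strictly positive value, giving $\sup \sigma_i g(x_i) = 1$ when $\sigma_i = +1$ but only $\sup \sigma_i g(x_i) = 0$ (approached as $g(x_i) \downarrow 0$) when $\sigma_i = -1$. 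Since leaves at distinct points can be made disjoint, the joint sup factorizes and
\ba
\sup_{g \in \Fe^{x_0}} \frac{1}{n}\sum_{i=1}^n \sigma_i g(x_i) \;=\; 1 - \frac{N_-(x,\sigma)}{n}, \qquad N_-(x,\sigma) := \big|\{i : x_i \in \Xcal_R,\, \sigma_i = -1\}\big|.
\ea
Taking $\E_\sigma$ then $\E_x$, the expected value of $N_-$ equals $\tfrac{n}{2}\Pr(\Xcal_R)$, so $R_n(\Fe^{x_0}) = 1 - \tfrac{1}{2}\Pr(\Xcal_R)$, strictly less than $R_n(\Fp^{x_0})=1$ whenever the coverage is positive.

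The main obstacle is the interpolation-with-leaves construction itself: one must simultaneously guarantee (i) pairwise disjointness of the small leaves around $x_0, x_1, \dots, x_n$, (ii) containment of each leaf in $\Xcal_R$ or its open complement as needed, and (iii) that these leaves can be extended to a globally consistent axis-parallel decision tree whose remaining cells are positive on $\Xcal_R$. All three are routine given the axis-aligned box structure of $\Xcal_R$ and the almost-sure separation of the sample from $x_0$ and from the (measure-zero) boundary of $\Xcal_R$, but the bookkeeping is the only genuinely delicate part of the argument.
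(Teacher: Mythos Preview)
Your argument is correct and takes a genuinely different route from the paper. The paper's proof is a one-line application of Lemma~\ref{lemma-rademacher}: it conditions on the event $A$ that \emph{all} sample points fall into $\Xcal_R$ with labels $\sigma_i=-c_f(x_0)$, observes that $R_n(\Fe^{x_0}\mid A)<0$ while $R_n(\Fp^{x_0}\mid A)=1$, and concludes. You instead compute both Rademacher complexities exactly, obtaining the explicit gap
\ba
R_n(\Fp^{x_0}) - R_n(\Fe^{x_0}) \;=\; \tfrac{1}{2}\Pr(\Xcal_R).
\ea
Your approach buys a quantitative statement---the information gain is precisely half the coverage---which the paper's conditional argument does not yield. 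The paper's approach buys brevity: it avoids the leaf-construction bookkeeping you flag in your final paragraph, and sidesteps the (harmless but fiddly) issue of whether the inner supremum is attained when $\sigma_i=-1$ and $x_i\in\Xcal_R$, since the paper only needs a strict inequality on one positive-probability event rather than the exact value of the supremum everywhere. Both proofs rely on the same structural fact, namely that precision~$1$ pins down the sign of $g$ on $\Xcal_R$ while $\Tcal_\infty$ is otherwise fully interpolating; your version simply carries this through the whole expectation rather than isolating a single favourable event.
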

\noindent
\textbf{Proof sketch} (full proof in Appendix \ref{proof-perfect-anchors}) Inside the anchor, all points must get the same label as $x_0$ with probability 1. This means that a function in $\Fe^{x_0}$ cannot fit points within $\Xcal_R$ if their label is $-c_f(x_0)$ whereas a function in $\Fp^{x_0}$ can do so. This leads to a lower Rademacher complexity. \ulesqed\\

Propositions \ref{prop-anchors-tinfty} and \ref{anchor-not-perfect-bounded-tree} perfectly fit in the image already painted by gradient explanations and SHAP explanations: If the function class is huge, explanations are not informative; if it is restricted, explanations become informative. Proposition \ref{anchor-not-perfect-bounded-tree} is also interesting and perhaps surprising: one might not expect anchors to be informative at all if they are not perfect. In contrast, the statement of Proposition \ref{prop-anchors-perfect}: Perfect anchor explanations are informative no matter what the underlying function class is. 
The mechanism for this result is that a perfect anchor explanation reveals the behavior of the function $f$ not only at an individual point, but in a whole region of the space. This already points into an important direction: explanations can be made informative if they make a statement on some set that has positive probability mass.

\section{When are counterfactual explanations informative?}

Counterfactual explanations are typically applied in a classification setting. As in the anchor setting, we assume that $f$ is a scoring function with values in $[-1,1]$, from which the classification results are derived by thresholding: $c_f(x) = \sign(f(x))$.  As is common, counterfactual explanations are derived from the classification labels $c_f$, not from the scoring functions $f$. 
Importantly, we need to distinguish between the different versions of counterfactual explanations, weak and strong counterfactuals.\\

\begin{proposition}[\textbf{Weak counterfactuals are not informative on large function spaces}]
\label{prop-weak-counterfactuals-not-informative-large-spaces} 
    Let $\Xcal=\R^d$ with a probability distribution $\Pr$ that has a density. Let $\Fcal$ be the function class $\Tcal_\infty$ of arbitrarily deep trees with values in $[-1,1]$ or the class $\Ccal$ resp. $\Dcal$ of continuous resp. differentiable functions with values in $[-1,1]$. Then, for any $f\in\Fcal$ and any $x_0\in\R^d$ a weak counterfactual explanation $\exwcf(f, x_0)$ is not informative with respect to any $n\in\N$.
\end{proposition}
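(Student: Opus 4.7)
The plan is to prove $R_n(\Fe^{x_0}) = R_n(\Fp^{x_0})$ via a local-modification argument analogous to the ones used in Propositions~\ref{prop-gradient-noisy-linear} and~\ref{prop-gradient-all}. Denote the weak counterfactual by $x_C := \exwcf(f, x_0)$; by definition $c_f(x_C) \neq c_f(x_0)$. A function $g \in \Fp^{x_0}$ lies in $\Fe^{x_0}$ iff $x_C \in \Exwcf(g, x_0)$, which means $\sign(g(x_C)) = -\sign(f(x_0))$. Non-triviality is immediate: in each of the three classes there exist functions that agree with $f$ at $x_0$ but whose sign at $x_C$ matches $\sign(f(x_0))$, so $\Fe^{x_0} \varsubsetneq \Fp^{x_0}$.

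For non-informativeness, fix Rademacher sample points $x_1,\dots,x_n$ drawn from $\Pr$ and Rademacher variables $\sigma_1,\dots,\sigma_n$. Since $\Pr$ has a density, almost surely $x_C \notin \{x_0, x_1,\dots,x_n\}$, so we can pick an open ball $B \subset \Xcal$ around $x_C$ that is disjoint from this finite set. The goal is to exhibit, for every $g \in \Fp^{x_0}$, a modified function $\tilde g \in \Fe^{x_0}$ that coincides with $g$ on $\{x_0, x_1,\dots,x_n\}$. This implies $\sup_{g \in \Fe^{x_0}} \frac{1}{n} \sum_i \sigma_i g(x_i) = \sup_{g \in \Fp^{x_0}} \frac{1}{n} \sum_i \sigma_i g(x_i)$ almost surely, and taking expectation over sample and Rademacher variables yields $R_n(\Fe^{x_0}) = R_n(\Fp^{x_0})$.

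The construction of $\tilde g$ depends on the class. For $\Ccal$ and $\Dcal$, let $c := -\sign(f(x_0)) \in \{-1,+1\}$ and pick a continuous, respectively smooth, bump $\phi: \Xcal \to [0,1]$ supported in $B$ with $\phi(x_C) = 1$. Set $\tilde g := (1-\phi)\,g + \phi \cdot c$. This is a pointwise convex combination of values in $[-1,1]$, hence stays in $[-1,1]$; it inherits continuity, respectively differentiability, from $g$ and $\phi$; it agrees with $g$ outside $B$, in particular on the sample and at $x_0$; and $\tilde g(x_C) = c$, so $\tilde g \in \Fe^{x_0}$. For $\Tcal_\infty$, take the leaf $L$ of $g$ containing $x_C$ and refine it with finitely many axis-parallel splits to carve out an axis-aligned box $B' \subset B \cap L$ around $x_C$, assigning the value $c$ on $B'$ and keeping the original leaf value on $L \setminus B'$. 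The result is still a tree of finite (but possibly larger) depth in $\Tcal_\infty$ with values in $[-1,1]$, and again $\tilde g$ equals $g$ outside $B$ with $\sign(\tilde g(x_C)) = c$.

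The main obstacle is keeping $\tilde g$ inside its function class while simultaneously flipping the classification at $x_C$ and respecting the range $[-1,1]$. For $\Ccal$ and $\Dcal$ the convex-combination construction resolves both issues at once: convexity preserves the range and the bump preserves regularity. For $\Tcal_\infty$ the combinatorial leaf-refinement argument preserves the range trivially, since the newly assigned value already lies in $[-1,1]$.
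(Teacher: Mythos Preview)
Your proof is correct, but it takes a different route from the paper's. The paper invokes Lemma~\ref{lem-Fexplain-interpolating}: it observes that each of the three classes $\Tcal_\infty,\Ccal,\Dcal$ is interpolating, so one can directly find a function fitting the augmented list $(x_0,f(x_0)),(x_C,c_f(x_C)),(x_1,\sigma_1),\dots,(x_n,\sigma_n)$; this function lies in $\Fe^{x_0}$ and interpolates the Rademacher sample, hence $\Fe^{x_0}$ is interpolating with respect to binary labels and the lemma immediately gives $R_n(\Fe^{x_0})=1=R_n(\Fp^{x_0})$. Your argument instead mirrors the local-modification technique from Propositions~\ref{prop-gradient-noisy-linear} and~\ref{prop-gradient-all}: start from an arbitrary $g\in\Fp^{x_0}$ and surgically alter it near $x_C$ to force the right sign while preserving the values on the sample and at $x_0$. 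The paper's approach is shorter and yields the exact value of the Rademacher complexity, but relies on an external lemma; yours is self-contained, handles the three classes by explicit (and slightly different) constructions, and makes the analogy to the gradient case transparent. Both establish the pointwise equality of the suprema almost surely, which is all that is needed.
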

\noindent
\textbf{Proof sketch} (full proof in Appendix \ref{proof-weak-counterfactuals-not-informative-large-spaces}) Given points $x_1,...,x_n$ with binary labels $\sigma_1,...,\sigma_n$, with probability 1, there is a function in $\Fp^{x_0}$ which can interpolate them because our function classes are interpolating. But we also find a function which can fit $(x_C,c_f(x_C))$ in addition, so a function in $\Fe^{x_0}$, which can interpolate $x_1,...,x_n$. This means that $\Fp^{x_0}$ and $\Fe^{x_0}$ can fit points equally well and their Rademacher complexity is the same. \ulesqed

\begin{proposition}[\textbf{Weak counterfactuals are informative on small function spaces}]
\label{prop-weak-counterfactuals-informative-small-spaces}
Let $\Xcal=\R^d$ with a probability distribution $\Pr$ that has a positive density.
\begin{enumerate}
    \item Let $\Fcal$ be the space $\Tcal_K$ of all trees with values in $[-1,1]$ and depth at most $K$, where $K\in\N$. For any $f\in\Fcal$ and any $x_0\in\Xcal$ a weak counterfactual explanation $x_C:=\exwcf(f, x_0)$ is informative with respect to all $n\ge 2^K-1$.
    \item Let $\Fcal$ be the space $\Ccal_L$ of all Lipschitz-continuous functions with constant $L>0$ and values in $[-1,1]$. Let $f\in\Fcal$, $x_0\in\R^d$ and $x_C:=\exwcf(f, x_0)$ be a counterfactual explanation. Assume that the value $f(x_C)$ is given, rather than just the label $c_f(x_C)$. Then, $\exwcf(f, x_0)$ is informative with respect to all $n\ge 1$.
\end{enumerate} 
\end{proposition}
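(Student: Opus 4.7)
Both parts follow the same high-level template used in earlier proofs of this paper (e.g., Proposition~\ref{prop-gradient-noisy-linear}): I aim to exhibit an event $A$ on the sample $(x_1,\dots,x_n)$ of positive probability together with a Rademacher vector $\sigma$ for which $\sup_{g\in\Fp^{x_0}}\sum_i\sigma_i g(x_i)$ is strictly larger than $\sup_{g\in\Fe^{x_0}}\sum_i\sigma_i g(x_i)$; via the standard technical lemma used in the preceding proofs, this yields $R_n(\Fe^{x_0})<R_n(\Fp^{x_0})$. What differs between the two parts is the structural reason why the weak counterfactual strictly shrinks the class.

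For Part 1, the key observation is that any $g\in\Fe^{x_0}\subseteq\Tcal_K$ must contain at least one axis-aligned split separating $x_0$ from $x_C$, since $c_g(x_0)\neq c_g(x_C)$ forces them into different leaves. That split consumes one level of the tree, so on $x_0$'s side the surviving subtree has depth at most $K-1$ and hence at most $2^{K-1}$ leaves. I would choose $A$ to be the event that all Rademacher points fall into a small ball $B$ around $x_0$ that sits on $x_0$'s side of every axis-aligned hyperplane separating $x_0$ from $x_C$; such a ball exists because $x_0\neq x_C$, and has positive probability because $\Pr$ has a positive density. On $A$, trees in $\Fe^{x_0}$ restricted to $B$ behave like depth-$(K-1)$ trees with the fixed anchor value $g(x_0)=f(x_0)$, whereas trees in $\Fp^{x_0}$ may still use all $2^K$ leaves. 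Since $n\geq 2^K-1$, the class $\Fp^{x_0}$ achieves the maximal Rademacher sum $n$ by placing each of $x_0,x_1,\dots,x_n$ in its own leaf with value $\sigma_i$, while the pigeonhole principle forces at least two of these $2^K$ points to share a leaf in any $g\in\Fe^{x_0}$, strictly reducing the sup for a suitable choice of $\sigma$.

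For Part 2, the enriched explanation pins both $g(x_0)=f(x_0)$ and $g(x_C)=f(x_C)$, so every $g\in\Fe^{x_0}$ satisfies the pair of Lipschitz cone constraints $|g(x)-f(x_0)|\leq L\|x-x_0\|$ and $|g(x)-f(x_C)|\leq L\|x-x_C\|$, while $\Fp^{x_0}$ only enforces the first. Take $A=\{x_1\in B(x_C,r)\}$ for a small radius $r>0$ (positive probability by positivity of the density) and $\sigma_1=+1$; then
\[
\sup_{g\in\Fe^{x_0}} g(x_1)\,\leq\, f(x_C)+L\|x_1-x_C\|\;<\;\min\!\bigl(1,\,f(x_0)+L\|x_1-x_0\|\bigr)\,\leq\,\sup_{g\in\Fp^{x_0}} g(x_1),
\]
where the strict middle inequality uses that $f(x_C)$ and $f(x_0)$ have opposite signs and that $\|x_1-x_C\|$ is small; a symmetric bound handles $\sigma_1=-1$. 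The technical lemma then gives $R_1(\Fe^{x_0})<R_1(\Fp^{x_0})$, and the same single-point argument transfers verbatim to every $n\geq 1$ by restricting attention to the first Rademacher coordinate.

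I expect the main obstacle to be Part 1, specifically making precise the claim that the pigeonhole loss in $\Fe^{x_0}$ is strict rather than merely generic. A tree in $\Fe^{x_0}$ might attempt to avoid any loss by merging two Rademacher points of equal sign in the same leaf, so one must choose the Rademacher labelling carefully (or take expectation over $\sigma$) so that the forced merges cost something regardless of how $\Fe^{x_0}$ allocates its $2^{K-1}$ leaves on $x_0$'s side. The cleanest route is to upper-bound the Rademacher complexity of $\Fe^{x_0}$ restricted to $B$ by that of depth-$(K-1)$ trees with a fixed anchor value, and to invoke the strict monotonicity of tree Rademacher complexity in depth once $n$ exceeds the leaf count at the previous level (which is implied by $n\geq 2^K-1\geq 2^{K-1}$). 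For Part 2 the only subtlety is the triangle-style inequality $L\bigl(\|x_1-x_C\|-\|x_1-x_0\|\bigr)<f(x_0)-f(x_C)$, which holds with strict slack whenever $x_1$ is close enough to $x_C$, precisely because $f(x_0)$ and $f(x_C)$ differ in sign.
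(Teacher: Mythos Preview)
Your Part 1 argument has a genuine gap. The central claim---that any $g\in\Fe^{x_0}$ restricted to a small ball $B$ around $x_0$ behaves like a depth-$(K-1)$ tree with at most $2^{K-1}$ leaves---is false. A split separating $x_0$ from $x_C$ need not sit at the root; it can occur anywhere along the root-to-$x_0$ path, and the region on $x_0$'s side of that split depends on $g$. Concretely, take $d=1$, $K=2$, $x_0=0$, $x_C=10$, and the tree with splits at $0.01$ (root), then $-0.01$ on the left and $5$ on the right. This tree lies in $\Fe^{x_0}$, yet any ball $(-\epsilon,\epsilon)$ with $\epsilon>0.01$ meets three of its leaves, whereas $2^{K-1}=2$. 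Moreover, your ball condition (``on $x_0$'s side of every axis-aligned hyperplane separating $x_0$ from $x_C$'') is not satisfiable by any ball of positive radius: in 1D the separating hyperplanes are at every $t\in(0,10)$, so the ball would have to lie in $(-\infty,t)$ for all such $t$, i.e.\ in $(-\infty,0]$. There is thus no fixed region $B$ on which all of $\Fe^{x_0}$ loses a full level of depth, and the pigeonhole/strict-monotonicity route you outline cannot be carried through.

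The paper's argument is structurally different. It does not try to limit $\Fe^{x_0}$ near $x_0$; instead it places the Rademacher points in $2^K-1$ small boxes stacked in the direction $v=x_0-x_C$ (away from $x_C$), with labels alternating between consecutive boxes. Any tree in $\Fp^{x_0}$ interpolating $x_0$ together with these boxes must spend all $2^K$ leaves, and---because every coordinate of every Rademacher point lies on the far side of $x_0$ from $x_C$---each split used puts $x_C$ on the same side as $x_0$. Hence $x_C$ is forced into $x_0$'s leaf and cannot receive the opposite label, so no interpolating tree lies in $\Fe^{x_0}$. This is the missing geometric idea.

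Your Part 2 is essentially the paper's argument (Lipschitz cone at $x_C$), and your $n=1$ inequality is correct. But the phrase ``transfers verbatim to every $n\geq 1$ by restricting attention to the first Rademacher coordinate'' is not a valid step: the suprema are over a common $g$ applied to all $n$ points, so controlling one coordinate does not control the sum. The clean fix---which is exactly what the paper does---is to take the event that \emph{all} $x_1,\dots,x_n$ fall into the ball $\{\|x-x_C\|<|f(x_C)|/L\}$ with $\sigma_i=-c_f(x_C)$; then every $g\in\Fe^{x_0}$ gives $\sigma_i g(x_i)<0$ for each $i$, while the constant function $h\equiv f(x_0)\in\Fp^{x_0}$ gives a strictly positive sum.
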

\noindent
\textbf{Proof Sketch} (full proof in Appendix \ref{proof-prop-weak-counterfactuals-informative-small-spaces})
\begin{enumerate}
    \item A tree of depth at most $K$ has a maximum of $2^K$ leaves. A tree in $\Fp^{x_0}$ has to fit $x_0$ and hence has $2^K-1$ many leaves left, whereas a tree in $\Fe^{x_0}$ already has to fit $x_0$ and $x_C$, so it only has $2^K-2$ many leaves left. We now consider a sample of Rademacher points and labels such that a tree needs all $2^K$ leaves to interpolate $x_0$ as well as the points. This means that there is a function in $\Fp^{x_0}$ which can interpolate the Rademacher points, but no such function exists in $\Fe^{x_0}$ because there is one leaf missing. Because such Rademacher points appear with positive probability, this implies $R_n(\Fp^{x_0})> R_n(\Fe^{x_0})$.
    \item In the case of Lipschitz-continuous functions, we know that in a small neighborhood around $x_C$, all points must have the same label. This means, a function in $\Fe^{x_0}$ cannot fit a Rademacher point if it lies in this neighborhood and has the label $-c_f(x_C)$, whereas a function in $\Fp^{x_0}$ can do so in general. This leads to a strictly smaller Rademacher complexity. \ulesqed
\end{enumerate}

\begin{proposition}[\textbf{Strong counterfactuals are informative even on large function spaces}]
\label{prop-strong-counterfactuals-always-informative}
Consider $\Xcal=\R^d$ with a probability distribution $\Pr$ that has a positive density. Let $\Fcal$ be the class $\Tcal_\infty$ of trees with values in $[-1,1]$ and arbitrary depth or the class $\Ccal$ resp. $\Dcal$ of continuous resp. differentiable functions with values in $[-1,1]$.
Then, a strong counterfactual explanations $\exscf(f,x_0)$ is informative for all $f\in\Fcal$, all $x_0\in\mathbb{R}^d$, and all $n\geq1$.
\end{proposition}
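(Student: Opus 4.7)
The core of the argument is geometric: the strong counterfactual $x_C := \exscf(f,x_0) = x_0 + v^*$ carries, through the minimality in \eqref{eq-counterfactual-strong}, the information that $c_g(x) = c_g(x_0) = c_f(x_0)$ for every $x$ in the open ball $B_r(x_0)$ with $r := \|v^*\| > 0$ and every $g \in \Fe^{x_0}$. Thus, in all three function classes $\Tcal_\infty$, $\Ccal$, and $\Dcal$, $\Fe^{x_0}$ is forced to be sign-constant on a set of positive Lebesgue (and hence $\Pr$-) measure. This mirrors the perfect-anchor argument used for Proposition~\ref{prop-anchors-perfect} and is precisely the ingredient missing in the weak-counterfactual non-informativeness result (Proposition~\ref{prop-weak-counterfactuals-not-informative-large-spaces}), so I would adapt the perfect-anchor template.

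The Rademacher gap then follows from a single-point event. Write $a := \sgn(f(x_0))$ (which is $\pm 1$; the degenerate case $f(x_0) = 0$ can be handled by picking either sign convention) and define
\[
A := \{\, x_1 \in B_r(x_0) \text{ and } \sigma_1 = -a \,\}.
\]
Positivity of the density of $\Pr$ on $\R^d$ together with the independence of the Rademacher sign gives $\Pr(A) > 0$. On $A$, every $g \in \Fe^{x_0}$ satisfies $\sgn(g(x_1)) = a$, hence $\sigma_1 g(x_1) \le 0$, and therefore
\[
\sup_{g \in \Fe^{x_0}} \frac{1}{n}\sum_{i=1}^n \sigma_i g(x_i) \;\le\; \frac{n-1}{n}.
\]
Conversely, $\Tcal_\infty$, $\Ccal$ and $\Dcal$ all interpolate arbitrary finite $[-1,1]$-valued targets at pairwise distinct points (trees with sufficiently small leaves; continuous or smooth bump functions of height at most one), and since $x_0, x_1, \ldots, x_n$ are almost surely pairwise distinct, some $g \in \Fp^{x_0}$ attains $g(x_0) = f(x_0)$ and $g(x_i) = \sigma_i$ for all $i$. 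This gives $\sup_{g \in \Fp^{x_0}} \frac{1}{n}\sum_i \sigma_i g(x_i) = 1$, and taking expectations yields
\[
R_n(\Fp^{x_0}) - R_n(\Fe^{x_0}) \;\ge\; \Pr(A)\cdot \tfrac{1}{n} \;>\; 0,
\]
which is informativeness.

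The main obstacle, in my view, is not the Rademacher bookkeeping but handling the technical caveat stated after \eqref{eq-counterfactual-strong}: a strict minimizer of $\|v\|$ may not exist when $\{x : f(x) \ne f(x_0)\}$ fails to be closed. Using the paper's proposed fix (replacing the constraint by $c_f(x_0+(1+\eps)v) \ne c_f(x_0)$ for all $\eps > 0$), the forbidden-ball radius is still $r$ up to an arbitrarily small shrinkage, and sign-constancy on $B_{r-\eps}(x_0)$ for some small $\eps > 0$ suffices; the event $A$ still has positive probability by positivity of the density, and the argument proceeds unchanged. A secondary, cosmetic point is the interpolation claim for $\Dcal$: one builds the interpolant as a sum of smooth, disjoint-supported bumps centered at $x_0, x_1, \ldots, x_n$, each scaled by the desired target value, which preserves differentiability and keeps the output in $[-1,1]$.
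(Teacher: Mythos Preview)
Your proof is correct and follows the same geometric idea as the paper's: the minimality of the strong counterfactual forces every $g\in\Fe^{x_0}$ to have constant sign on the ball $B_r(x_0)$, and then one exploits that $\Fp^{x_0}$ is interpolating while $\Fe^{x_0}$ cannot match the ``wrong'' label inside this ball. The only difference is cosmetic: the paper conditions on the event that \emph{all} $n$ sample points fall into $B_r(x_0)$ with labels $-c_f(x_0)$ and then appeals to Lemma~\ref{lemma-rademacher}, whereas you condition on a single point $x_1$ and bound the Rademacher gap directly by $\Pr(A)/n$; your version is slightly more elementary and even yields a quantitative lower bound on the gap.
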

\noindent
{\bf Proof sketch} (full proof in Appendix \ref{proof-prop-strong-counterfactuals-always-informative}). The strong counterfactual explanation guarantees that in a neighborhood of $x_0$, $f$ has the same sign as $f(x_0)$. Hence, if our Rademacher points fall into this neighborhood and have the label $-c_f(x_0)$, no function in $\Fe^{x_0}$ can fit them whereas a function in $\Fp^{x_0}$ can do so. This leads to a strictly smaller Rademacher complexity. \ulesqed\\

The last proposition shows that strong counterfactuals are a really powerful explanation as they deliver not only a point but a whole region around our $x_0$, which has the same label, similar to perfect anchors. This makes them informative. However, in all practical implementations, counterfactuals are never guaranteed to be strong. To the contrary, practical implementation use heuristics to find some close point with a different label, which is a weak counterfactual. 
 As we have seen above, weak counterfactuals are, like the other explanation methods so far, only informative for small function classes. The same can be shown for counterfactuals received by minimizing distance in just one direction as defined in \eqref{eq-counterfactual-one-dim} (skipped here).  

\section{Design choices}
\label{sec-definition-choices}

We have seen that the notion of informative explanations allows us to prove a surprisingly large number of results about vastly different types of post-hoc explanations on tabular data. We now take a step back and discuss some of the design choices we made in the definitions. 

\paragraph{Prior knowledge.} Whether an explanation is informative or not depends on our prior knowledge $\Fcal$. Intuitively it makes sense that if we want to define whether an explanation tells us something new, we need to relate to the knowledge that we have before receiving the explanation. We have seen that this can make all of the difference: depending on $\Fcal$, the same explanation can be trivial, non-trivial, non-informative or informative. 

\paragraph{Non-trivial vs.\ informative.} The distinction between non-trivial and informative explanations is crucial and subtle at the same time. It might be easy to turn trivial explanations into non-trivial ones by adding ``stupid functions'' to $\Fcal$ that can then be ruled out by the explanation. However, ruling out stupid functions is not what we are after. Only the stronger notion of informativeness tells us whether the explanation tells us something substantial about the decision function. 

\paragraph{Nature of information.} Our notion of informative explanations is a purely mathematical, non-constructive concept. It does not tell us what an explanation's information consists of, or how this information could be extracted from the explanation (neither by an algorithm nor by a human user). However, this is a feature and not a bug: By using this framework we do not have to specify what kind of information we might be looking for, and the general setup makes our negative results more powerful. 

\paragraph{Complexity of $f$.} It is worthwhile to point out that our approach does not try to define the complexity of an individual function $f$,  (say, in terms of its variation, or by compression arguments) and then derive guarantees about its explanations. 
This is in line with standard learning theory, which found that to argue about learning an unknown function is much easier if we study function classes rather than individual functions (see 
\citealp{shalev2014understanding} for an introduction to learning theory). 

\paragraph{Complexity of $\Fcal$.} We decided to measure the complexity of function classes in terms of the Rademacher complexity, because it incorporates the underlying data distribution and it can be computed or bounded for many function classes.  The Rademacher complexity depends on a parameter $n$, the number of points on which the complexity is being evaluated. In many cases, the particular choice of $n$ is not so important, as long as it is large enough to exclude some trivial situations. A slightly annoying aspect of Rademacher complexities is that it can be $\infty$ for classes of unbounded, real-valued functions. In order to circumvent this issue, we often make simplifying assumptions that ensure that the functions under consideration are bounded by a fixed constant.

\paragraph{Relation to faithfulness.} The notion of informative explanations encompasses and is more general than many of the suggested formulations of faithfulness. Intuitively, faithfulness tries to assert that the explanation is related to the underlying function $f$. It is clear from our definitions that if this is not the case, then an explanation cannot be informative (see example of $f$-independent explanations in Section~\ref{subsection-first-examples}). However, there exist explanations that are perfectly faithful, yet obviously do not provide any tangible information (see example of the tautological explanation  in Section~\ref{subsection-first-examples}). In particular, our notion is more general as it does not pre-scribe which information of the function is being exploited by the definition --- which is typically ``hard coded'' in the definitions of faithfulness.

\paragraph{One explanation only.} In our approach, we judge informativeness based on a single explanation, and we have seen that this setup already allows for a large number of interesting results. Of course, the framework can (and should, in future work) be extended to the case of several  explanations. However, note that our negative results can be directly extended to the scenario of $m$ explanations: No matter how large $m$ is going to be, if each of the explanations is non-informative, the set of $m$ explanation will not be informative either.  
It is the positive results where the number of explanations will matter: 
Even if one single explanation might not be enough to confidently assert, say,  whether a decision function uses spurious or discriminating features, after seeing a larger number $m$ of explanations one should be able to make such a statement. 
To model such a scenario, we would need to measure the amount of information that is represented by a set of explanations. Then we might be able to judge how many explanation one would need to receive in order to make an interesting statement about the decision function. This will be an important project for future work.

\section{Discussion and outlook}

Our paper provides a first formalization and proof of the conjecture that useful local post-hoc explanations cannot exist for arbitrarily complex functions. 
In the following, we discuss the scope of our results and outline some of their consequences.

\subsection{Scope of our results}

\paragraph{Both adversarial and cooperative setting.}
  Our results do not only hold in an adversarial scenario \citep{BorFinRaiLux22}  where the explanation provider tries to cheat or to manipulate explanations, but also in a cooperative setting:  Even if the explanation provider acts with the best intention, for example when a developer tries to debug her own models \citep{adebayo2020debugging}, many explanations do not carry information that the explanation receiver can act upon. 

\paragraph{Perfect algorithms.}
When proving our results, we had set aside all statistical and algorithmic challenges that come with computing explanations from data. We assume that probabilistic quantities such as conditional or marginal expectations can be exactly evaluated, and we completely ignore computational issues. It is obvious that if these ``perfect'' explanations do not carry information, then the same will be true for the ``approximate'' algorithms that are actually used in practice. 

\paragraph{Agnostic with respect to data modality.}
Our results have been derived and proved in the general setting of data in $\R^d$. 
As such, they apply to standard feature-based settings, but also to images. However, our work does not explicitly study the implications of our results on particular data domains, which of course do matter when it comes to judging the usefulness of  explanations in practice. This will be a matter of future work.

\paragraph{Theoretical perspective.}
Our work takes a purely theoretical perspective. We do not take into account the perspective of the human explanation recipient at all \citep{liao2021human}, and ignore the question for what tasks a human user could employ the explanations \citep{BorRaiLux25}. However, we argue that missing information cannot be ``fixed'' in hindsight, for example, by the way information is extracted from an explanation or how the explanation is presented to the user. If no information is present in the mathematical object, then whatever we derive from it will be void of reliable information as well. In this sense, we consider the informativeness as a necessary, but not sufficient, requirement for explanations.

\subsection{Relation to practical applications and future work}

One potential criticism of our approach  might be that in concrete applications, 
local explanations should only provide information about the
decision $f(x)$, rather than being used as a tool to
recover the function $f$ or derive some of its properties from the explanation. 
Indeed, this
is how explanations are often advertised: to give some ``rationale''
into why a decision function made a very particular decision about one
specific data point \citep{Ribeiro16,Lundberg17}.\\

However, we believe that it is very hard or even impossible to judge
the quality of explanation {\em algorithms} in a formal framework if all
explanations are considered isolated from each other. To the contrary,
we would expect that relationships between data points and structural
relationships in the function $f$ should influence the explanations.
And indeed, to compute explanations, all practically relevant local
explanation algorithms inspect and probe the behavior of $f$ not only on $x_0$ itself  
but also in other regions of the space, be it in a small neighborhood (gradients, anchors,
counterfactuals) or on particular slices of the data space (SHAP). But
if this is the case, then the behavior of the function in those
regions does influence the explanation --- and vice versa, the
explanation necessarily restricts the set of functions that can have
the same explanation. This is the mechanism that we exploit in our
setup.\\

In other words, {what we envision is a framework to judge  explanation algorithms,
not single explanations.} To judge a whole algorithm, an obvious
strategy is to inspect not only one explanation at one data point
$x_0$, but to consider explanations for a set $x_0, ..., x_n$ of
points and study their relationships. This prospect is a strong motivation for our current approach, where we exploit that the concept of ``information revealed by explanations'' is  translated into ``using explanations to learn aspects of the decision function''. 
While our current paper spells out this approach based on one explanation only, it is clear that the approach can and should be extended to deal with many explanations in future work. \\

In this paper, we only investigate the question of whether an explanation transmits information or not. The obvious next step is to develop a framework that not only allows us to assert the existence or non-existence of informative explanations, but also enables us to quantify and measure the amount of information transmitted by explanations. In particular, it is not obvious to gauge what exactly it means that a function class is ``simple'' and ``how informative'' an explanation really would need to be for practice. To illustrate this point with one particular example, in the case of 
$\beta$-smooth gradients, the case $\beta=0$ represents the class of linear functions, where gradient explanations are very informative (Proposition \ref{prop-gradient-linear}); for finite $\beta$, explanations are still somewhat informative (Proposition \ref{prop-gradient-curvature}); the limit of $\beta\to\infty$ represents the class of all differentiable functions, on which gradient explanations are not at all informative (Proposition \ref{prop-gradient-all}). So, which values of $\beta$ still admit practically useful explanations, and how complex are the corresponding classes? For example, in computer vision it has been shown that the class of functions with bounded curvature is still quite complex: It is possible to build image classifiers with high accuracy that have low curvature \citep{Srinivas_low_curvature2022, serrurier2023explainable}. 
For concrete practical statements, one would need to be able to measure the amount of information that is revealed. This would also allow for the comparison of different explanation algorithms for the same task.

\subsection{Consequences for high-risk applications of AI}

The most important implication of our results is that unless the developer of an AI system asserts that the decision function is simple or that an explanation is locally stable, current local post-hoc explanations are not informative: one cannot deduce any tangible information from these explanations. 
For this reason, {\bf we find it hard to imagine that local post-hoc explanation algorithms in their current form could be considered appropriate for auditing, regulation, and high-risk application of AI.} In particular, we believe that this applies in the context of the AI Act in the European Union.  To make explanations suitable for such purposes, they would need to come with additional information. It will be a matter of future work to determine whether some ``enriched'' local post-hoc explanations can be made fit for safety-critical purposes, or whether they need to be abandoned.

\section*{Acknowledgements}

This work has been supported by the German Research Foundation through
the Cluster of Excellence “Machine Learning - New Perspectives for
Science" (EXC 2064/1 number 390727645) and Project
560788681, the Carl Zeiss Foundation
through the CZS Center for AI and Law, and the International Max
Planck Research School for Intelligent Systems (IMPRS-IS).

\bibliography{references}

\newpage\appendix

\begin{center}
    \Huge Appendix
\end{center}

\section{Lemma about Rademacher complexities}

Let $n\geq1$ and let $A$ be an event with positive probability with respect to the distribution $(P\otimes\text{Rad})^{\otimes n}$, where $\text{Rad}$ denotes the Rademacher distribution. Let $\Fcal$ be a function class and define the conditional Rademacher complexity as
\[
R_n(\Fcal|A):=\E_{x,\sigma}\left[\sup_{f\in\Fcal}\frac{1}{n}\sum_{i=1}^n \sigma_i f(x_i)\Bigg|A\right].
\]

\begin{lemma}[\textbf{From conditional to unconditional Rademacher complexity}]
\label{lemma-rademacher}
    Let $\Gcal\subseteq\Fcal$ be function classes, let $R_n(\Fcal)$ be finite, and let $A$ be an event with positive probability. If $R_n(\Gcal|A)<R_n(\Fcal|A)$, then $R_n(\Gcal)<R_n(\Fcal)$.
\end{lemma}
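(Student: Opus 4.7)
\textbf{Proof proposal for Lemma \ref{lemma-rademacher}.} The plan is to decompose both Rademacher complexities via the event $A$ and its complement, and then exploit monotonicity of the supremum in the function class together with the strict inequality assumed on $A$. Formally, for any random variable $Y$ with finite expectation, $\E[Y] = \Pr(A)\E[Y|A] + \Pr(A^c)\E[Y|A^c]$. Applying this to $Y = \sup_{f \in \Fcal}\frac{1}{n}\sum_{i=1}^n \sigma_i f(x_i)$ and to the analogous quantity for $\Gcal$ yields
\[
R_n(\Fcal) = \Pr(A)\,R_n(\Fcal|A) + \Pr(A^c)\,R_n(\Fcal|A^c),
\]
and the same identity with $\Gcal$ in place of $\Fcal$. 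The assumption that $R_n(\Fcal)$ is finite (together with $\Gcal \subseteq \Fcal$) ensures that all four conditional quantities are finite real numbers, so the decomposition can be manipulated without worrying about $\infty-\infty$ indeterminacies. If $\Pr(A^c)=0$, then the second summand vanishes on both sides and the claim follows immediately from the hypothesis; so from now on assume $\Pr(A^c)>0$.

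Since $\Gcal \subseteq \Fcal$, the inequality $\sup_{g \in \Gcal} \frac{1}{n}\sum_{i=1}^n \sigma_i g(x_i) \le \sup_{f \in \Fcal} \frac{1}{n}\sum_{i=1}^n \sigma_i f(x_i)$ holds pointwise for every realization of $(x,\sigma)$. Taking conditional expectation given $A$ (resp.\ $A^c$) preserves this inequality, so
\[
R_n(\Gcal|A) \le R_n(\Fcal|A) \quad\text{and}\quad R_n(\Gcal|A^c) \le R_n(\Fcal|A^c).
\]
By hypothesis the first of these is a strict inequality.

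Subtracting the decompositions term by term and using $\Pr(A)>0$ gives
\[
R_n(\Fcal) - R_n(\Gcal) = \Pr(A)\bigl(R_n(\Fcal|A) - R_n(\Gcal|A)\bigr) + \Pr(A^c)\bigl(R_n(\Fcal|A^c) - R_n(\Gcal|A^c)\bigr).
\]
The first summand is strictly positive because $\Pr(A)>0$ and $R_n(\Fcal|A) - R_n(\Gcal|A) > 0$; the second summand is nonnegative by the pointwise inequality above. Hence the right-hand side is strictly positive, which gives $R_n(\Gcal) < R_n(\Fcal)$ as claimed.

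The argument is almost entirely bookkeeping, and the only subtle point is the need to exclude the case where conditional Rademacher complexities could be infinite; this is handled by the finiteness assumption on $R_n(\Fcal)$ and by the inclusion $\Gcal \subseteq \Fcal$, which together force all conditional versions to lie in $\R$. No serious obstacle is anticipated.
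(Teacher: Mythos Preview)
Your proof is correct and follows essentially the same route as the paper: decompose $R_n(\Fcal)$ and $R_n(\Gcal)$ via the law of total expectation over $A$ and $A^c$, then subtract and use $\Gcal\subseteq\Fcal$ for the $A^c$-term and the strict hypothesis for the $A$-term. Your version is slightly more careful in explicitly addressing the finiteness of the conditional quantities and the degenerate case $\Pr(A^c)=0$, but the argument is otherwise identical to the paper's.
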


\begin{proof}
    We can decompose $R_n(\Gcal)$ in the following way:
    \begin{align*}
        R_n(\Gcal)&=\E_x\E_\sigma\left[\sup_{g\in\Gcal}\frac{1}{n}\sum_{i=1}^n \sigma_i g(x_i)\right] && x\perp\sigma \\
        &=\E_{x,\sigma}\left[\sup_{g\in\Gcal}\frac{1}{n}\sum_{i=1}^n \sigma_i g(x_i)\right] && \text{law of total expectation} \\
        &=\Pr(A)\cdot R_n(\Gcal|A)+\Pr(A^c)\cdot R_n(\Gcal|A^c).
    \end{align*}
    A similar decomposition holds for $\Fcal$:
    \begin{align*}
        R_n(\Fcal)=\Pr(A)\cdot R_n(\Fcal|A)+\Pr(A^c)\cdot R_n(\Fcal|A^c).
    \end{align*}
    Taking the difference of the two yields
    \begin{align*}
        R_n(\Fcal)-R_n(\Gcal)=&\underbrace{\Pr(A)}_{>0\text{ by assumption}}\cdot\underbrace{(R_n(\Fcal|A)-R_n(\Gcal|A))}_{>0\text{ by assumption}}\\&+\Pr(A^c)\cdot\underbrace{(R_n(\Fcal|A^c)-R_n(\Gcal|A^c))}_{\geq0\text{ since }\Gcal\subseteq\Fcal}>0.
    \end{align*}
    
\end{proof}

\section{Lemma about interpolating function classes} 

\begin{definition}[\textbf{Interpolating function classes}]
    Let $\Xcal$ be a space with a probability distribution $\Pr$. We call a  function class $\Fcal$ on $\Xcal$ with values in $[-1,1]$ \emph{interpolating} (interpolating with respect to binary labels)
    if for any $n\in\N$, any points $x_1,...,x_n$ independently sampled from $\Pr$ and any labels $\sigma_1,...,\sigma_n\in [-1,1]$ ($\sigma_1,...,\sigma_n\in\{-1,1\}$), there is a function $f\in\Fcal$ with probability 1 over the choice of the $X_i$, which satisfies $f(X_i)=\sigma_i$.
\end{definition}

\begin{lemma}[\textbf{Explanation is not informative if $\Fe$ is interpolating}]\label{lem-Fexplain-interpolating}
    Let $\Xcal$ be a space with a probability distribution $\Pr$ and $\Fcal$ be a function class on $\Xcal$ with values in $[-1,1]$. Let further the explanation $\Ecal(f,x_0)$ of some $f\in \Fcal$ at some point $x_0\in\Xcal$ be such that the resulting space $\Fe^{x_0}$ is interpolating with respect to binary labels. Then, $\Ecal(f,x_0)$ is non-informative.
\end{lemma}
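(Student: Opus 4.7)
The plan is to show that under the hypothesis, both Rademacher complexities $R_n(\Fe^{x_0})$ and $R_n(\Fp^{x_0})$ attain the maximal possible value that functions bounded by $[-1,1]$ can produce, namely $1$. Since $\Fe^{x_0}\subseteq\Fp^{x_0}$, the inequality $R_n(\Fe^{x_0})\le R_n(\Fp^{x_0})$ is immediate, so the task reduces to proving the reverse inequality, and in fact to proving the joint upper bound on both sides.

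First I would observe that for any $x_1,\dots,x_n\in\Xcal$, any Rademacher signs $\sigma_1,\dots,\sigma_n\in\{-1,+1\}$, and any $g\in\Fcal$ (in particular any $g\in\Fp^{x_0}$), the Cauchy–Schwarz-style bound
\[
\frac{1}{n}\sum_{i=1}^n\sigma_i g(x_i)\;\le\;\frac{1}{n}\sum_{i=1}^n|g(x_i)|\;\le\;1
\]
holds, because functions in $\Fcal$ take values in $[-1,1]$. Taking suprema and then expectations yields $R_n(\Fp^{x_0})\le 1$, and the same bound holds for $R_n(\Fe^{x_0})$.

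Next I would show the matching lower bound for $R_n(\Fe^{x_0})$. Fix any realization of the signs $\sigma=(\sigma_1,\dots,\sigma_n)\in\{-1,+1\}^n$. By the interpolating assumption applied to the binary labels $\sigma_i$, with probability $1$ over $x_1,\dots,x_n\sim\Pr^{\otimes n}$ there exists $g_\sigma\in\Fe^{x_0}$ with $g_\sigma(x_i)=\sigma_i$ for all $i$. Taking a union bound over the finitely many ($2^n$) sign patterns, with probability $1$ over the sample, such an interpolant exists \emph{simultaneously} for every possible $\sigma$. For any such realization of $(x,\sigma)$ we then get
\[
\sup_{g\in\Fe^{x_0}}\frac{1}{n}\sum_{i=1}^n\sigma_i g(x_i)\;\ge\;\frac{1}{n}\sum_{i=1}^n\sigma_i^2\;=\;1.
\]
Combining with the upper bound from the previous step, the supremum equals $1$ almost surely. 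Taking expectation over $(x,\sigma)$ gives $R_n(\Fe^{x_0})=1$, and the sandwich with $\Fe^{x_0}\subseteq\Fp^{x_0}$ forces $R_n(\Fp^{x_0})=1$ as well. Hence $R_n(\Fe^{x_0})=R_n(\Fp^{x_0})$ and the explanation is, by definition, not informative.

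The only step that requires some care, and which I would flag as the main (mild) obstacle, is the measurability and quantifier ordering in the interpolating property: the assumption gives the interpolant $g_\sigma$ almost surely for \emph{fixed} $\sigma$, whereas the Rademacher supremum needs an interpolant for \emph{every} $\sigma$ on the same sample. This is resolved cleanly by the union bound over the $2^n$ sign vectors, which is why the argument works without additional regularity assumptions on $\Fe^{x_0}$. No assumption on $\Pr$ beyond the existence of the product measure is used.
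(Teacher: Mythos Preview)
Your proof is correct and follows essentially the same approach as the paper: both show $R_n(\Fe^{x_0})=1$ by exhibiting an interpolant for each sign pattern, and then invoke the trivial upper bound $R_n(\Fp^{x_0})\le 1$ for $[-1,1]$-valued functions. Your treatment is in fact slightly more careful than the paper's, which tacitly assumes the interpolant exists simultaneously for all $\sigma$; you make explicit the union bound over the $2^n$ sign vectors that justifies this.
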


\begin{proof}
    Consider some $n\in\N$ and points $x_1,...,x_n$ independently sampled from $\Pr$. By assumption, for all labelings $\sigma:=(\sigma_1,...,\sigma_n)$, there is an $f^*_\sigma\in\Fe^{x_0}$ such that $f^*_\sigma(x_i)=\sigma_i$ with probability 1 over the choice of the $x_i$. Hence, it holds 
    \begin{align*}
    R_n(\Fe) &= \E_x\left(\E_{\sigma}\left(\frac{1}{n}\sup_{g\in\Fe^{x_0}}\sum_{i=1}^n \sigma_i g(x_i)\right)\right) 
    = \E_x\left(\E_{\sigma}\left(\frac{1}{n}\sum_{i=1}^n \sigma_i f^*_\sigma(x_i)\right)\right) \\
    &= \E_x(\E_{\sigma}(1)) 
    = 1,
    \end{align*}
    so in particular
    $$R_n(\Fe^{x_0}) \ge R_n(\Fp^{x_0}),$$
    because a function class with values in $[-1,1]$ has a maximal Rademacher complexity of $1$. Hence, the explanation is not informative. 
\end{proof}

\section{Proofs about gradient explanations}

\subsection{Proof that gradient explanations for trees are not informative, Subsection \ref{subsection-first-examples}}
\label{appendix-gradient-for-trees}
\begin{proof}
    To prove that $\Fp^{x_0}$ and $\Fe^{x_0}$ have the same Rademacher complexities, we will prove that for any $n\in\N$, $x_1,...,x_n$ and any $\sigma_1,...,\sigma_n$, it holds
    $$\sup_{g\in\Fe}\sum_{i=1}^n\sigma_ig(x_i) = \sup_{g\in\Fp}\sum_{i=1}^n\sigma_ig(x_i).$$
    Consider $n\in\N$, Rademacher points $x_1,...,x_n$ and labels $\sigma_1,...,\sigma_n$. If a tree $g\in\Fp$ has no cuts going through $x_0$, we have $g\in\Fe$. Otherwise, we will construct a function in $\Fe$ that has the same predictions as $g$ on $x_1,...,x_n$. For every cut that goes through $x_0$, say using feature $j$, we consider the $j$-th features of the Rademacher points closest to the cut, that is
    $$c_-:=\max\left\{x_i^{(j)}\condon x_i^{(j)}\le x_0^{(j)}, i=1,...,n\right\}, \qquad c_+:=\min\left\{x_i^{(j)}\condon x_i^{(j)}> x_0^{(j)}, i=1,...,n\right\}.$$
    Then, we simply replace the threshold $x_0^{(j)}$ of the cut with some other point in $[c_-,c_+)$, which ensures that all $x_1,...,x_n$ still flow down the same branch as before. After doing so for every critical cut, we have constructed a new tree which lies in $\Fe$ and has the same predictions for all $x_1,...,x_n$ as $g$. This proves that also the two suprema coincide, which completes the proof. 
\end{proof}

\subsection{Proof of Proposition \ref{prop-gradient-linear} (Linear functions)}\label{proof-prop-gradient-linear}

\begin{proof}
    \underline{Non-trivialness:} After observing a prediction for $x_0$, there are $d$ free variables in the equation $f(x_0)=w^Tx_0+b$, therefore $|\Fp^{x_0}|>1$. However, after receiving the explanation $\nabla f(x_0)$, the decision function $f$ is uniquely determined: $\nabla f(x_0)=w$ and $b=f(x_0)-\nabla f(x_0)^Tx_0$, that is $\Fe^{x_0}=\{f\}$, which proves non-trivialness.
    \\
    \underline{Informativeness:} Let $n\geq1$, and let $v:=\nabla f(x_0)$. Since $\Xcal$ is compact and $\|w\|,|b|<M$, the function class $\Fcal$ is uniformly bounded, therefore, $R_n(\Fcal)<\infty$. Hence, we can use Lemma \ref{lemma-rademacher} to show informativeness. Let $A$ be the event that $v^T(x_i-x_0)>0$ and $\sigma_i=+1$ for all $i\in[n]$. This event has positive probability. We will now show that $R_n(\Fe^{x_0}|A)<R_n(\Fp^{x_0}|A)$.
    As we have seen before, $\Fe^{x_0}$ consists of the single function $f(x)=w^Tx+b$. Let $\delta$ be a constant and define $\varphi_\delta(x):=f(x)+ \delta v^T(x-x_0)$. Notice that $\varphi_\delta(x_0)=f(x_0)$ holds for any $\delta$. Because of the condition $\|w\|,|b|<M$, we can always choose $\delta^*>0$ so that $\varphi_{\delta^*}\in\Fp^{x_0}$. If we are in the setting of event $A$, then for any sample $x_1\dots,x_n$ it holds that
    \[
    \frac{1}{n}\sum_{i=1}^n\sigma_i\varphi_{\delta^*}(x_i)>\frac{1}{n}\sum_{i=1}^n\sigma_i f(x_i),
    \]
    which implies $R_n(\Fe^{x_0}|A)<R_n(\Fp^{x_0}|A)$.
\end{proof}

\subsection{Proof of Proposition \ref{prop-gradient-noisy-linear} (Noisy linear functions)}\label{proof-prop-gradient-noisy-linear}

\begin{proof}
    \underline{Non-trivialness:} We observe that $\mathcal{L}^M\subseteq\mathcal{L}^M_\varepsilon$, therefore, it is easy to see that $\Fe^{x_0}\subsetneq\Fp^{x_0}$.
    \\
    \underline{Non-informativeness:} The exact same proof works as for Proposition \ref{prop-gradient-all}.
\end{proof}

\subsection{Proof of Proposition \ref{prop-gradient-all} (All differentiable functions)}\label{proof-prop-gradient-all}

\begin{proof}
    \underline{Non-trivialness:} There are many functions in $\Fcal$ that have a different gradient than $f$ at $x_0$.
    \\
    \underline{Non-informativeness:} Let $n\geq1$, let us fix an arbitrary sample $x_1,\dots,x_n\sim \Pr$, and consider an instance of Rademacher variables $\sigma_1,\dots,\sigma_n\sim\text{Rad}$.
    For any $g\in\Fp^{x_0}$ we construct a $\tilde g\in\Fe^{x_0}$ such that
    \[
    \sum_{i=1}^n\sigma_i\tilde{g}(x_i)=\sum_{i=1}^n\sigma_ig(x_i).
    \]
    To construct $\tilde{g}$, choose some $\varepsilon>0$, such that $x_1,\dots,x_n\notin B_\varepsilon(x_0)$. This is possible with probability $1$. Given $g$, we define $\tilde{g}$ as
    \[
        \tilde{g}(x)=\begin{cases}
        f(x) & x\in\mathcal{B}_{\varepsilon/2}(x_0) \\
        g(x) & x\notin\mathcal{B}_\varepsilon(x_0) \\
        \text{smooth connection} & x\in\mathcal{B}_\varepsilon(x_0)\setminus\mathcal{B}_{\varepsilon/2}(x_0)
    \end{cases}
    \]
    As $\tilde{g}=f$ inside the $\varepsilon/2$-radius ball, we have $\exgrad(\tilde{g},x_0)=\exgrad(f,x_0)$. Outside the $\varepsilon$ ball, we have $\tilde{g}=g$, thus $\forall i\in[n]: g(x_i)=\tilde{g}(x_i)$. The smooth connection between the two regions ensures that $\tilde{g}$ is differentiable. Therefore, by construction, $\tilde{g}\in\Fe^{x_0}$.
    \\
    We can make this construction for any instance of $\sigma_1,\dots,\sigma_n$, so it holds in expectation over $\sigma$. Similarly, the same construction works for all $x_1,\dots,x_n$ almost surely, so it holds in expectation over $x$ as well.
\end{proof}

\subsection{Proof of Proposition \ref{prop-gradient-all-polynomials} (All polynomials)}\label{proof-prop-gradient-all-polynomials}

\begin{proof}
    \underline{Non-trivialness:} It is easy to see that $\Fe^{x_0}\subsetneq\Fp^{x_0}$.
    \\
    \underline{Non-informativeness:} To show non-informativeness, by Lemma \ref{lem-Fexplain-interpolating} it suffices to show that $\Fe^{x_0}$ is interpolating with respect to binary labels. It is clearly the case, since for any $n\in\mathbb{Z}^+$, among polynomials of arbitrary degree, there exists a polynomial that fits the pairs of points and labels $(x_0,f(x_0)),(x_1,\sigma_1),\dots,(x_n,\sigma_n)$, as well as the gradient $\nabla f(x_0)$ at $x_0$.
\end{proof}

\subsection{Proof of Proposition \ref{prop-gradient-curvature} (Functions with smooth and bounded gradients)}\label{proof-prop-gradient-curvature}

\begin{proof}
    \underline{Non-trivialness:} It is clear from informativeness.
    \\
    \underline{Informativeness:} First we show that $R_n(\Fp^{x_0})<\infty$. Let $h\in\Fp^{x_0}$ and let $\gamma:\mathbb{R}\to\mathbb{R}^d$ be a smooth curve. Then $h\circ\gamma:\mathbb{R}\to\mathbb{R}$ and by the fundamental theorem of calculus
    \[
    h(\gamma(1))-h(\gamma(0))=\int_0^1\frac{\partial}{\partial t}h(\gamma(t))\,\dd t=\int_0^1\nabla h(\gamma(t))^T\dot{\gamma}(t)\,\dd t.
    \]
    For fixed $x,y\in\mathbb{R}^d$ let $\gamma$ be such that $\gamma(t)=ty+(1-t)x$. Then $\gamma(0)=x,\,\gamma(1)=y$ and $\dot{\gamma}(t)=y-x$, therefore
    \[
    h(y)-h(x)=\int_0^1\nabla h(ty+(1-t)x)^T(y-x)\ \dd t.
    \]
    If we subtract $\nabla h(x)^T(y-x)$ from both sides, we get
    \begin{align*}
        h(y)-h(x)-\nabla h(x)^T(y-x)&=\int_0^1\Big[\nabla h(ty+(1-t)x)^T-\nabla h(x)^T\Big](y-x)\ \dd t \\
        &=\Big[\int_0^1\nabla h(ty+(1-t)x)-\nabla h(x)\ \dd t\Big]^T(y-x)\\
        &\leq \Big|\Big|\int_0^1\nabla h(ty+(1-t)x)-\nabla h(x)\ \dd t\Big|\Big|\cdot\|x-y\|\\
        &\leq \int_0^1\big|\big|\nabla h(ty+(1-t)x)-\nabla h(x)\big|\big|\ \dd t\cdot\|y-x\| \\
        &\leq\|y-x\|\cdot\int_0^1 \beta\|ty+(1-t)x-x\|\ \dd t \\
        &=\|y-x\|\cdot\int_0^1 \beta\|t(y-x)\|\ \dd t \\
        &= \beta\cdot\|y-x\|^2\cdot\int_0^1 t\ \dd t \\
        &=\frac{\beta}{2}\cdot\|y-x\|^2.
    \end{align*}
    In summary, 
    \begin{equation}\label{smoothness-leq}
    h(y)\leq h(x)+\nabla h(x)^T(y-x)+\frac{\beta}{2}\|y-x\|^2.
    \end{equation}
    All of the above can be done with $-h$ as well, which yields
    \begin{equation}\label{smoothness-geq}
    h(y)\geq h(x)+\nabla h(x)^T(y-x)-\beta\|y-x\|^2.
    \end{equation}
    Combining Inequalities \eqref{smoothness-leq} and \eqref{smoothness-geq}, and using the Cauchy-Schwarz inequality we have
    \[
    |h(y)-h(x)|\leq\|\nabla h(x)\|\cdot\|y-x\|+\frac{\beta}{2}\|y-x\|^2.
    \]
    Notice that if we set $x:= x_0$, then due to the compactness of $\Xcal$ and because $\|\nabla h(x)\|\leq\alpha$, the above inequality implies that $\Fp^{x_0}$ is uniformly bounded, thus $R_n(\Fp^{x_0})<\infty$ for any $n\in\mathbb{Z}^+$.
    
    Now we would like to create an event $A$, for which we can show that $R_n(\Fe^{x_0}|A)<R_n(\Fp^{x_0}|A)$, and then apply Lemma \ref{lemma-rademacher} to obtain the result of the proposition. There are two possible cases: $\|\nabla f(x_0)\|>0$ or $\|\nabla f(x_0)\|=0$.
    \\
    \newline
    \textbf{Case 1:} $\|\nabla f(x_0)\|>0$.
    \\
    Let $g\in\Fe^{x_0}$ and let $0<\delta<\|\nabla g(x_0)\|$. Let $$\Ucal := \{u\in\R^d\condon \|u\|=1, \nabla g(x_0)^Tu>\delta\}.$$ Note that $\Ucal\neq\emptyset$, since $\nabla g(x_0)/\|\nabla g(x_0)\|\in\Ucal$. Applying Inequality \eqref{smoothness-geq} to $g$, we have
    \begin{align*}
        g(x_0+\lambda u)&\geq g(x_0)+\nabla g(x_0)^T(x_0+\lambda u-x_0)-\frac{\beta}{2}\|\lambda u\|^2\\
        &=g(x_0)+\lambda\cdot\underbrace{\nabla g(x_0)^Tu}_{>0}-\frac{\beta}{2}\lambda^2,
    \end{align*}
    where $\lambda\geq0$. If we choose $$0<\lambda^*<\frac{2}{\beta}\inf_{u\in\Ucal}\left(\nabla g(x_0)^Tu\right),$$ then $\lambda^*\cdot\nabla g(x_0)^Tu-\frac{\beta}{2}(\lambda^*)^2>0$, and thus moving from $x_0$ in the direction of $u$ by $\lambda$ strictly increases the function value.

    Let $n\in\mathbb{Z}^+$, and let $x_1,\dots,x_n$ be a Rademacher sample. Let $A$ be the following event: $\forall i\in[n]:x_i\in\{x_0+\mu u\condon0<\mu\leq\lambda^*,\,u\in\Ucal\}$ and $\forall i\in[n]:\sigma_i=-1$. Event $A$ has a positive probability because the set $\{x_0+\mu u\condon0<\mu\leq\lambda^*,\,u\in\Ucal\}$ defines a cone with positive volume. Let $\tilde g(x):=-g(x)+2g(x_0)$. We observe that $\tilde g\in\Fp^{x_0}$ and that $\tilde g\notin\Fp^{x_0}$, as $\nabla\tilde g(x_0)=-\nabla g(x_0)$. Due to the above argument we have
    $$\frac{1}{n}\sum_{i=1}^n\sigma_ig(x_i)<\frac{1}{n}\sum_{i=1}^n \sigma_ig(x_0)+\underbrace{\frac{1}{n}\sum_{i=1}^n\sigma_i\left(\mu_i\cdot\nabla g(x_0)^Tu_i-\frac{\beta}{2}\mu_i^2\right)}_{C(x)<0}$$
    and
    $$\frac{1}{n}\sum_{i=1}^n\sigma_ig(x_i)-2C(x)<\frac{1}{n}\sum_{i=1}^n\sigma_i\tilde g(x_i).$$
    Since the constant $C(x)$ does not depend on $g$, only on the sample $x_1,\dots,x_n$, it holds that 
    $$\sup_{g\in\Fe^{x_0}}\frac{1}{n}\sum_{i=1}^n \sigma_ig(x_i)<\sup_{\tilde g\in\Fp^{x_0}}\frac{1}{n}\sum_{i=1}^n\sigma_i\tilde g(x_i),$$
    which implies $R_n(\Fe^{x_0}|A)<R_n(\Fp^{x_0}|A)$.\\
    \newline
    \textbf{Case 2:} $\|\nabla f(x_0)\|=0$.\\
    Let $g\in\Fe^{x_0}$. If $\|\nabla f(x_0)\|=0$, then by Inequality \eqref{smoothness-geq} we have 
    \begin{align*}
    g(x_0+\lambda u)&\geq g(x_0)-\frac{\beta}{2}\|\lambda u\|^2\\
    &=g(x_0)-\frac{\beta}{2}\lambda^2,
    \end{align*}
    for any $u\in\R^d$, $\|u\|=1$ and any $\lambda\geq0$. Let $0\neq v\in\R^d$ with $\|v\|\leq\alpha$, let $0>\delta>-\|v\|,$ and define $$\Ucal := \{u\in\R^d\condon \|u\|=1, v^Tu<\delta\}.$$ Again $\Ucal\neq\emptyset$ because $-v/\|v\|\in\Ucal$. Let $\tilde g\in\Fp^{x_0}$ be such that $\nabla\tilde g(x_0)=v$, thus $\tilde g\notin\Fe^{x_0}$. Applying Inequality \eqref{smoothness-leq} to $\tilde g$, we obtain
    \begin{align*}
        \tilde g(x_0+\lambda u)&\leq \tilde g(x_0)+\lambda\cdot \underbrace{v^Tu}_{<0}+\frac{\beta}{2}\lambda^2,
    \end{align*}
    where $\lambda\geq0$. If we choose $$0<\lambda^*<-\frac{1}{\beta}\inf_{u\in\Ucal}\left(v^Tu\right),$$ then $-\frac{\beta}{2}(\lambda^*)^2>\lambda^*\cdot v^Tu+\frac{\beta}{2}(\lambda^*)^2$.

    Let $n\in\mathbb{Z}^+$, and let $x_1,\dots,x_n$ be a Rademacher sample. Let $A$ be the following event: $\forall i\in[n]:x_i\in\{x_0+\mu u\condon0<\mu\leq\lambda^*,\,u\in\Ucal\}$ and $\forall i\in[n]:\sigma_i=-1$. Event $A$ has a positive probability because the set $\{x_0+\mu u\condon0<\mu\leq\lambda^*,\,u\in\Ucal\}$ defines a cone with positive volume. Because of the above arguments, we have
    $$\frac{1}{n}\sum_{i=1}^n\sigma_ig(x_i)<\frac{1}{n}\sum_{i=1}^n\sigma_ig(x_0)-\underbrace{\frac{1}{n}\sum_{i=1}^n\sigma_i\frac{\beta}{2}\mu_i^2}_{C(x)}.$$
    It also holds that
    \begin{align*}\frac{1}{n}\sum_{i=1}^n\sigma_i\tilde g(x_i)&\geq\frac{1}{n}\sum_{i=1}^n\sigma_i\left(g(x_0)+\mu_i\cdot v^Tu_i+\frac{\beta}{2}\mu_i^2\right)\\&>\frac{1}{n}\sum_{i=1}^n\sigma_i\left(g(x_0)+\frac{\beta}{2}\mu_i^2\right)\\&=\frac{1}{n}\sum_{i=1}^n\sigma_ig(x_0)+\frac{1}{n}\sum_{i=1}^n\sigma_i\frac{\beta}{2}\mu_i^2.
    \end{align*}
    Therefore,
    $$\frac{1}{n}\sum_{i=1}^n\sigma_ig(x_i)-2C(x)<\frac{1}{n}\sum_{i=1}^n\sigma_i\tilde g(x_i).$$
    Since the constant $C(x)$ does not depend on $g$, only on the sample $x_1,\dots,x_n$, it holds that 
    $$\sup_{g\in\Fe^{x_0}}\frac{1}{n}\sum_{i=1}^n \sigma_ig(x_i)<\sup_{\tilde g\in\Fp^{x_0}}\frac{1}{n}\sum_{i=1}^n\sigma_i\tilde g(x_i),$$
    which implies $R_n(\Fe^{x_0}|A)<R_n(\Fp^{x_0}|A)$.
\end{proof}

\subsection{Proof of Proposition \ref{prop-gradient-restricted-polynomials} (Constrained polynomials)}\label{proof-prop-gradient-restricted-polynomials}

\begin{proof}
    \underline{Non-trivialness:} It clearly holds that $\Fe^{x_0}\subsetneq\Fp^{x_0}$.
    \\
    \underline{Informativeness:} First, we observe that if $\varphi\in\mathcal{P}_{D,M}$, then $\varphi$ is bounded:
    \[
        \varphi(x)=\sum_{|\alpha|\leq D}\underbrace{a_\alpha}_{\text{bounded by assumption}}\cdot \underbrace{\left(x^{(1)}\right)^{\alpha_1}\cdots \left(x^{(d)}\right)^{\alpha_d}}_{\text{bounded on $\Xcal$ (compact)}}.
    \]
    Now note that if $\varphi\in\mathcal{P}_{D,M}$, then $\varphi\in C^\infty$ as well. Look at a partial derivative of $\varphi$:
    \[
    \frac{\partial}{\partial x^{(i)}}\varphi(x)=\sum_{|\alpha|\leq D,\,\alpha_i\geq1} \underbrace{a_\alpha}_{\text{bounded by assumption}}\cdot \underbrace{\alpha_i}_{\leq D}\cdot \underbrace{\left(x^{(1)}\right)^{\alpha_1}\cdots \left(x^{(i)}\right)^{\alpha_i-1}\cdots \left(x^{(d)}\right)^{\alpha_d}}_{\text{bounded on }\Xcal\text{ (compact)}}.
    \]
    This holds for all $i\in[d]$, therefore, there exists $M'$ so that all partial derivatives of $\varphi$ are in $\mathcal{P}_{D-1,M'}$. The previous equation also implies that there exists $\alpha\geq0$ such that for all $\varphi\in\Pcal_{D,M}$ and for all $x\in\Xcal$ it holds that $\|\nabla\varphi(x)\|\leq\alpha$. 
    
    Using a similar argument, we can conclude that there exists $M''$ such that all second-order partial derivatives of $\varphi$ are in the class $\mathcal{P}_{D-2,M''}$ and are bounded. This means that the elements of the Hessian $H_\varphi$ of any function $\varphi\in\mathcal{P}_{D,M}$ are bounded. In particular, this implies that $\|H_\varphi\|_F$ is bounded, which is an upper bound on the spectral norm of the Hessian. Thus, we get that functions in $\mathcal{P}_{D,M}$ have bounded curvature.
    \\
    As a next step, we prove that bounded curvature implies Lipschitz continuous gradients. Let $\varphi$ be sufficiently differentiable. It is to be shown that
    \[
    \forall x\in\Omega:\|H_\varphi(x)\|_2\leq\beta \Rightarrow \forall x,y\in\Omega: \|\nabla\varphi(x)-\nabla\varphi(y)\|\leq\beta\cdot\|x-y\|.
    \]
    Let $x,y\in\Omega$ and let $\gamma(t):= x+t(y-x),\, t\in[0,1]$. Define $g(t):=\nabla \varphi(\gamma(t))$. Then
    \begin{align*}
        \|\nabla \varphi(y)-\nabla \varphi(x)\|&=\|g(1)-g(0)\|\\
        &=\Big|\Big|\int_0^1\dot{g}(t)\, \dd t\Big|\Big|\\
        &=\Big|\Big|\int_0^1H_\varphi(\gamma(t))\cdot\dot{\gamma}(t)\, \dd t\Big|\Big|\\
        &=\Big|\Big|\int_0^1H_\varphi(\gamma(t))\cdot(y-x)\,\dd t\Big|\Big|\\
        &\leq\int_0^1\|H_\varphi(\gamma(t))\cdot(y-x)\|\, \dd t \\
        &\leq \int_0^1 \|H_\varphi(\gamma(t))\|_2\cdot\|y-x\|\,\dd t\\
        &\leq\int_0^1\beta\cdot\|y-x\|\,\dd t\\
        &=\beta\cdot\|y-x\|,
    \end{align*}
    which is what we wanted to show.

    In summary, we have that $\Pcal_{D,M}\subseteq\Dcal_{\alpha,\beta}$ for some $\alpha$ and $\beta$. Therefore, the result follows from Proposition \ref{prop-gradient-curvature}.
\end{proof}

\subsection{Proof of Proposition \ref{prop-gradient-locally-similar} (Local stability)}\label{proof-prop-gradient-locally-similar}

\begin{proof}
    Since bounded gradients imply Lipschitz continuity, functions in $\Fcal$ are $\alpha$-Lipschitz continuous. Hence, by the compactness of $\Xcal$ we can conclude that $\Fp^{x_0}$ is uniformly bounded, and thus $R_n(\Fp^{x_0})<\infty$. 
    An $r$-$\delta$-locally stable gradient explanation 
    $\exgrad^\#(f,x_0)$     not only tells us the gradient $\nabla f(x_0)$ of the function, but also the particular locality parameters $r$ and $\delta$. Hence, for all $\varphi\in\Fe^{x_0}$ we know that 
    $$\forall x,y\in B_r(x_0): \|\nabla \varphi(x)-\nabla \varphi(y)\|\leq\delta\|x-y\|.$$ In particular, all functions $\phi \in \Fe^{x_0}$ have $\delta$-Lipschitz continuous gradients.
    
    Let $n\geq1$, and let $A$ be the event that all Rademacher variables fall into $B_r(x_0)$. This event has positive probability. Because of Lemma \ref{lemma-rademacher}, in order to prove the statement, it is enough to show that $R_n(\Fe^{x_0}|A)<R_n(\Fp^{x_0}|A)$. But if event $A$ happens, then it is exactly the same as the setting of Proposition \ref{prop-gradient-curvature}, which proves the result.
\end{proof}

\section{Extended results on gradient explanations}

\subsection{Informativeness over piecewise linear functions on a known grid}\label{section-gradient-piecewise-linear}

\begin{proposition}[\textbf{Gradient explanations on piecewise linear functions on a known grid are informative}]\label{prop-gradient-piecewise-linear}
    Consider a compact data space $\Xcal\subseteq\R^d$ with a probability distribution $\Pr$ that has a positive density. Let $\Fcal$ be the class of bounded piecewise linear functions on a known grid $\Xcal_1,\dots,\Xcal_m\subseteq\Xcal$:
    \[
    \Fcal:=\left\{f:\Xcal\to\R\ \big|\ \forall k\in[m]:f\big|_{\Xcal_k}(x)=w_k^Tx+b_k, \ \|w_k\|, |b_k|<M\right\}.
    \]
    Over this space, gradient explanations $\exgrad(f,x_0)$ are non-trivial and informative for all $f\in\Fcal$, all $x_0\in\operatorname{int}(\Xcal)$, and all $n\geq1$.
\end{proposition}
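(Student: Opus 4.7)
Proof plan for Proposition \ref{prop-gradient-piecewise-linear}.

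The strategy is to reduce the problem to the setting of Proposition \ref{prop-gradient-linear} by restricting attention to the single piece of the known grid that contains $x_0$. Since $x_0\in\operatorname{int}(\Xcal)$ and the grid is fixed, there is some $k\in[m]$ with $x_0\in\operatorname{int}(\Xcal_k)$ (the exceptional case $x_0$ lying on the boundary between pieces can be excluded, as the gradient is not defined there). On this piece, $f$ coincides with the affine linear function $x\mapsto w_k^T x+b_k$, and any $g\in\Fe^{x_0}$ must satisfy $g(x_0)=f(x_0)$ and $\nabla g(x_0)=w_k$. As in Proposition \ref{prop-gradient-linear}, these two conditions uniquely pin down $g$ on all of $\Xcal_k$: the gradient forces the slope on the piece to equal $w_k$, and then the value at $x_0$ forces the intercept to equal $b_k$. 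Crucially, however, $g$ is free to behave like any admissible affine linear function on each of the other pieces $\Xcal_\ell$, $\ell\neq k$.

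Non-triviality is then immediate, because $\Fp^{x_0}$ contains many piecewise linear functions that differ from $f$ on $\Xcal_k$ (e.g.\ with a perturbed slope $w_k+\delta v$ for small admissible $\delta$ and any $v\neq 0$), all of which are excluded from $\Fe^{x_0}$. For informativeness, I first observe that uniform boundedness of the $w_k$ and $b_k$ together with compactness of $\Xcal$ implies $R_n(\Fp^{x_0})<\infty$, so Lemma \ref{lemma-rademacher} applies. Let $r>0$ be small enough that $B_r(x_0)\subseteq\operatorname{int}(\Xcal_k)$, and let $A$ be the event that all Rademacher points $x_1,\dots,x_n$ lie in $B_r(x_0)$, lie on the positive side of the hyperplane $\{x:\nabla f(x_0)^T(x-x_0)>0\}$, and carry labels $\sigma_i=+1$. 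Since $\Pr$ has positive density, $A$ has positive probability.

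Conditional on $A$, the argument is exactly that of Proposition \ref{prop-gradient-linear} applied on the piece $\Xcal_k$: the unique element of $\Fe^{x_0}$ restricted to $B_r(x_0)$ is $f$ itself, whereas a perturbation $\varphi_\delta(x):=f(x)+\delta\,\nabla f(x_0)^T(x-x_0)$ with a sufficiently small $\delta>0$ (chosen so that the perturbed slope $w_k+\delta\nabla f(x_0)$ still has norm strictly less than $M$, and modified on each other piece to remain in the class) defines an element of $\Fp^{x_0}$ with strictly larger empirical Rademacher sum on every realisation in $A$. Hence $R_n(\Fe^{x_0}\mid A)<R_n(\Fp^{x_0}\mid A)$, and Lemma \ref{lemma-rademacher} gives $R_n(\Fe^{x_0})<R_n(\Fp^{x_0})$.

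The main obstacle I anticipate is the bookkeeping needed to ensure that the perturbed function $\varphi_\delta$ really lies in $\Fcal$: one needs to specify admissible coefficients on every other piece $\Xcal_\ell$ (e.g.\ by leaving them equal to those of $f$) and to verify that the perturbed slope on $\Xcal_k$ still satisfies the norm constraint $\|w_k+\delta\nabla f(x_0)\|<M$, which requires $\delta$ to be chosen in terms of the gap $M-\|w_k\|$. Apart from this, everything reduces cleanly to the linear case already established.
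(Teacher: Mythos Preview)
Your proposal is correct and follows essentially the same approach as the paper: both reduce to Proposition~\ref{prop-gradient-linear} by conditioning on the event that all Rademacher points fall into the grid cell $\Xcal_k$ containing $x_0$, observe that uniform boundedness gives $R_n(\Fp^{x_0})<\infty$, and then invoke Lemma~\ref{lemma-rademacher}. The only cosmetic difference is that the paper takes $A$ to be the event that all points lie in $\operatorname{int}(\Xcal_k)$ and defers the remaining details entirely to Proposition~\ref{prop-gradient-linear}, whereas you spell out the ball $B_r(x_0)$, the half-space condition, and the perturbation $\varphi_\delta$ explicitly; your extra bookkeeping about $\|w_k+\delta\nabla f(x_0)\|<M$ is exactly the care needed and is implicit in the paper's reference.
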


\begin{proof}
    \underline{Non-trivialness:} If $x_0\in\Xcal_k$, we can clearly modify $w_k$ and $b_k$ suitably to show non-trivialness.
    \\
    \underline{Informativeness:} Since $\Xcal$ is compact and $\|w_k\|,|b_k|<M$ for all $k\in[m]$, the function class $\Fcal$ is uniformly bounded, therefore, $R_n(\Fcal)<\infty$. Hence, we can use Lemma \ref{lemma-rademacher} to show informativeness. 
    
    Let $x_0\in\Xcal_k$ and let $A$ be the event that all Rademacher variables fall into $\operatorname{int}(\Xcal_k)$. This is exactly the setting of Proposition \ref{prop-gradient-linear}, which proves $R_n(\Fe^{x_0}|A)<R_n(\Fp^{x_0}|A)$, and thus informativeness.
\end{proof}

\subsection{Results using only the largest gradient component}\label{appendix-gradient-largest-component}

We believe that many results in Section \ref{sec-gradient-explanations} still hold if only the largest component of the gradient and its index are given as an explanation: $$\Ecal(f,x_0):=\left(\max_{j\in[d]}\left|\frac{\partial}{\partial x^{(j)}}f(x_0)\right|,\operatorname*{argmax}_{j\in[d]}\left|\frac{\partial}{\partial x^{(j)}}f(x_0)\right|\right).$$ In the following, we illustrate how a proposition and its proof might look like in this case.\\

\begin{proposition}[\textbf{Largest gradient component explanations on the class of linear functions are informative}] \label{prop-gradient-component-linear}
Consider a compact data space $\Xcal\subseteq\R^d$ with a probability distribution $\Pr$ that has a positive density. Let $\Fcal$ be the class of linear functions with bounded coefficients
    \[
    \Lcal^M:=\left\{f:\Xcal\to\mathbb{R}\condon f(x)=w^Tx+b, \ \|w\|,|b|<M\right\}.
    \]
    Over this space,  
    largest gradient component explanations $\ex(f,x_0)$ are non-trivial and informative for all $f \in \Fcal$, for all $x_0 \in \operatorname{int}(\Xcal)$, and all $n\geq1$.
\end{proposition}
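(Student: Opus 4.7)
The plan is to mirror the argument of Proposition~\ref{prop-gradient-linear}, accounting for the fact that $\Fe^{x_0}$ is no longer a singleton. Write $w^*:=\nabla f(x_0)$, $j^*:=\argmax_j|w^*_j|$, and $M_0:=|w^*_{j^*}|$. Since $\|w^*\|<M$, we have $M_0<M$, and $\Fe^{x_0}$ consists of those $g(x)=w^Tx+b\in\Lcal^M$ with $g(x_0)=f(x_0)$, $\argmax_j|w_j|=j^*$, and $|w_{j^*}|=M_0$. In particular, every $w$ arising in $\Fe^{x_0}$ satisfies $\|w\|_\infty=M_0$, which is strictly less than the $\ell_2$-budget $M$ enjoyed by $\Fp^{x_0}$; this gap is what ultimately drives the result.

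For \emph{non-trivialness}, I would consider the one-parameter family $g_\alpha(x):=f(x)+\alpha\sign(w^*_{j^*})(x-x_0)^{(j^*)}$ for $\alpha>0$. By continuity of the maps $w\mapsto\|w\|$ and $w\mapsto f(x_0)-w^Tx_0$, we have $g_\alpha\in\Lcal^M$ and $g_\alpha(x_0)=f(x_0)$ for all sufficiently small $\alpha$, so $g_\alpha\in\Fp^{x_0}$. However, the $j^*$-th coordinate of its gradient has magnitude $M_0+\alpha\neq M_0$, so $g_\alpha\notin\Fe^{x_0}$.

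For \emph{informativeness}, $\Lcal^M$ is uniformly bounded on the compact $\Xcal$, hence $R_n(\Fp^{x_0})<\infty$ and Lemma~\ref{lemma-rademacher} applies. Fix $\alpha>0$ small enough that the non-trivialness construction above lies in $\Lcal^M$, and pick $c>0$ together with a radius $r>0$ so that the open ball $B$ of radius $r$ around $x_0+c\sign(w^*_{j^*})e_{j^*}$ is contained in $\Xcal$. Let $A$ be the event that $x_i\in B$ and $\sigma_i=+1$ for all $i\in[n]$; this has positive probability since $\Pr$ has a positive density. On $A$, setting $S:=\sum_i(x_i-x_0)$, every linear $g(x)=w^Tx+b\in\Fp^{x_0}$ satisfies $\sum_i\sigma_i g(x_i)=w^TS+nf(x_0)$, so it suffices to compare $\sup_{\Fp^{x_0}}w^TS$ with $\sup_{\Fe^{x_0}}w^TS$.

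The main obstacle is this last comparison: unlike the singleton case of Proposition~\ref{prop-gradient-linear}, maximizing over $\Fe^{x_0}$ gives the non-negligible upper bound $M_0\|S\|_1$ (the $j\neq j^*$ coordinates may freely point along $\sign(S_j)$ with magnitude up to $M_0$), which is not automatically beaten by any generic perturbation of $f$. The remedy is to choose $r\ll c$ so that the off-axis contributions $|S_j|$ (bounded by $nr$) are negligible compared to $|S_{j^*}|\ge n(c-r)$. Evaluating at the $g_\alpha$ from the non-trivialness step gives $w_\alpha^TS=(M_0+\alpha)|S_{j^*}|+\sum_{j\neq j^*}w^*_jS_j$, and a short bookkeeping shows that the gain $\alpha|S_{j^*}|$ strictly dominates the residual $\sum_{j\neq j^*}(M_0|S_j|-w^*_jS_j)\ge 0$, yielding $\sup_{\Fp^{x_0}}w^TS>\sup_{\Fe^{x_0}}w^TS$. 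Hence $R_n(\Fe^{x_0}\mid A)<R_n(\Fp^{x_0}\mid A)$, and Lemma~\ref{lemma-rademacher} concludes the proof.
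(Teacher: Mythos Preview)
Your proof is correct and follows the same high-level strategy as the paper's sketch: perturb the $j^*$-th weight beyond $M_0$ to obtain an element of $\Fp^{x_0}\setminus\Fe^{x_0}$, then apply Lemma~\ref{lemma-rademacher} on an event where the samples lie in the favorable direction with $\sigma_i=+1$.

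Where you differ is in how the supremum comparison is handled. The paper's sketch uses the half-space event $\{x^{(1)}>0\}$ and, for each $g\in\Fe^{x_0}$, exhibits a $g$-dependent perturbation $\tilde g=g+\delta e_1^T(\cdot-x_0)$ that beats it; this leaves the passage from ``each $g$ is beaten'' to ``$\sup_{\Fe^{x_0}}<\sup_{\Fp^{x_0}}$'' implicit. You instead localize the event to a small ball along $e_{j^*}$, which forces $|S_{j^*}|\ge n(c-r)$ while $|S_j|\le nr$ for $j\neq j^*$. This lets you compare the two suprema directly: the uniform bound $\|w\|_\infty\le M_0$ on $\Fe^{x_0}$ caps its supremum at $M_0\|S\|_1$, while the single fixed witness $g_\alpha$ exceeds this bound by at least $\alpha n(c-r)-2M_0(d-1)nr>0$ for $r$ small enough. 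The gain is a uniform positive gap over the event, so the strict inequality of conditional Rademacher complexities follows cleanly. Your route is thus a bit more careful than the paper's sketch, at the cost of a more restrictive event; both buy the same conclusion. One minor omission: the edge case $M_0=0$ (i.e.\ $w^*=0$) makes $\sign(w^*_{j^*})$ undefined, but the argument goes through with any fixed sign.
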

\noindent
{\bf Proof sketch.}
    \underline{Non-trivialness:} It is clear from informativeness.
    \\
    \underline{Informativeness:} Without loss of generality assume that $\operatorname{argmax}_{j\in[d]}\left|\frac{\partial}{\partial x^{(j)}}f(x_0)\right|=1$. Let $$g(x)=w^Tx+b,\quad g\in\Fe^{x_0}.$$ We denote the first standard basis vector by $e_1:=(1,0,\dots,0)$ and construct $$\tilde g(x)=g(x)+\delta e_1^T(x-x_0),\quad \tilde g\in\Fp^{x_0},$$ where $\delta>0$ is such that $\|\nabla\tilde g(x)\|=\|\nabla g(x)+\delta e_1\|<M$.
    Since $\Fcal$ is uniformly bounded, and thus $R_n(\Fcal)<\infty$, we can use Lemma \ref{lemma-rademacher} to prove the result. Let $n\geq1$ and let $x_1\dots,x_n$ be Rademacher variables. Let $A$ be the event that $\forall i\in[n]: x_i\in\{x\in\operatorname{int}(\Xcal)\condon x^{(1)}>0\}$ and $\sigma_i=+1$. Then it holds that $$\frac{1}{n}\sum_{i=1}^n\sigma_ig(x_i)<\frac{1}{n}\sum_{i=1}^n\sigma_i\tilde g(x_i),$$
    and we have $R_n(\Fe^{x_0}|A)<R_n(\Fp^{x_0}|A).$
\ulesqed\\

\section{Proofs about SHAP explanations}

\subsection{Proof of Proposition \ref{prop-shap-known-grid} (Grid functions)} \label{proof-shap-grid}
\begin{proof}
    Consider a Rademacher sample $x_1,...,x_n$ such that there lies one point in every grid cell besides the cell of $x_0$. This happens with positive probability if $n\ge k^d-1$. Furthermore, consider labels $\sigma_1,...,\sigma_n$ such that all points within the same cell have the same label. We call this event $A$. Then, there is a function $g\in\Fp^{x_0}$ which can interpolate these labels. However, $g$ is determined on all grid cells, so its expected value is also determined by the $\sigma_i$. %
    The SHAP values of $x_0$ give us the expected value of $f$ as $\Phi_1(x_0)+...+\Phi_d(x_0) =  f(x_0)-\E(f(X))$. Hence, all functions in $\Fe^{x_0}$ have the same expected value as $f$. This means, if we choose the $\sigma_i$ such that $\E(g(X))\ne \E(f(X))$, we may conclude $g\notin\Fe^{x_0}$. In other words, there is a function in $\Fp^{x_0}$, which can interpolate our Rademacher points, but no such function in $\Fe^{x_0}$. So, for these labels we have
$$1= \sup_{g\in\Fp^{x_0}} \sum_{i=1}^{n} \sigma_ig(x_i) > \sup_{g\in\Fe^{x_0}} \sum_{i=1}^{n} \sigma_ig(x_i), $$
and taking expectation over $A$ yields $R_n(\Fp^{x_0}|A)> R_n(\Fe^{x_0}|A)$. As $A$ occurs with positive probability, Lemma~\ref{lemma-rademacher} concludes the proof.
\end{proof}

\subsection{Proof of Example \ref{prop-shap-manifold} (Grid function on data manifold)}\label{proof-prop-shap-manifold}

\begin{proof}
    We call the grid cells $C_{ij}$, where $C_{ij}:= [i/k, (i+1)/k)\times [j/k, (j+1)/k)$. Let $x_0$ be the point to be explained on the data manifold and call its cell $C_{ll}$. 
    The SHAP values of $x_0$ are given by 
    \begin{align*}
        \Phi_1(x_0,f)&=\frac{1}{2}\left(f(x_0)-\E_{X^{(1)}}\left(f\left(X^{(1)},x_0^{(2)}\right)\right) + \E_{X^{(2)}}\left(f\left(x_0^{(1)},X^{(2)}\right)\right)-\underbrace{\E(f(X))}_{=0}\right)\\
        \Phi_2(x_0,f)&=\frac{1}{2}\left(f(x_0)-\E_{X^{(2)}}\left(f\left(x_0^{(1)}, X^{(2)}\right)\right) + \E_{X^{(1)}}\left(f\left(X^{(1)}, x_0^{(2)}\right)\right)-\underbrace{\E(f(X))}_{=0}\right).
    \end{align*}
    Note that the SHAP values only depend on the cells $C_{il}, C_{lj}$, for $i,j=1,...,k$, because one of the coordinates plugged into $f$ always comes from $x_0$. For $i,j\neq l$, these cells have probability zero. We also know $f(C_{ll})=f(x_0)$ across $\Fp^{x_0}$. \\
    Take a function $h\in\Fe^{x_0}$. Now, for every $g\in\Fp^{x_0}$ we may define $\tilde{g}$ such that $\tilde{g}=g$ on $C_{ii}$ for $i=1,...,k$ and $\tilde{g} =h$ on $C_{il}, C_{lj}$, for $i,j=1,...,k$. Then, for every $g$, the function $\tilde{g}$ has the same SHAP values as $h$ as they only depend on $C_{il}$ and $C_{lj}$, where $l=1,...,k$. Hence, for every $g\in\Fp^{x_0}$, the according function $\tilde{g}$ lies in $\Fe^{x_0}$. At the same time, our function $\tilde{g}$ has the same accuracy as $g$ as all data is sitting on $C_{ii}$, where $i=1,...,k$. This implies
    \begin{align}
        R_n(\Fp^{x_0}) = \sup_{g\in\Fp^{x_0}} \frac{1}{n} \sum_{i=1}^n \sigma_ig(x_i) 
        =\sup_{g\in\Fe^{x_0}} \frac{1}{n} \sum_{i=1}^n \sigma_ig(x_i) 
        = R_n(\Fe^{x_0}),
    \end{align}
    which was to be proven. 
\end{proof}

\subsection{Proof of Proposition \ref{prop-shap-trees-arbitrary} (Unbounded trees)} \label{proof-prop-shap-trees-arbitrary}

\begin{proof}
\begin{figure}
    \centering
    \includegraphics[width=0.7\linewidth]{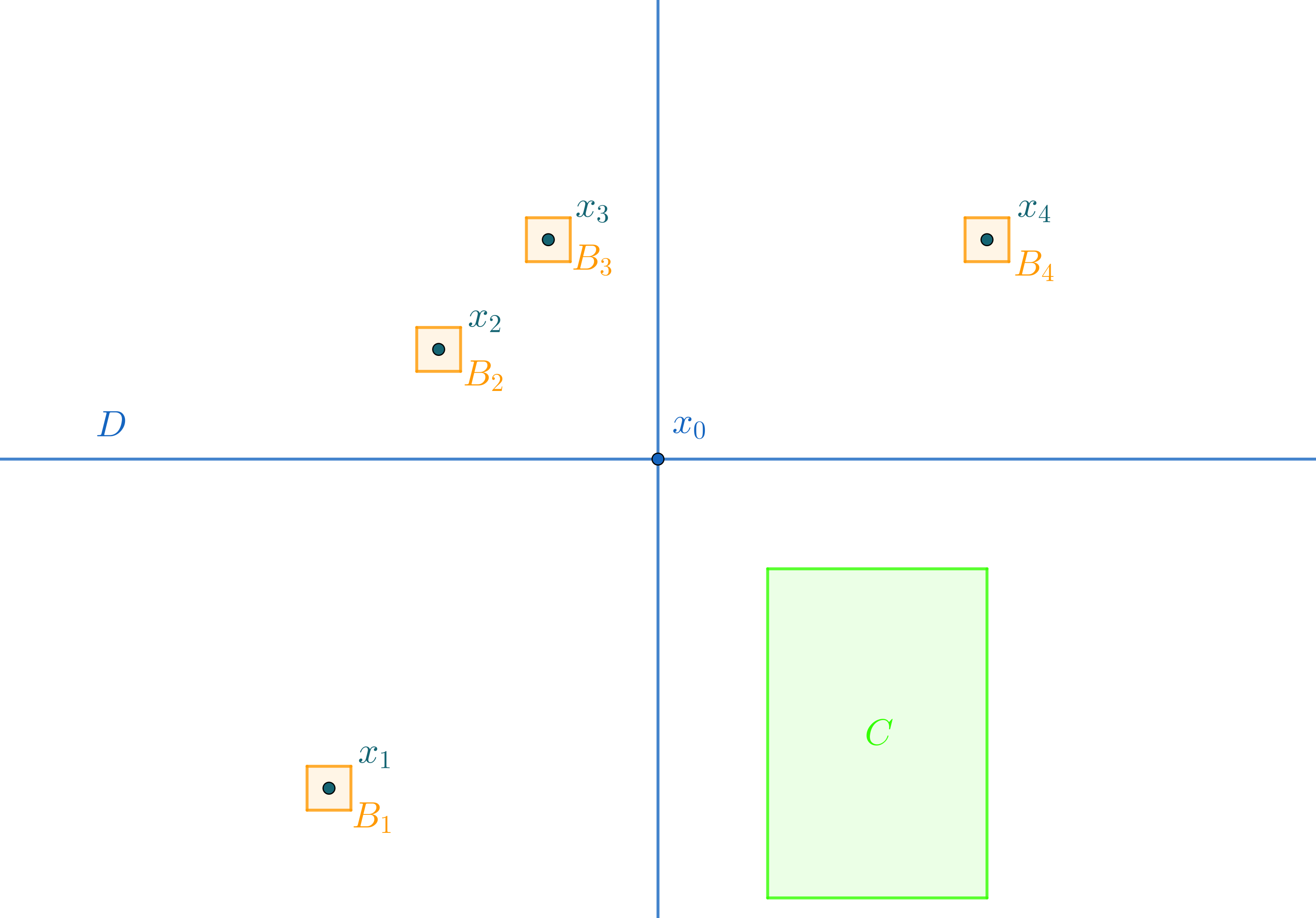}
    \caption{\textcolor{blue}{$D$} are all points that share coordinates with $x_0$, those are relevant for the SHAP values. The leaves \textcolor{orange}{$B_1,...,B_4$} around $x_1,...,x_4$ are chosen small enough so they do not intersect $D$. On \textcolor{green}{$C$}, we correct for the expected value. }
    \label{figure-shap-unbounded-trees}
\end{figure}
    The SHAP values at $x_0$ only depend on the marginal expectations $\E_{X}(f(x^{(S)},X^{(\bar{S})}))$ for all $S\subseteq [d]$. Here, $x^{(S)}$ refers to all the features of $x$ that are in $S$ and $X^{(\bar{S})}$ to these that are not. That means, the SHAP values of $x_0$ only depend on $\E(f(X))$ and the values $f(x)$ for points $x$ that share at least one coordinate with $x_0$. We denote the set of all points that share a coordinate with $x_0$ by $D\subseteq \Xcal$ (\textcolor{blue}{blue} part in Figure \ref{figure-shap-unbounded-trees}).\\
    By Lemma \ref{lem-Fexplain-interpolating}, it suffices to show that $\Fe^{x_0}$ is interpolating with respect to binary labels. For $n\in\N$ consider $n$ points $x_1,...,x_n$ with arbitrary binary labels $\sigma_1,...,\sigma_n$. Note that with probability 1, the points $x_1...,x_n$ do not share any coordinate with $x_0$, meaning they do not lie in $D$. \\
    Take some $g\in\Fe^{x_0}$. We are going to define some $\tilde{g}$ such that it satisfies $\tilde{g}(x_i)=\sigma_i$ for $i=0,...,n$ while also $\E(\tilde{g}(X))=\E(g(X))$ and $\tilde{g}=g$ on $D$. Note that this completes the proof as the last two properties ensure that the SHAP values of $\tilde{g}$ in $x_0$ are the same as the ones of $g$, so $\tilde{g}\in\Fe^{x_0}$, and also $\tilde{g}$ fits our points. For $x_1,...,x_n$, we define boxes $B_1,...,B_n$ around them (\textcolor{orange}{orange} squares in Figure \ref{figure-shap-unbounded-trees}), small enough such that they do not intersect $D$. We call $p_i:=\Pr(B_i)$. We now define $\tilde{g}(B_i)=\sigma_i$. This is possible as we deal with trees of arbitrary depth. Now, we only have to make sure that $\E(\tilde{g}(X))=\E(g(X))$, which can be achieved by defining $\tilde{g}$ accordingly away from $D$ and the $B_i$. Concretely, we take a rectangle $C$ that does not intersect $D$ or any $B_i$ (\textcolor{green}{green}  rectangle in Figure \ref{figure-shap-unbounded-trees}). We let $\tilde{g}=g$ on $\Xcal\setminus(C\cup \bigcup_{i=1}^nB_i)$. In order to have $\E(\tilde{g}(X))=\E(g(X))$, we must define 
    $$\tilde{g}\equiv \E_{X\sim C}(g(X)) - \frac{\sum_{i=1}^np_i(\sigma_i-\E_{X\sim B_i}(g(X)))}{\Pr(C)}$$
    on $C$. Note that this value lies between $-1$ and $1$ for sufficiently small boxes, that is, sufficiently small $p_i$. We then compute
    \ba
    \Pr(C)&\cdot \tilde{g}(C) + \sum_{i=1}^n \Pr(B_i)\cdot \tilde{g}(B_i)\\
    &= \Pr(C)\cdot \tilde{g}(C) + \sum_{i=1}^n p_i\sigma_i \\
    &= \Pr(C)\cdot \left(\E_{X\sim C}(g(X)) - \frac{\sum_{i=1}^np_i(\sigma_i-\E_{X\sim B_i}(g(X)))}{\Pr(C)}\right) + \sum_{i=1}^n p_i\sigma_i \\
    &= \Pr(C)\cdot \E_{X\sim C}(g(X)) + \sum_{i=1}^n p_i \E_{X\sim B_i}(g(X))).\\
    \ea
    Together with $\tilde{g}=g$ on $\Xcal\setminus(C\cup \bigcup_{i=1}^nB_i)$, this implies $\E(\tilde{g}(X))=\E(g(X))$.
\end{proof}

\subsection{Proof of Proposition \ref{prop-shap-gams-arbitrary} (GAMs with unbounded trees)} \label{proof-prop-shap-gams-arbitrary}

\begin{proof}
    For some $g\in\Fcal$ we denote its SHAP values at $x_0$ by $\Phi_1^g(x_0),...,\Phi_d^g(x_0)$. Note that due to the GAM structure, they are given by $\Phi^g_j(x_0) = g_j(x_0^{(j)})- \E(g_j(X^{(j)}))$ \citep{BorLux_shap_2023}. It suffices to show that $\Fe^{x_0}$ is interpolating with respect to binary labels because of Lemma \ref{lem-Fexplain-interpolating}. For that, we consider points $x_1,...,x_n$ with binary labels $\sigma_1,...,\sigma_n$ and construct a function $g\in\Fe^{x_0}$ with $g(x_i)=\sigma_i$. %
    To do so, we first pick some arbitrary $g\in\Fe^{x_0}$. We then construct some $\tilde{g}$, which fits our points and still has the same SHAP values as $g$, so still lies in $\Fe^{x_0}$. We set $\tilde{g}_j=g_j$ for $j=2,...,d$, which implies $\Phi^{\tilde{g}}_j(x_0)=\Phi^g_j(x_0)$ for $j=2,...,d$. The component $\tilde{g}_1$ is constructed in the following. Note that, with probability 1, the points $x_0,x_1,...,x_n$ have all different first coordinates. We define small intervals $I_0,I_1,...,I_n$ around $x_0^{(1)},x_1^{(1)},,...,x_n^{(1)},$ with marginal probability $\Pr(X^{(1)}\in I_i)=p_i$ for all $i=0,...,n$. Note that for $p_i$ small enough, these intervals are always disjoint. As our trees are not restricted in depth, we may define $\tilde{g}_1(I_0)=g_1(x_0)$ and  $\tilde{g}_1(I_i)=\sigma_i-\sum_{j=2}^d g_j(x_i)$ for $i=1,...,n$ leading to $\tilde{g}(x_i)=\sigma_i$. On $\R\setminus \bigcup_{i=0}^n I_i$ we define 
    $$\tilde{g}_1\equiv c:=\frac{\E\left(g_1\left(X^{(1)}\right)\right) -  \sum_{i=0}^n p_i\tilde{g}_1(x_i)}{1-\sum_{i=0}^np_i}.$$
    Note that this value always lies between $ -1$ and 1 if the  $p_i$ are sufficiently small as $\E(g_1(X^{(1)}))\in[-1,1]$. Then, we have
    \ba    \Phi^{\tilde{g}}_1(x_0)&=\tilde{g}_1\left(x_0^{(1)}\right)- \E\left(\tilde{g}_1\left(X^{(1)}\right)\right) =  \tilde{g}_1\left(x_0^{(1)}\right) - \left( \sum_{i=0}^n p_i\cdot \tilde{g}(x_i) + \left(1-\sum_{i=0}^np_i\right)\cdot c\right)\\
    &= \tilde{g}_1\left(x_0^{(1)}\right) - \E\left(g_1\left(X^{(1)}\right)\right)
    =\Phi^g_1(x_0).
    \ea
\end{proof}

\subsection{Proof of Proposition \ref{prop-shap-trees-bounded} (Trees with maximal depth)}
\label{proof-prop-shap-trees-bounded}

\begin{proof}
\begin{figure}
    \centering
    \includegraphics[width=0.7\linewidth]{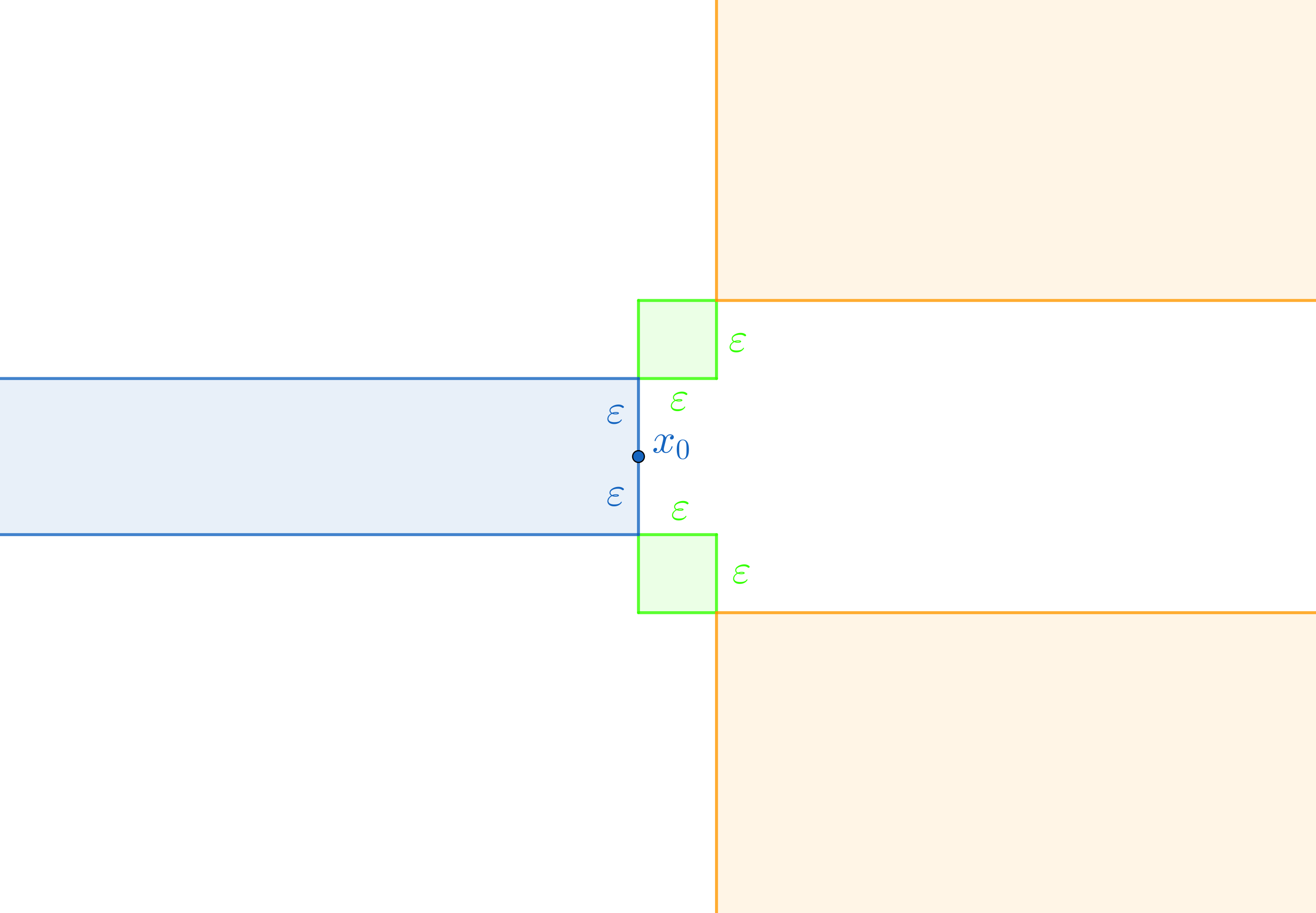}
    \caption{The points $x_1,...,x_{2^{d-1}}$ are placed in the \textcolor{green}{green boxes} and $x_{2^{d-1}+1},..., x_n$ inside the \textcolor{blue}{blue area}. Because all cuts are used for these points, the values of the tree in the \textcolor{orange}{orange area} must be the same as the ones in the green boxes.}
    \label{fig-proof-shap-finite-tree}
\end{figure}
    Note that the expected value of $f$ is given as $\E(f(X)) = f(x_0) - \sum_{i=1}^d \Phi_i(x_0)$ by the efficiency axiom \eqref{eq-efficiency-axiom}. This implies that every $g\in\Fe^{x_0}$ satisfies $\E(g(X))=\E(f(X))$. We are now going to show that with positive probability, there occur Rademacher points with labels that can be interpolated by a function $g\in\Fp^{x_0}$ but not by any function in $\Fe^{x_0}$ due to the restriction on the expected value. If we denote the event of these points and labels occurring by $A$, this means $R_n(\Fe^{x_0}|A)<R_n(\Fp^{x_0}|A)$, which then implies $R_n(\Fe^{x_0})<R_n(\Fp^{x_0})$ by Lemma \ref{lemma-rademacher}.\\
    We assume w.l.o.g. that $\Pr(X^{(1)}\le x_0^{(1)})\le 1/2$, otherwise we may simply mirror the proof. Also, we assume w.l.o.g. $\E(f(X))\le 0$, otherwise we can change all Rademacher labels from $1$ to $-1$ and vice versa.\\
    For some $\varepsilon>0$ we consider the boxes $(x_0^{(1)}, x_0^{(1)}+\varepsilon)\times \prod_{j=2}^d B_j$, where $B_j=(x_0^{(j)}+\varepsilon, x_0^{(j)}+2\varepsilon)$ or $B_j=(x_0^{(j)}-2\varepsilon, x_0^{(j)}-\varepsilon)$, leading to $2^{d-1}$ many boxes (see \textcolor{green}{green} parts in Figure \ref{fig-proof-shap-finite-tree}). We consider points $x_1,...,x_{2^{d-1}}$, one in each box and set $\sigma_1=...=\sigma_{2^{d-1}}=1$. Note that these points occur with positive probability. We now consider more points $x_{2^{d-1}+1},...,x_n$ in $(-\infty, x_0^{(1)})\times \prod_{j=2}^d(x_0^{(j)}-\varepsilon, x_0^{(j)}+\varepsilon)$ (\textcolor{blue}{blue} part of Figure \ref{fig-proof-shap-finite-tree}) such that they occur with positive probability and a tree $g\in\Fp^{x_0}$ needs exactly all its $2^K$ leaves to interpolate $x_1,...,x_n$. This can be achieved in the following way: We construct boxes $C_k:=\prod_{j=1}^d (x_0^{(j)}-k\cdot 2^{-n}\varepsilon, x_0^{(j)}-(k-1)\cdot2^{-n}\varepsilon)$ for $k=1,...,2^K-2$ if $f(x_0)=-1$ and $k=1,...,2^K-1$ if $f(x_0)=1$, which all lie in the blue part of our figure by construction. Now we consider points $x_{2^{d-1}+1},..., x_n$ such that there is at least one point in every $C_k$ and set $\sigma_i=c_f(x_0)$ if $x_i\in C_k$ with $k$ even and $\sigma_i=-c_f(x_0)$ if $x_i\in C_k$ with $k$ odd. Like this, the labels of $x_0, C_1,C_2,..$ are alternating. All points $x_1,...,x_n$ can easily be interpolated by splits in the first coordinate. The points $x_1,...,x_{2^{d-1}}$ all have a first coordinate greater than $x_0^{(1)}$, then there comes $x_0$, and then all $C_k$ which have distinct intervals for their first coordinates. At the same time, a tree cannot separate the points with fewer than $2^K$ leaves: If two points in different boxes $C_{k_1}$ and $C_{k_2}$ were classified (including $C_0:=x_0$) within the same leaf, all $C_k$ for $k_1<k<k_2$ would lie in the same leaf and there would hence be some misclassification. So we need one leaf per box and one for $x_0$ and then another one for $x_1,...,x_{2^{d-1}}$ if $f(x_0)=-1$, leading to a total of $2^K$ leaves.  \\
    Note that the features of $x_1,...,x_n$ lie in the interval $(-\infty,x_0^{(1)}+\varepsilon)$ for feature 1 and $(x_0^{(j)}-2\varepsilon, x_0^{(j)}+2\varepsilon)$ for all features $j=2,...,d$. So, this directly implies that there cannot be any splits outside of these intervals for any feature. In particular, there are no splits in the set $M_\varepsilon:=[x_0^{(1)}+\varepsilon,\infty)\times \prod_{j=2}^d (\infty, x_0^{(j)}-2\varepsilon]\cup [ x_0^{(j)}+2\varepsilon,\infty)$ (\textcolor{orange}{orange} part of Figure \ref{fig-proof-shap-finite-tree}). Hence, the labels of all points in the set $M_\varepsilon$ are dictated by the ones of $x_1,...,x_{2^{d-1}}$ as they are on the ``outside'' of the set that contains the splits. So, $g=1$ on $M_\varepsilon$. As there are points with label $-1$ and in $M_-:=\{X\in\R^d\condon X^{(1)}\le x_0^{(1)}\}$, we know $c_-:=\E_{X\sim M_-}(g(X))>-1$. On the other hand, for $\varepsilon$ small enough, for the region $M_+:=\{X\in\R^d\condon  X^{(1)}> x_0^{(1)}\}$ we know that $c_+:=\E_{X\sim M_+}(g(X))$ gets arbitrarily close to $1$ because $g=1$ on $M_\varepsilon$ and $\bigcup_{\varepsilon>0}M_\varepsilon=M_+$. If we pick $\varepsilon$ so small that $c_+>|c_-|$, we can conclude from $\Pr(M_-)\le 1/2$ that $\E(g(X))>0$, so $g\notin\Fe^{x_0}$.
\end{proof}

\section{Proofs about anchors}

\subsection{Proof of Proposition \ref{prop-anchors-tinfty} (Non-perfect anchors, unbounded trees)} \label{proof-anchor-tinfty}

\begin{proof}
    Using Lemma \ref{lem-Fexplain-interpolating}, it suffices to show that $\Fe^{x_0}$ is interpolating with respect to binary labels. For some $n\in\N$ consider points $x_1,...,x_n$ sampled i.i.d. and labels $\sigma_1,...,\sigma_n$. With probability 1, all points are distinct from $x_0$.  We are going to construct a function $g\in\Fe^{x_0}$ that interpolates the points. This is a function $g\in\Fp^{x_0}$ interpolating $x_1,...,x_n$ such that the anchor $R$ with precision $p$ is valid for $g$. We write $I_=\subseteq[n]$ resp. $I_{\neq}\subseteq[n]$ for the indices of $\sigma_i$ that are equal to, resp. different from $c_f(x_0)$. We define small boxes $B_i$ around each $x_i$, where $i\in I_{\ne}$. Each $B_i$ is defined small enough such that it does not contain $x_0$ or any $x_k$, where $k\in I_=$, and such that $\Pr(\Xcal_R\cap\bigcup_{i\in I_{\ne}} B_i)\le (1-p)\cdot \Pr(\Xcal_R)$. Furthermore, we build a set $B\subseteq \Xcal_R$ out of rectangles, which does not contain $x_0$ or any $x_i$, where $i\in I_=$, such that $\Pr(B \cup (\Xcal_R\cap\bigcup_{i\in I_{\ne}} B_i))=(1-p)\cdot \Pr(\Xcal_R)$. We can now simply define $g(x)=-f(x_0)$ for all $x\in B \cup \bigcup_{i\in I_{\neq}} B_i$ and $g(x)=f(x_0)$ everywhere else. This ensures that $\precision(R,g,x_0)=p$, so $g\in\Fe^{x_0}$. Note that with trees of arbitrary depth, this is possible as they can have arbitrarily many cuts. We hence constructed a function $g\in\Fe^{x_0}$ that fits $x_1,...,x_n$, which completes the proof.
\end{proof}

\subsection{Proof of Proposition \ref{anchor-not-perfect-bounded-tree} (Non-perfect anchors, trees with maximal depth)}
\label{proof-anchor-not-perfect-bounded-tree}

\begin{proof}
    The proof works analogously to the proof of Proposition \ref{prop-shap-trees-bounded}. There, every function in $\Fe^{x_0}$ had the same expected value as $f$. We assumed w.l.o.g. $\Pr(X^{(1)}\le x_0^{(1)})\le 1/2$ and $\E(f(X))\le 0$. Then, there were Rademacher points constructed such that a tree $g\in\Fp^{x_0}$, which is fitting them, had to yield $\Pr(g(X)=1)>1/2$, which means $\E(g(X))>0$, implying $g\notin \Fe^{x_0}$. \\
    In the case of anchors we can consider the space $\Xcal_R$ instead of $\Xcal$ and assume w.l.o.g. that $\Pr_{\Xcal_R}(X^{(1)}\le x_0^{(1)})\le 1/2$ and $p\le 1/2$. If we then construct a Rademacher sample in the same way we get $\Pr(g(X)=c_f(x_0))>1/2$, this leads to a precision $p>1/2$, again implying $g\notin \Fe^{x_0}$. 
\end{proof}

\subsection{Proof of Proposition \ref{prop-anchors-perfect} (Perfect anchors)}
\label{proof-perfect-anchors}
\begin{proof}
    Let $n\in\N$. Consider the event $A$ of all Rademacher points $x_1,...,x_n$ falling into $\Xcal_R$, which happens with positive probability, and $\sigma_1=...=\sigma_n=-c_f(x_0)$. As a function $g\in\Fe^{x_0}$ assigns the label $c_f(x_0)$ on $\Xcal_R$ with probability 1, we have \linebreak $\sup_{g\in\Fe^{x_0}} \sum_{i=1}^n\sigma_i g(x_i) <0$. But a tree in $\Fp^{x_0}$ can interpolate all labels with probability 1 as we deal with trees of arbitrary depth. Hence,
$$R_n(\Fe^{x_0}|A)<R_n(\Fp^{x_0}|A)=1,$$
which implies $R_n(\Fe^{x_0})<R_n(\Fp^{x_0})$ because of Lemma \ref{lemma-rademacher}.
\end{proof}

\section{Proofs about counterfactual explanations}

\subsection{Proof of Proposition \ref{prop-weak-counterfactuals-not-informative-large-spaces} (Weak counterfactuals on large function spaces)}
\label{proof-weak-counterfactuals-not-informative-large-spaces}

\begin{proof}
    Using Lemma \ref{lem-Fexplain-interpolating}, it suffices to show that $\Fe^{x_0}$ is interpolating with respect to binary labels. Let $n\in\N$, $x_1,...,x_n$ be sampled i.i.d. and $\sigma_1,...,\sigma_n\in\{-1,1\}$. We note that $\Fcal$ is interpolating, so with probability $1$ over the choice of $x_1,...,x_n$, there is a function $g\in\Fcal$ which can fit the points $(x_0, f(x_0)), (x_C  , c_f(x_C)), (x_1,\sigma_1), ..., (x_n,\sigma_n)$, where $x_C=\exwcf(f, x_0)$ is the counterfactual. But this means, $g\in\Fe^{x_0}$ and $g$ is fitting $(x_1,\sigma_1), ..., (x_n,\sigma_n)$, so $\Fe^{x_0}$ is interpolating with respect to binary labels.
\end{proof}

\subsection{Proof of Proposition \ref{prop-weak-counterfactuals-informative-small-spaces} (Weak counterfactuals on small function spaces)}
\label{proof-prop-weak-counterfactuals-informative-small-spaces}

\begin{proof}\
\begin{figure}
    \centering
    \includegraphics[width=0.8\linewidth]{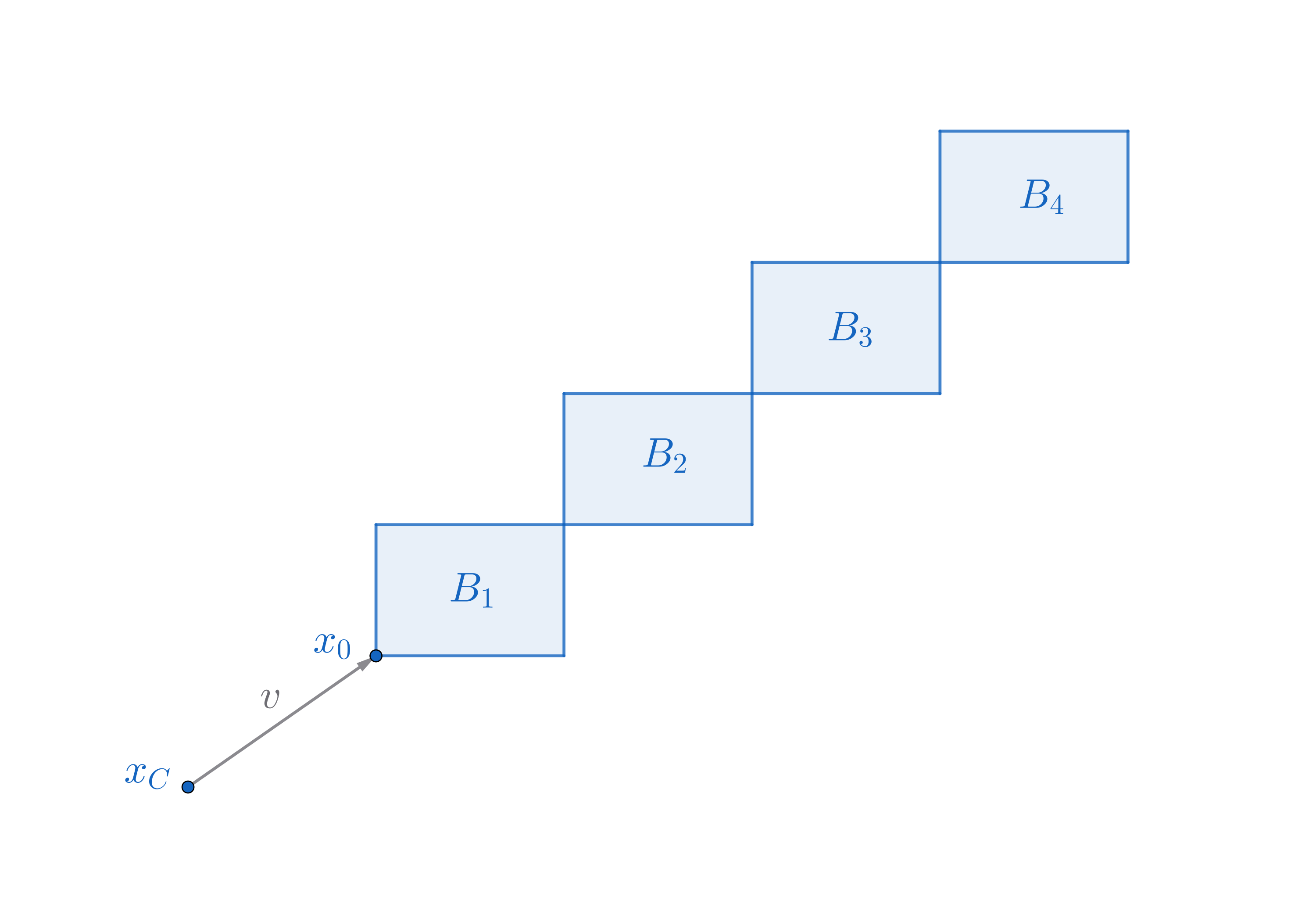}
    \caption{A visualization of the boxes constructed in the proof of Proposition \ref{prop-weak-counterfactuals-informative-small-spaces}.}
    \label{figure-counterfactuals-finite-trees}
\end{figure}
    \begin{enumerate}
        \item We will construct a sample of Rademacher points with binary labels that can be interpolated by a function in $\Fp^{x_0}$ but not by any function in $\Fe^{x_0}$. If we denote the event of these points and labels occurring by $A$, this means $R_n(\Fe^{x_0}|A)<R_n(\Fp^{x_0}|A)$, which then implies $R_n(\Fe^{x_0})<R_n(\Fp^{x_0})$ by Lemma \ref{lemma-rademacher}.\\
        We define $\tilde{v}:=x_0-x_C$. If $\tilde{v}$ has a zero in some of its components, we replace them by $1$ and call the new vector we receive $v$. We then define boxes $B_k:=(x_0^{(1)}+(k-1)v^{(1)}, x_0^{(1)}+kv^{(1)})\times...\times(x_0^{(d)}+(k-1)v^{(d)}, x_0^{(d)}+kv^{(d)})$ for $k=1,...,2^K-1$ (see Figure~\ref{figure-counterfactuals-finite-trees}). Now consider points $x_1,...,x_n\in\bigcup_{k=1}^{2^K-1}B_k$, such that there is at least one point in each box. Note that this happens with positive probability if $n\ge 2^K-1$ as every entry of $v$ is non-zero and all $B_k$ hence have positive probability. We set $\sigma_i=c_f(x_0)$ if $x_i$ lies in a box $B_k$ with $k$ even and we set $\sigma_i=-c_f(x_0)$ if $x_i$ lies in a box $B_k$ with $k$ odd. Like that, the labels of $x_0, B_1,B_2,...,B_{2^K-1}$ are alternating. Then, a tree with depth at most $K$ can interpolate $x_0,...,x_n$ just by using the first coordinate, for example. However, it needs all its $2^K$ leaves: If two points from distinct boxes $B_{k_1}$ and $B_{k_2}$ with the same label were classified within the same leaf, all boxes $B_k$ for $k_1<k<k_2$ would also lie within this leaf, so there would be at least one point misclassified. But this now means that such a tree cannot fit the label of the counterfactual $x_C$: For every dimension $j$, by construction of the points it holds either $x_C^{(j)}\le x_0^{(j)}<x_1^{(j)},...,x_n^{(j)}$ or $x_C^{(j)}\ge x_0^{(j)}>x_1^{(j)},...,x_n^{(j)}$, so $x_C$ must lie in the same leaf as $x_0$. This proves that there is a function in $\Fp^{x_0}$ which can interpolate $x_1,...,x_n$ but no function in $\Fe^{x_0}$, which can do so. This completes the proof.

        \item Due to the Lipschitz-continuity, we know that for all $g\in\Fcal$ and all points $x\in\Xcal$ with $\|x-x_C\|<g(x_C)/L$ we have $\sign(g(x))=\sign(g(x_C))$. Now consider the event $A$ that all Rademacher points $x_1,...,x_n$ fall into $\{x\in\Xcal\condon \|x-x_C\|<|f(x_C)|/L\}$, which happens with positive probability, and have labels $\sigma_1=...=\sigma_n=-\sign(f(x_C))=-c_f(x_C)$. Every function  $g\in\Fe^{x_0}$ then satisfies $\sign(g(x_i)) = \sign(g(x_C)) = \sign(f(x_C))$ and hence $\sum_{i=1}^n\sigma_ig(x_i)<0$. However, the constant function $h\equiv f(x_0)$  is contained in $\Fp^{x_0}$ and satisfies $\frac{1}{n}\sum_{i=1}^n\sigma_ih(x_i)=\frac{1}{n}\sum_{i=1}^n \sign(f(x_0))\cdot f(x_0)=|f(x_0)|>0.$ Hence,
        \ba
        R_n\left(\Fe^{x_0}|A\right)&=\sup_{g\in\Fe^{x_0}} \sum_{i=1}^n \sigma_ig(x_i)\le 0
        <|f(x_0)|\\
        &\le\sup_{g\in\Fp^{x_0}} \sum_{i=1}^n \sigma_ig(x_i)=R_n\left(\Fp^{x_0}|A\right)
        \ea
        and it follows 
        $R_n(\Fe^{x_0})<R_n(\Fp^{x_0})$
        by Lemma \ref{lemma-rademacher}.
    \end{enumerate}
    
\end{proof}

\subsection{Proof of Proposition \ref{prop-strong-counterfactuals-always-informative} (Strong counterfactuals)}\label{proof-prop-strong-counterfactuals-always-informative}

\begin{proof}
    Let us assume that at a point $x_0\in\mathbb{R}^d$ we receive a prediction $f(x_0)$ and a strong counterfactual explanation $x_C$ and the regarding vector $v=x_C-x_0\in\mathbb{R}^d$. Because $v$ is a minimal step (as it is a solution of the optimization problem \eqref{eq-counterfactual-strong}), it holds that $\forall x\in B_{\|v\|}(x_0):c_f(x)=c_f(x_0)$. Let $n\geq1$ and let $A$ be the event in which all samples fall into $B_{\|v\|}(x_0)$ and $\forall i\in[n]:\sigma_i=-c_f(x_0)$. This event has a positive probability. By Lemma \ref{lemma-rademacher}, it is enough to show that $R_n(\Fe^{x_0}|A)<R_n(\Fp^{x_0}|A)$. It is easy to see that this is the case, because $R_n(\Fe^{x_0}|A)<0$ always holds and $R_n(\Fp^{x_0}|A)\geq0$ can always be achieved, since functions in $\Fcal$ can interpolate any finite sample of points.
\end{proof}

\end{document}